\newcommand\labelAndRemember[2]
\gdef\csname labeled:#1\endcsname{#2}%
\newcommand\recallLabel[1]
\endcsname\tag{\ref{#1}}}
\newcommand\labelr[2]
\gdef\csname labeled:#1\endcsname{#2}%
\newcommand\recall[1]
\newcolumntype{H}{>{\setbox0=\hbox\bgroup}c<{\egroup}@{}}
\newcolumntype{Z}{>{\setbox0=\hbox\bgroup}c<{\egroup}@{\hspace*{-\tabcolsep}}}
\newtheorem{theorem}{Theorem}
\newtheorem*{theorem*}{Theorem}
\newtheorem{lemma}[theorem]{Lemma}
\newtheorem*{remark*}{Remark}
\newtheorem*{lemma*}{Lemma}
\newtheorem{proposition}[theorem]{Proposition}
\renewcommand{\theassumption}{\Alph{assumption}}
\newenvironment{proof-sketch}{\noindent{\bf Proof Sketch}
  \hspace*{1em}}{\qed\bigskip\\}
\newenvironment{proof-idea}{\noindent{\bf Proof Idea}
  \hspace*{1em}}{\qed\bigskip\\}
\newenvironment{proof-of}[1][{}]{\noindent{\bf Proof of \cref{#1}}
  \hspace*{1em}}{\qed\bigskip\\}
\newenvironment{proof-of-lemma}[1][{}]{\noindent{\bf Proof of Lemma {#1}}
  \hspace*{1em}}{\qed\bigskip\\}
\newenvironment{proof-of-proposition}[1][{}]{\noindent{\bf
    Proof of Proposition {#1}}
  \hspace*{1em}}{\qed\bigskip\\}
\newenvironment{proof-of-theorem}[1][{}]{\noindent{\bf Proof of Theorem {#1}}
  \hspace*{1em}}{\qed\bigskip\\}
\newenvironment{inner-proof}{\noindent{\bf Proof}\hspace{1em}}{
  $\bigtriangledown$\medskip\\}
\newenvironment{proof-attempt}{\noindent{\bf Proof Attempt}
  \hspace*{1em}}{\qed\bigskip\\}
\renewcommand{\hat}{\widehat}
\renewcommand{\bar}{\overline}
\renewcommand{\epsilon}{\varepsilon}
\newcommand{\eps}{\varepsilon}
\newcounter{cnt}
\xdef \csname c\Alph{cnt}\endcsname {\noexpand\mathcal{\Alph{cnt}}}%
\xdef \csname b\Alph{cnt}\endcsname {\noexpand\mathbb{\Alph{cnt}}}%
\DeclareMathOperator*{\argmin}{arg\,min}
\newcommand{\abs}[1]{\left|#1\right|}
\newcommand{\abss}[1]{|#1|}
\newcommand{\ceil}[1]{\left\lceil #1\right\rceil}
\DeclarePairedDelimiterX{\ddiv}[2]{(}{)}{%
  #1\;\delimsize\|\;#2%
}
\newcommand{\norm}[1]{\left\|{#1}\right\|} %
\newcommand{\ltwo}[1]{\norm{#1}_2} %
\newcommand{\linf}[1]{\norm{#1}_\infty} %
\newcommand{\lfro}[1]{\left\|{#1}\right\|_{\sf Fr}} %
\newcommand{\norms}[1]{\|{#1}\|} %
\newcommand{\ltwos}[1]{\norms{#1}_2} %
\newcommand{\linfs}[1]{\norms{#1}_\infty} %
\renewcommand{\cO}{\mathcal{O}}
\newcommand{\tO}{\widetilde{\cO}}
\newcommand{\wt}{\widetilde}
\newcommand{\indic}[1]{1\{#1\}}
\newcommand{\<}{\left\langle}
\renewcommand{\>}{\right\rangle}
\renewcommand{\bQ}{\mathbf{Q}}
\newenvironment{talign}
 {\align}
 {\endalign}
\newenvironment{talign*}
 {\csname align*\endcsname}
 {\endalign}
\newcommand{\Attn}{{\rm Attn}}
\newcommand{\TF}{{\rm TF}}
\newcommand{\barsig}{\overline{\sigma}}
\newcommand{\Bx}{B_x}
\newcommand{\By}{B_y}
\newcommand{\Bw}{B_w}
\newcommand{\bzero}{{\mathbf 0}}
\newcommand{\hbw}{\hat{\bw}}
\newcommand{\Pin}{\mathsf{P}}
\newcommand{\ltwoinfs}[1]{\|{#1}\|_{2,\infty}}
\newcommand{\gd}{{\rm GD}}
\newcommand{\normal}{\mathsf{N}}
\newcommand{\MLP}{\mathrm{MLP}}
\newcommand{\lammax}{\lambda_{\max}}
\newcommand{\tbh}{\wt{\bh}}
\newcommand{\tbH}{\wt{\bH}}
\newcommand{\MSK}{{\rm MSK}}
\mathchardef\mhyphen="2D
\newcommand{\hy}{\hat{y}}
\newcommand{\softmax}{{\sf softmax}}
\newcommand{\tPhi}{\wt{\Phi}}
\newcommand{\tbp}{\wt{\bp}}
\newcommand{\Dhid}{D_{\rm hid}}
\newcommand{\hbx}{\hat{\bx}}
\newcommand{\hby}{\hat{\by}}
\newcommand{\Li}{\hat{L}_i}
\newcommand{\Lim}{\hat{L}_{i-1}}
\newcommand{\Lilam}{\hat{L}_i^{\lambda}}
\newcommand{\Limlam}{\hat{L}_{i-1}^{\lambda}}
\newcommand{\hbwilam}{\hat{\bw}_i^\lambda}
\newcommand{\hbWilam}{\hat{\bW}_i^\lambda}
\newcommand{\hbwiphilam}{\hat{\bw}_i^{\Phi^\star,\lambda}}
\newcommand{\hbWiphilam}{\hat{\bW}_i^{\Phi^\star,\lambda}}
\newcommand{\hyiphilam}{\hy_i^{\Phi^\star,\lambda}}
\newcommand{\barbh}{\bar{\bh}}
\newcommand{\barbH}{\bar{\bH}}
\newcommand{\Bphi}{B_\Phi}
\newcommand{\barbp}{\bar{\bp}}
\newcommand{\barbx}{\bar{\bx}}
\newcommand{\barbw}{\bar{\bw}}
\newcommand{\barPhi}{\bar{\Phi}}
\newcommand{\barTF}{\bar{\TF}}
\newcommand{\paren}[1]{{\left( #1 \right)}}
\newcommand{\brac}[1]{{\left[ #1 \right]}}
\newcommand{\set}[1]{{\left\{ #1 \right\}}}
\newcommand{\sets}[1]{{\{ #1 \}}}
\newcommand{\defeq}{\mathrel{\mathop:}=}
\newcommand{\eqdef}{=\mathrel{\mathop:}}
\newcommand{\mat}[1]{\ensuremath{\mathbf{#1}}}
\newcommand{\grad}{\nabla}
\newcommand{\simiid}{\stackrel{\rm iid}{\sim}}
\newcommand{\E}{\mathbb{E}}
\newcommand{\R}{\mathbb{R}}
\newcommand{\B}{\mat{B}}
\newcommand{\probetarget}{g}
\def\bA{{\mathbf A}}
\def\bB{{\mathbf B}}
\def\bH{{\mathbf H}}
\def\bI{{\mathbf I}}
\def\bK{{\mathbf K}}
\def\bQ{{\mathbf Q}}
\def\bV{{\mathbf V}}
\def\bW{{\mathbf W}}
\def\bY{{\mathbf Y}}
\def\bX{{\mathbf X}}
\def\btheta{{\boldsymbol \theta}}
\def\ba{{\mathbf a}}
\def\bh{{\mathbf h}}
\def\bp{{\mathbf p}}
\def\bu{{\mathbf u}}
\def\bw{{\mathbf w}}
\def\bx{{\mathbf x}}
\def\by{{\mathbf y}}
\def\bz{{\mathbf z}}
\title{How Do Transformers Learn In-Context Beyond Simple Functions? A Case Study on Learning with Representations}
\author{
Tianyu Guo\footnotemark[1]\thanks{UC Berkeley. Email: \texttt{\{tianyu\_guo,songmei\}@berkeley.edu}}
\and
Wei Hu\footnotemark[2]\thanks{University of Michigan. Email: \texttt{vvh@umich.edu}}
\and
Song Mei\footnotemark[1]
\and
Huan Wang\footnotemark[3]\thanks{Salesforce AI Research. Email: \texttt{\{huan.wang, cxiong, ssavarese, yu.bai\}@salesforce.com}}
\and
Caiming Xiong\footnotemark[3]
\and
Silvio Savarese\footnotemark[3]
\and
Yu Bai\footnotemark[3]
\and
}
\def\shownotes{0}  %
\newcommand{\authnote}[2]{{\scriptsize $\ll$\textsf{#1 notes: #2}$\gg$}}
\newcommand{\authnote}[2]{}
\begin{document}

\maketitle

\begin{abstract}

While large language models based on the transformer architecture have demonstrated remarkable in-context learning (ICL) capabilities, understandings of such capabilities are still in an early stage, where existing theory and mechanistic understanding focus mostly on simple scenarios such as learning simple function classes. This paper takes initial steps on understanding ICL in more complex scenarios, by studying learning with \emph{representations}. Concretely, we construct synthetic in-context learning problems with a compositional structure, where the label depends on the input through a possibly complex but \emph{fixed} representation function, composed with a linear function that \emph{differs} in each instance. By construction, the optimal ICL algorithm first transforms the inputs by the representation function, and then performs linear ICL on top of the transformed dataset. We show theoretically the existence of transformers that approximately implement such algorithms with mild depth and size.  Empirically, we find trained transformers consistently achieve near-optimal ICL performance in this setting, and exhibit the desired dissection where lower layers transforms the dataset and upper layers perform linear ICL. Through extensive probing and a new pasting experiment, we further reveal several mechanisms within the trained transformers, such as concrete copying behaviors on both the inputs and the representations, linear ICL capability of the upper layers alone, and a post-ICL representation selection mechanism in a harder mixture setting. These observed mechanisms align well with our theory and may shed light on how transformers perform ICL in more realistic scenarios.

\end{abstract}

\section{Introduction}

Large language models based on the transformer architecture have demonstrated remarkable in-context learning (ICL) capabilities~\citep{brown2020language}, where they can solve newly encountered tasks when prompted with only a few training examples, without any parameter update to the model. Recent state-of-the-art models further achieve impressive performance in context on sophisticated real-world tasks~\citep{openai2023gpt,bubeck2023sparks,touvron2023llama}. Such remarkable capabilities call for better understandings, which recent work tackles from various angles~\citep{xie2021explanation,chan2022data,razeghi2022impact,min2022rethinking,olsson2022context,wei2023larger}.

A recent surge of work investigates ICL in a theoretically amenable setting where the context consists of real-valued (input, label) pairs generated from a certain function class. They find that transformers can learn many function classes in context, such as linear functions, shallow neural networks, and decision trees~\citep{garg2022can,akyurek2022learning,li2023transformers}, and further studies provide theoretical justification on how transformers can implement and learn various learning algorithms in-context such as ridge regression~\citep{akyurek2022learning}, gradient descent~\citep{von2022transformers,dai2022can,zhang2023trained,ahn2023transformers}, algorithm selection~\citep{bai2023transformers}, and Bayes model averaging~\citep{zhang2023and}, to name a few. Despite the progress, an insufficiency of this line is that the settings and results may not actually resemble ICL in real-world scenarios---For example, ICL in linear function classes are well understood in theory with efficient transformer constructions~\citep{bai2023transformers}, and transformers indeed learn them well empirically~\citep{garg2022can}; however, such linear functions in the raw input may fail to capture real-world scenarios where \emph{prior knowledge} can often aid learning.

This paper takes initial steps towards addressing this by studying ICL in the setting of \emph{learning with representations}, a more complex and perhaps more realistic setting than existing ones. We construct synthetic ICL tasks where labels depend on inputs through a fixed representation function composed with a varying linear function. We instantiate the representation as shallow neural networks (MLPs), and consider both a supervised learning setting (with input-label pairs) and a dynamical systems setting (with inputs only) for the in-context data.  Our contributions can be summarized as follows.
\begin{itemize}[leftmargin=1.5em, topsep=0pt, itemsep=0pt]
\item Theoretically, we construct transformers that implement in-context ridge regression on the representations (which includes the Bayes-optimal algorithm) for both learning settings (\cref{sec:theory}). Our transformer constructions admit mild sizes, and can predict at every token using a decoder architecture, (non-trivially) generalizing existing efficient constructions that predict at the last token only using an encoder architecture. 
\item Empirically, we find that trained small transformers consistently achieve near-optimal ICL risk in both learning settings (\cref{sec:exp} \&~\cref{fig:fig1-risk}).
\item Using linear probing techniques, we identify evidence for various mechanisms in the trained transformers. Our high-level finding is that the lower layers transforms the data by the representation and prepares it into a certain format, and the upper layers perform linear ICL on top of the transformed data (\cref{fig:fig1-probe}), with often a clear dissection between these two modules, consistent with our theory. See~\cref{fig:fig1-illustration} for a pictorial illustration.
\item We further observe several lower-level behaviors using linear probes that align well with our (and existing) theoretical constructions, such as copying (of both the input and the representations) where which tokens are being copied are precisely identifiable (\cref{sec:exp-dynamical-system}), and a post-ICL representation selection mechanism in a harder setting (\cref{sec:mtl-short} \&~\cref{app:mtl}).
\item We perform a new pasting experiment and find that the upper layers within the trained transformer can perform nearly-optimal linear ICL in (nearly-)isolation (\cref{sec:exp-fixed}), which provides stronger evidence that the upper module alone can be a strong linear ICL learner.
\end{itemize}

\begin{figure}[t]
  \centering
  \begin{minipage}{0.37\textwidth}
      \centering
      \subcaption{\small Illustration of our setting and theory}
      \label{fig:fig1-illustration}
      \vspace{.5em}
      \includegraphics[width=\linewidth]{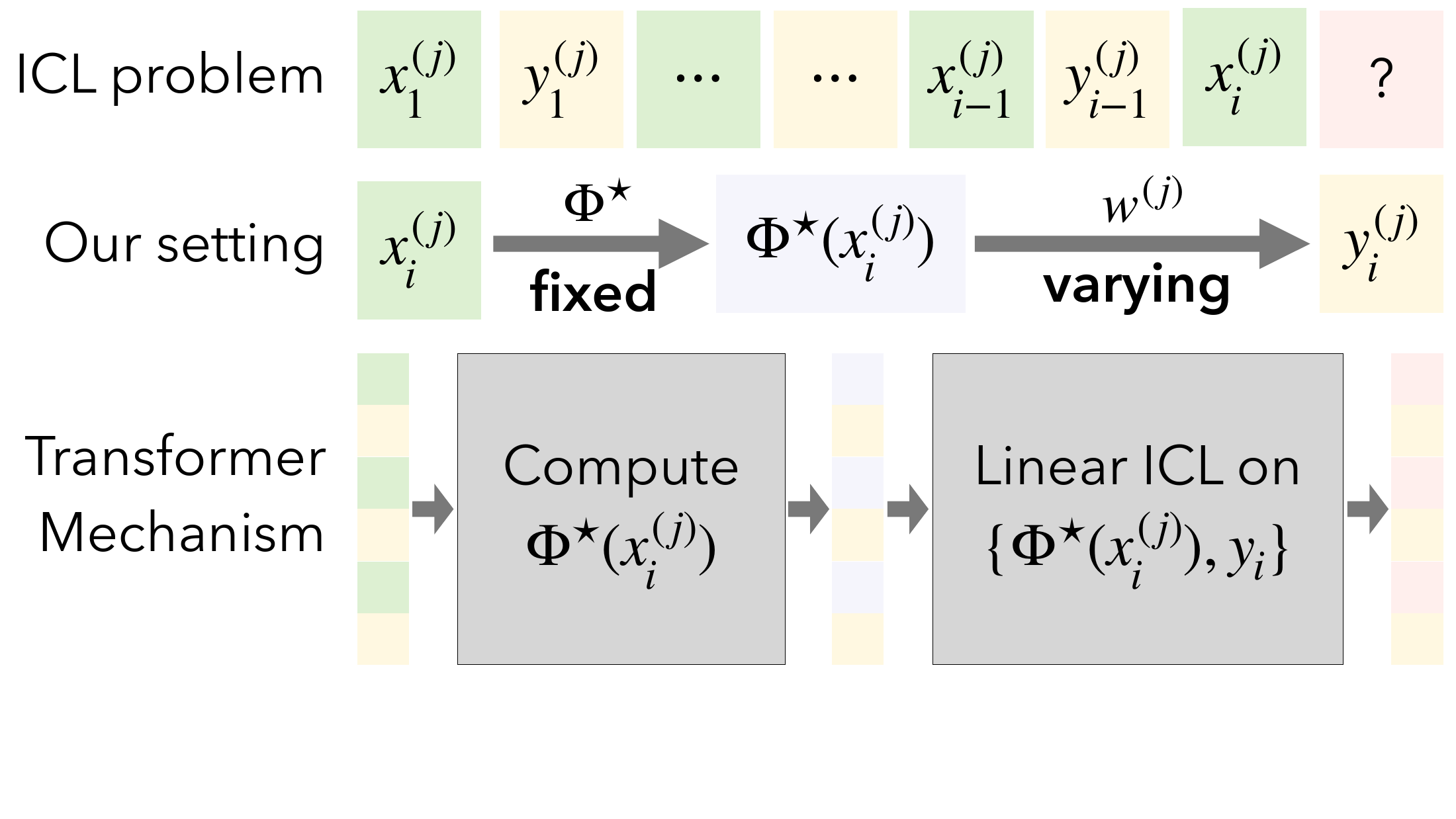}
  \end{minipage}
  \begin{minipage}{0.3\textwidth}
      \centering
      \subcaption{\small ICL risks}
      \label{fig:fig1-risk}
      \vspace{-.2em}
      \includegraphics[width=\linewidth]{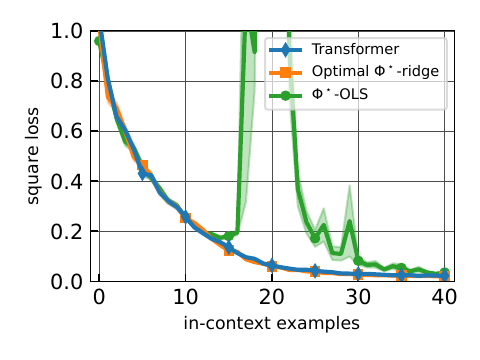}
  \end{minipage}
  \hspace{-1em}
  \begin{minipage}{0.3\textwidth}
      \centering
      \subcaption{\small Linear probes}
      \label{fig:fig1-probe}
      \vspace{-.1em}
      \includegraphics[width=\linewidth]{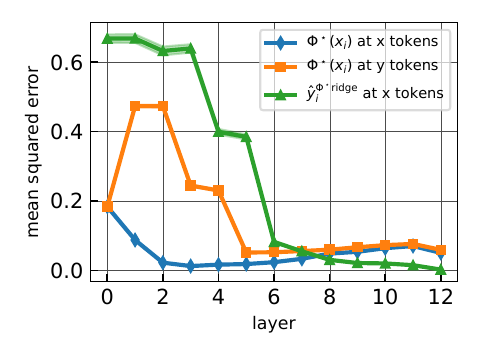}
  \end{minipage}
  \vspace{-1em}
  \caption{\small An illustration of our setting and results. {\bf (a)} We consider ICL problems with a fixed representation composed with changing linear functions, and we construct transformers that first compute the representations and then performs linear ICL. {\bf (b,c)} Empirically, learned transformers can perform near-optimal ICL in this setting, and exhibit mechanisms that align with our theory (detailed setups in~\cref{sec:exp-fixed}).
  }
  \label{figure:fig1}
  \vspace{-1em}
\end{figure}

\section{Related work}
\label{sec:related}

\paragraph{In-context learning} The in-context learning (ICL) capabilities of pretrained transformers have gained significant attention since first demonstrated with GPT-3 \citep{brown2020language}. Subsequent empirical studies have investigated the capabilities and limitations of ICL in large language models  \citep{liu2021makes, min2021noisy,  min2021metaicl, lu2021fantastically, zhao2021calibrate, rubin2021learning, razeghi2022impact, elhage2021mathematical, kirsch2022general,wei2023larger}. 

A line of recent work investigates why and how pretrained transformers perform ICL from a theoretical perspective \citep{garg2022can, li2023transformers, von2022transformers, akyurek2022learning, xie2021explanation, bai2023transformers, 
zhang2023trained, zhang2023and, ahn2023transformers, raventos2023pretraining}. In particular, \cite{xie2021explanation} proposed a Bayesian inference framework explaining ICL. \cite{garg2022can} showed transformers could be trained from scratch for ICL of simple function classes. Other studies found transformers can implement ICL through in-context gradient descent \citep{von2022transformers, akyurek2022learning} and in-context algorithm selection \citep{bai2023transformers}. \cite{zhang2023trained} studied the training dynamics of a single attention layer on linear ICL tasks. \cite{li2023dissecting} used the ICL framework to explain chain-of-thought reasoning \citep{wei2022chain}. Our work builds on and extends the work of~\citep{garg2022can,akyurek2022learning,von2022transformers,bai2023transformers}, where we study the more challenging setting of ICL with a representation function, and also provide new efficient ICL constructions for predicting at every token using a decoder transformer, as opposed to predicting only at the last token in most of these work.

\paragraph{In-weights learning versus in-context learning} 
Recent work has investigated when transformers learn a fixed input-label mapping versus when they perform ICL \citep{chan2022data, wei2023larger, bietti2023birth}.
\cite{chan2022data} refer to learning a fixed input-label mapping from the pre-training data as ``in-weights learning'' (IWL), in contrast with ICL. Our problem setting assumes the pre-training data admits a fixed representation function, which should be learned by IWL. In this perspective, unlike these existing works where IWL and ICL are typically treated as competing mechanisms, we study a model in which IWL (computing the fixed representation by transformer weights) and ICL (learning the changing linear function in context) occur simultaneously.

\paragraph{Mechanistic understanding and probing techniques} A line of work focuses on developing techniques for understanding the mechanisms of neural networks, in particular transformers~\citep{alain2016understanding, geiger2021causal, meng2022locating, von2022transformers, akyurek2022learning, wang2022interpretability, rauker2023toward}. We adopted the linear probing technique of~\citep{alain2016understanding} in a token-wise fashion for interpreting the ICL mechanisms of transformers. Beyond probing, more convincing mechanistic interpretations may require advanced approaches such as causal intervention~\citep{geiger2021causal, vig2020investigating, wang2022interpretability}; Our pasting experiment has a similar interventional flavor in that we feed input sequences (ICL instances) from another distribution directly (through a trainable embedding layer) to the upper module of a transformer.

\section{Preliminaries}
\label{sec:prelim}

\paragraph{Transformers}
We consider sequence-to-sequence functions applied to $N$ input vectors $\set{\bh_i}_{i=1}^N\subset \R^{\Dhid}$ in $\Dhid$ dimensions, which we write compactly as an input matrix $\bH=[\bh_1,\dots,\bh_N]\in \R^{\Dhid\times N}$, where each $\bh_i$ is a column of $\bH$ (also a \emph{token}). 

We use a standard $L$-layer decoder-only (autoregressive) transformer, which consists of $L$ consecutive blocks each with a masked self-attention layer (henceforth ``attention layer'') followed by an MLP layer. Each attention layer computes
\begin{talign*}
    \Attn_\btheta(\bH)\defeq \bH + \sum_{m=1}^M (\bV_m \bH) \times \barsig\paren{ \MSK \odot ((\bQ_m\bH)^\top (\bK_m\bH)) } \in \R^{D\times N},
\end{talign*}
where $\btheta=\sets{(\bQ_m, \bK_m, \bV_m)\subset\R^{\Dhid\times \Dhid}}_{m\in[M]}$ are the (query, key, value) matrices, $M$ is the number of heads, $\MSK\in\R^{N\times N}$ is the decoder mask matrix with $\MSK_{ij}=\indic{i\le j}$, and $\barsig$ is the activation function which is typically chosen as the (column-wise) softmax: $[\barsig(\bA)]_{:, j}=\softmax(\ba_j)\in\R^N$ for $\bA=[\ba_1, \dots, \ba_N]\in\R^{N\times N}$. Each MLP layer computes
\begin{talign*}
    \MLP_{\bW_1,\bW_2}(\bH) \defeq \bH + \bW_2\sigma(\bW_1\bH),
\end{talign*}
where $\bW_{\sets{1,2}}\in\R^{\Dhid\times \Dhid}$ are the weight matrices, and $\sigma(t)=\max\sets{t, 0}$ is the ReLU activation. We use $\TF$ to denote a transformer, and typically use $\tbH=\TF(\bH)$ to denote its output on $\bH$.

\paragraph{In-context learning}
We consider in-context learning (ICL) on regression problems, where each ICL instance is specified by a dataset $\cD=\sets{(\bx_i, y_i)}_{i\in[N]}\simiid \Pin$, with $(\bx_i, y_i)\in\R^d\times\R$, and the model is required to accurately predict $y_i$ given all past observations $\cD_{i-1}\defeq \sets{(\bx_j, y_j)}_{j\le i-1}$ and the test input $\bx_i$. The main difficulty of ICL compared with standard supervised learning is that each instance $\cD^{(j)}$ is in general drawn from a different data distribution $\Pin=\Pin^{(j)}$ (for example, a linear model with a new $\bw_\star^{(j)}\in\R^d$). Accurate prediction requires learning $\Pin$ in-context from the past observations $\cD_{i-1}$ (i.e. the context); merely memorizing any fixed $\Pin^{(j)}$ is not enough.

We consider using transformers to do ICL, where we feed a sequence of length $2N$ into the transformer $\TF$ using the following input format:
\begin{align}
\label{eqn:input-format}
\bH = [\bh_1, \dots, \bh_{2N}] = \begin{bmatrix}
\bx_1 & \bzero & \dots & \bx_N & \bzero \\
0 & y_1 & \dots & 0 & y_{N} \\
\bp^x_1 & \bp^y_1 & \dots & \bp^x_{N} & \bp^y_{N}
\end{bmatrix}
\in \R^{\Dhid\times 2N},
\end{align}
where $\bp_i^x,\bp_i^y\in\R^{\Dhid-d-1}$ are fixed positional encoding vectors consisting of \emph{zero paddings}, followed by non-zero entries containing information about the position index $i$ and indicator of being an $x$-token (1 in $\bp^x_i$, and $0$ in $\bp^y_i$); see~\cref{eqn:positional-encoding} for our concrete choice. We refer to each odd token $\bh_{2i-1}$ as as an $x$-token (also the $\bx_i$-token), and each even token $\bh_{2i}$ as a $y$-token (also the $y_i$-token). 

After obtaining the transformer output $\tbH = \TF(\bH)$, for every index $i\in[N]$, we extract the prediction $\hat{y}_i$ from the output token at position $\bx_i$:
$\hat{y}_i \defeq (\tbh^x_i)_{d+1}$.\footnote{There is no information leakage, as the ``prefix'' property of decoder transformers $\tbh^x_i=\tbh_{2i-1}=[\TF(\bH_{:, 1:(2i-1)})]_{2i-1}$ ensures that $\tbh^x_i$ (and thus $\hy_i$) only depends on $(\cD_{i-1}, \bx_i)$.} Feeding input~\cref{eqn:input-format} into the transformer simultaneously computes $\hy_i\leftarrow \TF(\bx_1,y_1,\dots,\bx_{i-1},y_{i-1},\bx_i)$ for all $i\in[N]$.

{\bf In addition} to the above setting, we also consider a \emph{dynamical system} setting with $\cD=\sets{\bx_i}_{i\in[N]}$ where the transformer predicts $\hbx_i$ from the preceding inputs $\bx_{\le i-1}$. See~\cref{sec:dynamical-system} for details.

\section{In-context learning with representations}
\label{sec:theory}

\subsection{Supervised learning with representation}
\label{sec:fixed-rep}

We begin by considering ICL on regression problems with representation, where labels depend on the input through linear functions of a fixed representation function. Formally, let $\Phi^\star:\R^d\to\R^D$ be a fixed representation function. We generate each in-context data distribution $\Pin=\Pin_\bw$ by sampling a linear function $\bw\sim\normal(\bzero, \tau^2\bI_D)$ from a Gaussian prior, and then generate the ICL instance $\cD=\sets{(\bx_i, y_i)}_{i\in[N]}\sim\Pin_\bw$ by a linear model on $\Phi^\star$ with coefficient $\bw$ and noise level $\sigma>0$:
\begin{align}
\label{eqn:model}
    y_i = \<\bw, \Phi^\star(\bx_i)\> + \sigma z_i,~~~\bx_i\simiid \Pin_x,~~~z_i\simiid \normal(0, 1),~~~i\in[N].
\end{align}
Note that all $\cD$'s share the same representation $\Phi^\star$, but each admits a unique linear function $\bw$.

The representation function $\Phi^\star$ can in principle be chosen arbitrarily. As a canonical and flexible choice for both our theory and experiments, we choose $\Phi^\star$ to be a standard $L$-layer MLP:
\begin{align}
\label{eqn:mlp}
    \Phi^\star(\bx) = \sigma^\star\paren{\bB_L^\star\sigma^\star\paren{\bB_{L-1}^\star\cdots\sigma^\star\paren{\bB_1^\star\bx}\cdots}}, \quad
    \bB_1^\star\in\R^{D\times d},~(\bB_\ell^\star)_{\ell=2}^L\subset\R^{D\times D}
\end{align}
where $D$ is the hidden and output dimension, and $\sigma^\star$ is the activation function (applied entry-wise) which we choose to be the leaky ReLU $\sigma^\star(t)=\sigma_\rho(t)\defeq \max\sets{t, \rho t}$ with slope $\rho\in(0,1)$.

\paragraph{Theory}
As $\Phi^\star$ is fixed and the $\bw$ is changing in model~\cref{eqn:model}, by construction, a good ICL algorithm should \emph{compute} the representations $\sets{\Phi^\star(\bx_i)}_i$ and perform linear ICL on the transformed dataset $\sets{(\Phi^\star(\bx_i), y_i)}_i$ to learn $\bw$. We consider the following class of \emph{$\Phi^\star$-ridge} estimators:
\begin{talign}
\label{eqn:phi-ridge}
\tag{$\Phi^\star$-Ridge}
    \hbwiphilam \defeq \argmin_{\bw\in\R^d} \frac{1}{2(i-1)}\sum_{j=1}^{i-1} \paren{ \<\bw, \Phi^\star(\bx_j)\> - y_j }^2 + \frac{\lambda}{2}\ltwo{\bw}^2,
\end{talign}
and we understand $\hat{\bw}_1^{\Phi^\star,\lambda}\defeq \bzero$. In words, $\hbwiphilam$ performs ridge regression on the transformed dataset $\sets{\Phi(\bx_j), y_j}_{j\le i-1}$ for all $i\in[N]$. By standard calculations, the Bayes-optimal predictor\footnote{The predictor $\hy_i=\hy_i(\cD_{i-1}, \bx_i)$ that minimizes the posterior square loss $\E[\frac{1}{2}(\hy_i - y_i)^2|\cD_{i-1},\bx_i]$.} for $y_i$ given $(\cD_{i-1}, \bx_i)$ is exactly the ridge predictor $\hyiphilam\defeq \langle \hbwiphilam, \Phi^\star(\bx_i)\rangle$ at $\lambda=\sigma^2/\tau^2$.

We show that there exists a transformer that can approximately implement~\cref{eqn:phi-ridge} in-context at every token $i\in[N]$. The proof can be found in~\cref{app:proof-fixed-rep}.
\begin{theorem}[Transformer can implement $\Phi^\star$-Ridge]
\label{thm:fixed-rep}
For any representation function $\Phi^\star$ of form~\cref{eqn:mlp}, any $\lambda>0$, $\Bphi,\Bw,\By>0$, $\eps<\Bphi\Bw/2$, letting $\kappa\defeq 1+\Bphi^2/\lambda$, there exists a transformer $\TF$ with $L+\cO\paren{\kappa\log(\Bphi\Bw/\eps)}$ layers, $5$ heads, $\Dhid=2D+d+10$ such that the following holds.

For any dataset $\cD$ such that $\ltwos{\Phi^\star(\bx_i)}\le \Bphi$, $|y_i|\le \By$ and the corresponding input $\bH\in\R^{\Dhid\times 2N}$ of format~\cref{eqn:input-format}, we have 
\begin{enumerate}[label=(\alph*),leftmargin=2em]
\item The first $(L+2)$ layers of $\TF$ transforms $\bx_i$ to the representation $\Phi^\star(\bx_i)$ at each $x$ token, and copies them into the succeeding $y$ token:
\begin{align}
\label{eqn:transformed-input}
    \TF^{(1:L+2)}(\bH) = \begin{bmatrix}
    \Phi^\star(\bx_1) & \Phi^\star(\bx_1) & \dots & \Phi^\star(\bx_N) & \Phi^\star(\bx_N) \\
    0 & y_1 & \dots & 0 & y_N \\
    \tbp^x_1 & \tbp^y_1 & \dots & \tbp^x_{N} & \tbp^y_{N}
    \end{bmatrix},
\end{align}
where $\tbp^x_i,\tbp^y_i$ only differ from $\bp^x_i,\bp^y_i$ in the dimension of the zero paddings.
\item For every index $i\in[N]$, the transformer output $\tbH=\TF(\bH)$ contains prediction $\hat{y}_i\defeq [\tbh_{2i-1}]_{D+1}$ that is close to the~\cref{eqn:phi-ridge} predictor: $\abss{\hat{y}_{i} - \langle \Phi^\star(\bx_i), \hbwiphilam \rangle} \le \eps$.
\end{enumerate}
\end{theorem}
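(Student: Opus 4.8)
The plan is to factor $\TF$ into two consecutive modules matching the two parts of the statement: a \emph{representation-and-formatting module} of $L+2$ layers that realizes \cref{eqn:transformed-input}, followed by an \emph{in-context ridge module} of $\cO(\kappa\log(\Bphi\Bw/\eps))$ layers that runs gradient descent (GD) on \cref{eqn:phi-ridge} over the transformed tokens. Part (a) is the correctness of the first module; part (b) then follows by composing the (essentially exact) first module with the linear convergence rate and the per-step approximation error of the second, chosen so that the total error is at most $\eps$.

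\textbf{First module (part (a)).} The key observation is that the leaky ReLU decomposes as $\max\{t,\rho t\}=\rho\,t+(1-\rho)\max\{t,0\}=\rho\,t+(1-\rho)\sigma(t)$, and any linear map can be routed through ReLU via $t=\sigma(t)-\sigma(-t)$. Consequently a single transformer MLP layer $\bH\mapsto \bH+\bW_2\sigma(\bW_1\bH)$ can implement one layer $\bh\mapsto \sigma^\star(\bB_\ell^\star\bh)$ of $\Phi^\star$ \emph{exactly}: pick $\bW_1$ to stack $\bB_\ell^\star$, $-\bB_\ell^\star$ and a block that reads off the current ``working'' coordinates, and pick $\bW_2$ so that $\bW_2\sigma(\bW_1\bH)$ simultaneously (i) writes $\rho\bB_\ell^\star\bh+(1-\rho)\sigma(\bB_\ell^\star\bh)$ into a fresh width-$D$ block and (ii) subtracts the previous block's value (again via $t=\sigma(t)-\sigma(-t)$), exploiting the residual connection; the budget $\Dhid=2D+d+10$ leaves room for $\bx_i$, the working block, and the running representation. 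After these $L$ layers every $x$-token carries $\Phi^\star(\bx_i)$; one attention layer, with scores built from the positional encodings so that token $2i$ selects exactly its preceding $x$-token (with a hard/ReLU-type selection this copy is exact, which is what makes the displayed equality in \cref{eqn:transformed-input} possible), copies $\Phi^\star(\bx_i)$ into the $y_i$-token, and a final layer reshuffles the padding coordinates into the layout of \cref{eqn:transformed-input} — this accounts for the ``$+2$''. The point of copying into the $y$-token is that afterwards a single token, namely $\bh_{2j}$, carries the full pair $(\Phi^\star(\bx_j),y_j)$.

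\textbf{Second module (part (b)).} On the transformed tokens the objective in \cref{eqn:phi-ridge} is $\lambda$-strongly convex and, since $\ltwos{\Phi^\star(\bx_j)}\le\Bphi$, at most $(\lambda+\Bphi^2)$-smooth, so GD with a constant step size converges linearly, and $T=\cO(\kappa\log(\Bphi\Bw/\eps))$ steps suffice to reach $\bw^{(T)}$ with $\abss{\langle\Phi^\star(\bx_i),\bw^{(T)}\rangle-\langle\Phi^\star(\bx_i),\hbwiphilam\rangle}\le\eps$; along the way the iterates stay in a ball of radius $\cO(\Bw)$ (using $\eps<\Bphi\Bw/2$ and contraction), keeping all intermediate quantities bounded. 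Each GD step
\[
\bw^{(t+1)}=\bw^{(t)}-\eta\Big(\tfrac{1}{i-1}\sum_{j<i}\big(\langle\bw^{(t)},\Phi^\star(\bx_j)\rangle-y_j\big)\Phi^\star(\bx_j)+\lambda\bw^{(t)}\Big)
\]
is realized by $\cO(1)$ transformer layers, following the in-context GD constructions of \citep{von2022transformers,akyurek2022learning,bai2023transformers} adapted to the decoder/every-token setting: the $x_i$-token carries $\bw^{(t)}$ in spare coordinates; an attention layer, using the decoder mask so that token $2i-1$ attends precisely to the $y$-tokens at positions $2,4,\dots,2(i-1)$ (which hold $(\Phi^\star(\bx_j),y_j)$), produces the normalized gradient sum — the attention normalization over the $i-1$ visible $y$-tokens supplies the $1/(i-1)$ factor, and the dependence on $\bw^{(t)}$ is obtained by placing $\bw^{(t)}$ on the query side and $\Phi^\star(\bx_j),y_j$ on the key/value side, with an MLP approximating the requisite scalar multiplications on the bounded range — and adding $\lambda\bw^{(t)}$ and scaling by $-\eta$ is linear, absorbed into the same block. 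For $i=1$ the mask leaves no $y$-token, the sum vanishes, the iterates remain $\bzero$, and this matches the convention $\hat{\bw}_1^{\Phi^\star,\lambda}=\bzero$. A final MLP layer then reads out $\hat y_i=\langle\Phi^\star(\bx_i),\bw^{(T)}\rangle$ into coordinate $D+1$ of the $x_i$-token, using that MLPs approximate the inner product on bounded inputs to any accuracy with logarithmic width.

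\textbf{Main obstacle.} The crux is the quantitative control of the second module in the decoder/every-token regime. Unlike prior efficient constructions, which output the estimate only at the last token of an encoder, here a single set of weights must run $T=\cO(\kappa\log(\cdot))$ GD steps \emph{simultaneously} at every $x_i$-token while (i) realizing the $1/(i-1)$ average and the bilinear gradient term through (softmax- or ReLU-) attention uniformly in $i$, with the $i=1$ degenerate case handled, and (ii) guaranteeing that the per-step approximation error times $T$, plus the copy error from the first module, still sums to at most $\eps$ — which requires uniform a priori bounds on $\ltwo{\bw^{(t)}}$ and on $\abss{\langle\bw^{(t)},\Phi^\star(\bx_j)\rangle}$ along the whole trajectory. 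Making these error terms compose cleanly is the main work; the rest is bookkeeping with the $\cO(1)$ extra coordinates.
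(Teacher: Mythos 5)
Your high-level decomposition matches the paper's: an $(L+2)$-layer representation-and-copy module followed by a GD-based ridge module, with $T=\cO(\kappa\log(\Bphi\Bw/\eps))$ GD steps realized in parallel at every $x$-token of a decoder transformer. Two technical points in your second module, however, are genuine gaps rather than bookkeeping.

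First, the claim that ``the attention normalization over the $i-1$ visible $y$-tokens supplies the $1/(i-1)$ factor'' does not hold as stated. With softmax, the normalization is over all $2i-1$ visible tokens, not just the $y$-tokens, and concentrating uniformly on a subset exactly requires a hard-max limit; with the paper's normalized-ReLU activation the denominator is $2i-1$, not $i-1$. The paper (Proposition~\ref{prop:one-step-gd}) resolves this by absorbing the mismatch into a token-dependent effective step size $\eta_i=\tfrac{i-1}{2i-1}\eta$, then choosing the base $\eta$ so that $\eta_i\in[2/(3\beta),1/\beta]$ uniformly in $i$; this is what lets the linear-convergence bound (\cref{prop:strongly-convex-gd}) be applied simultaneously across tokens. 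Without this device your $1/(i-1)$ does not appear, and the convergence argument does not go through as written.

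Second, you propose an MLP ``approximating the requisite scalar multiplications'' for the bilinear gradient term and an MLP approximating the inner product for the final read-out. The paper needs neither: each GD step and the read-out are computed \emph{exactly} inside a single attention layer by placing $\bw^{(t)}$ on the query side and $(\Phi^\star(\bx_j),y_j)$ on the key side, using the identity $\sigma(z+M)-\sigma(M)=z$ for $M\ge|z|$ to turn attention scores into $\langle\bw^{(t)},\Phi^\star(\bx_j)\rangle-y_j$ on exactly the desired tokens, with value $\Phi^\star(\bx_j)$ delivering the gradient summand. Consequently the only error in the paper is the finite-horizon GD gap $\ltwo{\bw^{(T)}-\hbwiphilam}$, which is controlled once by strong convexity. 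Your route introduces a per-layer approximation error that compounds multiplicatively through $T=\Theta(\kappa\log(1/\eps))$ layers (each step is a contraction only up to the MLP error); making this compose to a total $\le\eps$ requires a stability-of-GD-under-perturbation argument with widths depending on $T$, which your proposal flags as ``the main work'' but does not carry out. You should either supply that analysis or switch to the exact attention realization, which is the crux of the efficiency in the paper's construction. You should also note the paper's Theorem~\ref{thm:ridge} additionally assumes $\ltwo{\hbwilam}\le\Bw/2$ (the role of $\Bw$), which is what licenses your claim that iterates stay in a ball of radius $\cO(\Bw)$.
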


The transformer construction in~\cref{thm:fixed-rep} consists of two ``modules'': The lower layers computes the representations and prepares the transformed dataset $\sets{(\Phi^\star(\bx_i), y_i)}_i$ into form~\cref{eqn:transformed-input}. In particular, each $\Phi^\star(\bx_i)$ appears both in the $i$-th $x$-token and is also copied into the succeeding $y$ token. The upper layers perform linear ICL (ridge regression) on top of the transformed dataset. We will test whether such mechanisms align with trained transformers in reality in our experiments (\cref{sec:exp-fixed}).

\paragraph{Proof techniques}
The proof of~\cref{thm:fixed-rep} builds upon (1) implementing the MLP $\Phi^\star$ by transformers (\cref{lem:mlp}), and (2) an efficient construction of in-context ridge regression (\cref{thm:ridge}), which to our knowledge is the first efficient construction for predicting \emph{at every token} using decoder transformers. The latter requires several new construction techniques such as a copying layer (\cref{lem:copy}), and an efficient implementation of $N$ parallel in-context gradient descent algorithms at all tokens simultaneously using a decoder transformer (\cref{prop:one-step-gd}). These extend the related constructions of~\cite{von2022transformers,bai2023transformers} who only consider predicting at the last token using encoder transformer, and could be of independent interest.

In addition, the bounds on the number of layers, heads, and $\Dhid$ in~\cref{thm:fixed-rep} can imply a sample complexity guarantee for (pre-)training: A transformer with $\wt{\eps}$-excess risk (on the same ICL instance distribution) over the one constructed in~\cref{thm:fixed-rep} can be found in $\tO\paren{(L+\kappa)^2(D+d)^2\wt{\eps}^{-2}}$ training instances, by the generalization analysis of~\cite[Theorem 20]{bai2023transformers}. We remark that the constructions in~\cref{thm:fixed-rep} \&~\ref{thm:dynamical-system} choose $\barsig$ as the normalized ReLU instead of softmax, following~\citep{bai2023transformers} and in resonance with recent empirical studies~\citep{wortsman2023replacing}.

\subsection{Dynamical system with representation}
\label{sec:dynamical-system}

As a variant of model~\cref{eqn:model}, we additionally consider a (nonlinear) dynamical system setting with data $\cD=(\bx_1,\dots,\bx_N)$, where each $\bx_{i+1}$ depends on the $k$ preceding inputs $[\bx_{i-k+1}; \dots; \bx_i]$ for some $k\ge 1$ through a linear function on top of a fixed representation function $\Phi^\star$. Compared to the supervised learning setting in~\cref{sec:fixed-rep}, this setting better resembles some aspects of natural language, where the next token in general depends on several preceding tokens.

Formally, let $k\ge 1$ denote the number of input tokens that the next token depends on, and $\Phi^\star:\R^{kd}\to\R^D$ denotes a representation function. Each ICL instance $\cD=\sets{\bx_i}_{i\in[N]}$ is generated as follows: First sample $\Pin=\Pin_\bW$ where $\bW\in\R^{D\times d}$ is sampled from a Gaussian prior: $W_{ij}\simiid \normal(0, \tau^2)$. Then sample the initial input $\bx_1\sim\Pin_x$ and let
\begin{equation}
\label{eqn:dynamical-system}
    \bx_{i+1} = \bW^\top \Phi^\star([\bx_{i-k+1}; \dots; \bx_{i}]) + \sigma \bz_i,~~~\bz_i\simiid \normal(\bzero, \bI_d),~~~i\in[N-1],
\end{equation}
where we understand $\bx_{j}\defeq \bzero_d$ for $j\le 0$. We choose $\Phi^\star$ to be the same $L$-layer MLP as in~\cref{eqn:mlp}, except that the first weight matrix has size $\bB_1^\star\in\R^{D\times kd}$ to be consistent with the dimension of the augmented input $\barbx_i\defeq [\bx_{i-k+1}; \dots; \bx_i]$. We remark that~\cref{eqn:dynamical-system} substantially generalizes the setting of~\cite{li2023transformers} which only considers~\emph{linear} dynamical systems (equivalent to $\Phi^\star\equiv {\rm id}$), a task arguably much easier for transformers to learn in context.

As $\bx_i$ acts as both inputs and labels in model~\cref{eqn:dynamical-system}, we use the following input format for transformers:
\begin{align}
\label{eqn:input-format-dynamical-system}
    \bH \defeq \begin{bmatrix}
        \bx_1 & \dots & \bx_N \\
        \bp_1 & \dots & \bp_N
    \end{bmatrix} \in \R^{\Dhid\times N},
\end{align}
where $\bp_i\defeq [\bzero_{\Dhid-d-4}; 1; i; i^2; i^3]$, and we extract prediction $\hat{\bx}_{i+1}$ from the $i$-th output token.

\paragraph{Theory}
Similar as above, we consider the ridge predictor for the dynamical system setting
\begin{talign}
\label{eqn:phi-ridge-dynamical-system}
\tag{$\Phi^\star$-Ridge-Dyn}
    \hbWiphilam \defeq \argmin_{\bW\in\R^{D\times d}} \frac{1}{2(i-1)}\sum_{j=1}^{i-1} \ltwo{ \bW^\top \Phi^\star(\barbx_j) - \bx_{j+1} }^2 + \frac{\lambda}{2}\lfro{\bW}^2.
\end{talign}
We understand $\hat{\bW}_0^{\Phi^\star,\lambda}\defeq \bzero_{D\times d}$, and let $\ltwoinfs{\bW}\defeq \max_{j\in[d]}\ltwos{\bW_{:, j}}$ for any $\bW\in\R^{D\times d}$. Again,~\cref{eqn:phi-ridge-dynamical-system} gives the Bayes-optimal predictor $(\hbWiphilam)^\top\Phi^\star(\barbx_i)$ at $\lambda=\sigma^2/\tau^2$.

The following result shows that~\cref{eqn:phi-ridge-dynamical-system} can also be implemented efficiently by a transformer. The proof can be found in~\cref{app:proof-dynamical-system}.
\begin{theorem}[Transformer can implement $\Phi^\star$-Ridge for dynamical system]
\label{thm:dynamical-system}

For the dynamical system setting where the $L$-layer representation function $\Phi^\star:\R^{kd}\to\R^D$ takes form~\cref{eqn:mlp}, but otherwise same settings as~\cref{thm:fixed-rep}, there exists a transformer $\TF$ with $L+2+\cO\paren{\kappa\log(\Bphi\Bw/\eps)}$ layers, $\max\sets{3d,5}$ heads, and $\Dhid=\max\sets{2(k+1),D}d+3(D+d)+5$ such that the following holds.

For any dataset $\cD$ such that $\ltwos{\Phi^\star(\barbx_i)}\le \Bphi$, $\linfs{\bx_i}\le \By$, and $\ltwoinfs{\hbWiphilam}\le \Bw/2$ (cf.~\cref{eqn:phi-ridge-dynamical-system}) for all $i\in[N]$, and corresponding input $\bH\in\R^{\Dhid\times N}$ of format~\cref{eqn:input-format-dynamical-system}, we have
\begin{enumerate}[label=(\alph*),leftmargin=2em]
\item The first transformer layer copies the $k$ previous inputs into the current token, and computes the first layer $\sets{\sigma_\rho(\bB_1^\star\barbx_i)}_{i\in[N]}$ within $\Phi^\star$:
\begin{talign}
    & \Attn^{(1)}(\bH) = \begin{bmatrix}
        \barbx_1 & \dots & \barbx_N \\
        \barbp_{1} & \dots & \barbp_N
    \end{bmatrix} = \begin{bmatrix}
    \bx_{1-k+1} & \dots & \bx_{N-k+1} \\
    \vert &  & \vert \\
    \bx_{1} & \dots & \bx_{N} \\
    \barbp_{1} & \dots & \barbp_{N}
    \end{bmatrix}; \label{eqn:copied-input-dynamical-system} \\
    & \TF^{(1)}(\bH) = \MLP^{(1)}\paren{\Attn^{(1)}(\bH)} = \begin{bmatrix}
        \sigma_\rho(\bB_1^\star\barbx_1) & \dots & \sigma_\rho(\bB_1^\star\barbx_N) \\
        \bx_1 & \dots & \bx_N \\
        \barbp_1' & \dots & \barbp_N'
    \end{bmatrix}. \label{eqn:relued-input-dynamical-system}
\end{talign}

\item The first $(L+1)$ layers of $\TF$ transforms each $\bx_i$ to $\Phi^\star(\barbx_i)$, and copies the preceding representation $\Phi^\star(\barbx_{i-1})$ onto the same token to form the (input, label) pair $(\Phi^\star(\barbx_{i-1}), \bx_i)$:
\begin{align}
\label{eqn:transformed-input-dynamical-system}
    \TF^{(1:L+1)}(\bH) = \begin{bmatrix}
    \Phi^\star(\barbx_1) & \Phi^\star(\barbx_2) & \dots  & \Phi^\star(\barbx_N) \\
    \bzero_d & \bzero_d & \dots & \bzero_d \\
    \bzero_D & \Phi^\star(\barbx_1) & \dots & \Phi^\star(\barbx_{N-1}) \\
    \bx_1 & \bx_2 & \dots & \bx_N \\
    \tbp_1 & \tbp_2 & \dots  & \tbp_N
    \end{bmatrix}.
\end{align}
Above, $\barbp_i,\barbp_i',\tbp_i$ only differs from $\bp_i$ in the dimension of the zero paddings.

\item For every index $i\in[N]$, the transformer output $\tbH=\TF(\bH)$ contains prediction $\hbx_{i+1}\defeq [\tbh_{i}]_{1:d}$ that is close to the~\cref{eqn:phi-ridge-dynamical-system} predictor: $\linfs{\hbx_{i+1} - (\hbWiphilam)^\top\Phi^\star(\barbx_i)} \le \eps$.
\end{enumerate}
\end{theorem}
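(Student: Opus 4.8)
The plan is to adapt the two-module construction of \cref{thm:fixed-rep} to the dynamical-system input format \cref{eqn:input-format-dynamical-system}, with the main new ingredient being the first attention layer that assembles the augmented input $\barbx_i = [\bx_{i-k+1};\dots;\bx_i]$ by copying the $k$ preceding tokens into the current one. First I would construct this copying layer: using the positional encodings $\bp_i = [\cdots;1;i;i^2;i^3]$ one can build, for each shift $s\in\{1,\dots,k-1\}$, a head whose masked attention pattern is (approximately) a hard one-hot on token $i-s$; this is a standard ``relative-position match'' construction (solving $i-s$ as a low-degree polynomial identity in $(i,i^2,i^3)$ and routing it through the softmax/normalized-ReLU so the score is peaked), and it requires roughly $3$ heads per shift, hence the $\max\{3d,5\}$ head count and the enlarged $\Dhid=\max\{2(k+1),D\}d+3(D+d)+5$ to host all the copied blocks plus scratch space. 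Combined with the residual connection this yields \cref{eqn:copied-input-dynamical-system}, and a single MLP layer applying $\sigma_\rho(\bB_1^\star\,\cdot)$ to the stacked block gives \cref{eqn:relued-input-dynamical-system}; here I would invoke (the dynamical-system analogue of) \cref{lem:mlp} to note one MLP layer exactly realizes one affine-then-leaky-ReLU step.

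Next, for part~(b), I would apply \cref{lem:mlp} to implement the remaining $L-1$ layers of $\Phi^\star$ on the $\barbx_i$ block, producing $\Phi^\star(\barbx_i)$ in place, exactly as in \cref{thm:fixed-rep}; this consumes layers $2,\dots,L$. Then one more copying layer (an instance of \cref{lem:copy}, now copying from token $i-1$ to token $i$ rather than from an $x$-token to the succeeding $y$-token) moves $\Phi^\star(\barbx_{i-1})$ onto token $i$, so that token $i$ holds the pair $(\Phi^\star(\barbx_{i-1}),\bx_i)$ together with $\Phi^\star(\barbx_i)$ in a separate block — this is exactly the layout \cref{eqn:transformed-input-dynamical-system}, and accounts for the $L+1$ layers claimed. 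The per-token ``label'' $\bx_{i}$ is now $\R^d$-valued rather than scalar, so the subsequent ridge module must regress a matrix $\bW\in\R^{D\times d}$; I would handle this by running $d$ copies of the scalar in-context ridge construction in parallel (one per output coordinate), which is why the head count carries a $3d$ and the hidden dimension a $\max\{2(k+1),D\}d$ term — each coordinate's GD iteration (\cref{prop:one-step-gd}) is run on its own set of coordinates. Applying \cref{thm:ridge} coordinatewise gives, after $\cO(\kappa\log(\Bphi\Bw/\eps))$ further layers, a prediction at token $i$ equal to $(\hbWiphilam)^\top\Phi^\star(\barbx_i)$ up to $\ell_\infty$-error $\eps$ — noting that the per-coordinate ridge problem has the same conditioning $\kappa=1+\Bphi^2/\lambda$ and the boundedness hypotheses $\ltwos{\Phi^\star(\barbx_i)}\le\Bphi$, $\ltwoinfs{\hbWiphilam}\le\Bw/2$, $\linfs{\bx_i}\le\By$ feed directly into \cref{thm:ridge}. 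Extracting $[\tbh_i]_{1:d}$ gives part~(c), and the total layer count is $(L+1)$ for the transform-and-copy phase plus $\cO(\kappa\log(\Bphi\Bw/\eps))$ plus one readout/bookkeeping layer, i.e.\ $L+2+\cO(\kappa\log(\Bphi\Bw/\eps))$.

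The main obstacle I anticipate is the first attention layer: making the $k$-fold copy exact (or exact enough that the downstream error stays $\le\eps$) using only $3$ heads per shift and a decoder mask. Two subtleties need care. First, the normalized-ReLU / softmax attention gives a convex combination rather than a clean selection, so one must verify that the polynomial-matching scores can be sharpened (e.g.\ by scaling) so the off-target mass is negligible, and that this holds uniformly over all $i\in[N]$ and over the boundary tokens $i\le k$ where $\bx_j=\bzero$ for $j\le0$ — the padding convention must be implemented, not just assumed. Second, I need all $k$ blocks $\bx_{i-k+1},\dots,\bx_i$ written to disjoint coordinate ranges \emph{without} overwriting the positional block that later copying/MLP layers still need; this is a bookkeeping constraint on the embedding dimension, and getting $\Dhid$ down to the stated $\max\{2(k+1),D\}d+3(D+d)+5$ rather than something larger requires reusing scratch space between the copy phase, the MLP phase, and the ridge phase. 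Everything after the layout \cref{eqn:transformed-input-dynamical-system} is a $d$-fold parallel repetition of machinery already established for \cref{thm:fixed-rep}, so I expect no genuinely new difficulty there beyond tracking the parallel-head bookkeeping.
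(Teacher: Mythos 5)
Your three-module decomposition (copy-and-assemble $\barbx_i$, run $\Phi^\star$ layerwise and then copy $\Phi^\star(\barbx_{i-1})$ onto token $i$, then $d$-fold parallel in-context ridge) matches the paper's proof, and your layer-count accounting ($1 + L + \cO(\kappa\log(\Bphi\Bw/\eps))$) is essentially right. However, two of your specific claims diverge from what the paper does, and the main ``obstacle'' you anticipate is in fact a non-issue given the architecture used.

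First, you worry that the normalized-ReLU/softmax attention produces a smeared convex combination that must be ``sharpened by scaling'' and that boundary tokens $i\le k$ need special care. In the paper, $\barsig$ is the length-normalized ReLU, not softmax, and the positional encoding $\bp_i=[\dots;1;i;i^2;i^3]$ allows \cref{Lem:copy-dynamic} to produce scores of the form $\frac{1}{i}\sigma\bigl(i\bigl(1-(j-i+k'-1)^2\bigr)\bigr)=\indic{j=i-k'+1}$, which is an \emph{exact} hard one-hot over the causal mask --- no sharpening, no off-target mass, and for $j\le 0$ the key simply does not exist under the decoder mask, which automatically implements the $\bx_{j}=\bzero$ padding convention. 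Second, you attribute the $\max\{3d,5\}$ head count to ``$3$ heads per shift'' in the copy layer. That accounting is wrong on both ends: the copy layer in \cref{Lem:copy-dynamic} uses a single head per shift (a total of $k+1$ heads, one for each $k'\in[k]$ plus one to clear the original $\bx_i$ slot), while the $3d$ comes from running $d$ parallel single-output ridge GD loops (three heads each, as in \cref{prop:one-step-gd}) in the downstream module of \cref{thm:ridge-multi-output}, and the $5$ comes from the MLP-representation block of \cref{lem:mlp-dynamical-system}. So the head budget is determined by the ridge and representation modules, not the assembly layer. With these corrections the rest of your outline --- in particular applying the single-output ridge construction coordinatewise and reading off $[\tbh_i]_{1:d}$ --- is the same argument as the paper's.
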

To our best knowledge,~\cref{thm:dynamical-system} provides the first transformer construction for learning nonlinear dynamical systems in context. Similar as for~\cref{thm:fixed-rep}, the bounds on the transformer size here imply guarantees $\wt{\eps}$ excess risk within $\tO\paren{(L+\kappa)^2((k+D)d)^2\wt{\eps}^{-2}}$ (pre-)training instances.

In terms of the mechanisms, compared with~\cref{thm:fixed-rep}, the main differences in~\cref{thm:dynamical-system} are (1) the additional copying step~\cref{eqn:copied-input-dynamical-system} within the first layer, where the previous $(k-1)$ tokens $[\bx_{i-k+1}; \dots, \bx_{i-1}]$ are copied onto the $\bx_i$ token, to prepare for computing of $\Phi^\star(\barbx_i)$; (2) the intermediate output~\cref{eqn:transformed-input-dynamical-system}, where relevant information (for preparing for linear ICL) has form $[\Phi^\star(\barbx_{i-1}); \bx_i; \Phi^\star(\barbx_i)]$ and is gathered in the $\bx$-tokens, different from~\cref{eqn:transformed-input} where the relevant information is $[\Phi^\star(\bx_i); y_i]$, gathered in the $y$-token. We will test these in our experiments (\cref{sec:exp-dynamical-system}).

\section{Experiments}
\label{sec:exp}

We now empirically investigate trained transformers under the two settings considered in~\cref{sec:fixed-rep} \&~\ref{sec:dynamical-system}. In both cases, we choose the representation function $\Phi^\star$ to be a normalized version of the $L$-layer MLP~\cref{eqn:mlp}: $\Phi^\star(\bx)\defeq\tPhi^\star(\bx)/\ltwos{\tPhi^\star(\bx)}$, where $\tPhi^\star$ takes form~\cref{eqn:mlp}, with weight matrices $(\bB_i^ \star)_{i\in[L]}$ sampled as random (column/row)-orthogonal matrices and held fixed in each experiment, and slope $\rho=0.01$. We test $L\in\sets{1,2,3,4}$, hidden dimension $D\in\sets{5, 20, 80}$, and noise level $\sigma\in\sets{0, 0.1, 0.5}$. All experiments use $\Pin_x=\normal(\bzero,\bI_d)$, $\tau^2=1$, $d=20$, and $N=41$.

We use a small architecture within the GPT-2 family with 12 layers, 8 heads, and $\Dhid=256$, following~\citep{garg2022can,li2023transformers,bai2023transformers}. The (pre)-training objective for the transformer (for the supervised learning setting) is the average prediction risk at all tokens:
\begin{talign}\label{eqn:train-objective}
    \min_{\btheta} \E_{\bw,\cD\sim\Pin_\bw} \brac{\frac{1}{2N}\sum_{i=1}^N \paren{\hy_{\btheta, i}(\cD_{i-1}, \bx_i) - y_i}^2 },
\end{talign}
where $\hy_{\btheta, i}$ is extracted from the $(2i-1)$-th output token of $\TF_\btheta(\bH)$ (cf. Section~\ref{sec:prelim}). The objective for the dynamical system setting is defined similarly. Additional experimental details can be found in~\cref{app:exp-details}, and ablation studies (e.g. along the training trajectory; cf.~\cref{figure:train}) in~\cref{app:ablations}.

\begin{figure}[t]
  \centering
  \begin{minipage}{0.34\textwidth}
      \centering
      \subcaption{\small Varying noise level}
      \label{fig:fixed-rep-risk-noise}
      \vspace{-.2em}
      \includegraphics[width=\linewidth]{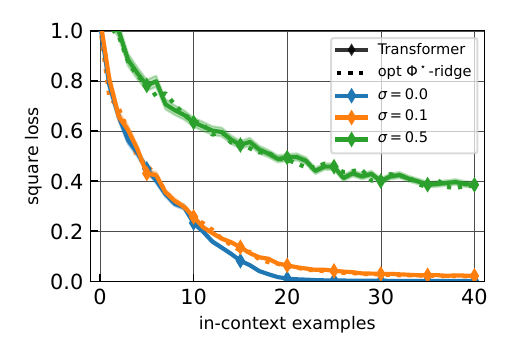}
  \end{minipage}
  \hspace{-1em}
  \begin{minipage}{0.32\textwidth}
      \centering
      \subcaption{\small Varying rep hidden dimension}
      \label{fig:fixed-rep-risk-D}
      \vspace{-.2em}
      \includegraphics[width=\linewidth]{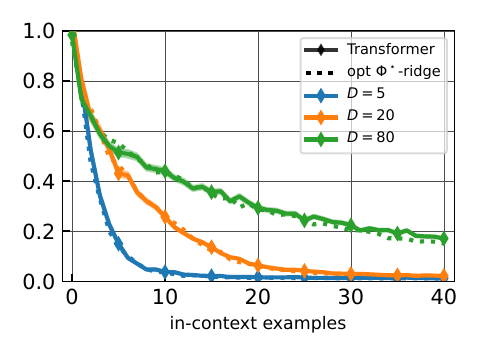}
  \end{minipage}
  \hspace{-1em}
  \begin{minipage}{0.32\textwidth}
      \centering
      \subcaption{\small Varying depth of rep}
      \label{fig:fixed-rep-risk-L}
      \vspace{-.2em}
      \includegraphics[width=\linewidth]{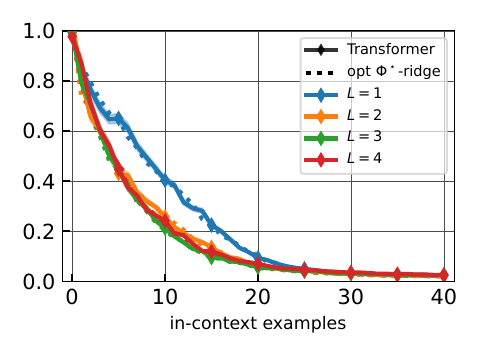}
  \end{minipage}
  \vspace{-1em}
  \caption{\small Test ICL risk for learning with representations. Each plot modifies a single problem parameter from the base setting $(L,D,\sigma)=(2, 20, 0.1)$. Dotted lines plot the Bayes-optimal risks for each setting respectively.
  }
  \label{figure:fixed-rep-risk}
  \vspace{-1em}
\end{figure}
\begin{figure}[t]
  \centering
  \begin{minipage}{0.34\textwidth}
      \centering
      \subcaption{
      \small Probe $\Phi^\star(\bx_i)$ at $\bx_i$ tokens %
      }
      \label{fig:fixed-rep-probe-z-x}
      \vspace{-.1em}
      \includegraphics[width=\linewidth]{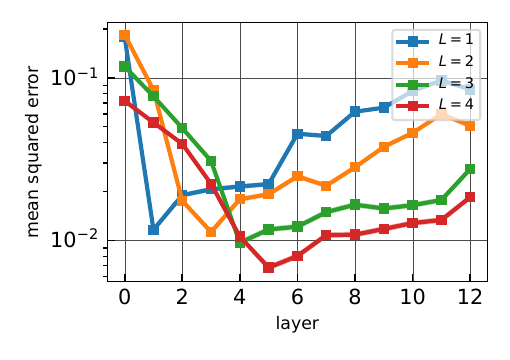}
  \end{minipage}
  \hspace{-1em}
  \begin{minipage}{0.32\textwidth}
      \centering
      \subcaption{\small Probe $\Phi^\star(\bx_i)$ at $y_i$ tokens}
      \label{fig:fixed-rep-probe-z-y}
      \vspace{-.1em}
      \includegraphics[width=\linewidth]{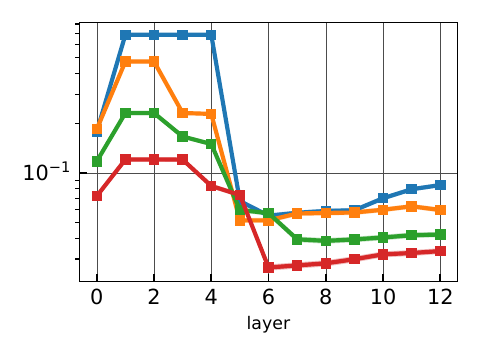}
  \end{minipage}
  \hspace{-1em}
  \begin{minipage}{0.33\textwidth}
      \centering
      \subcaption{\small Probe $\hy^{\Phi^\star,\lambda}_i$ at $\bx_i$ tokens}
      \label{fig:fixed-rep-probe-yhat-x}
      \vspace{-.1em}
      \includegraphics[width=\linewidth]{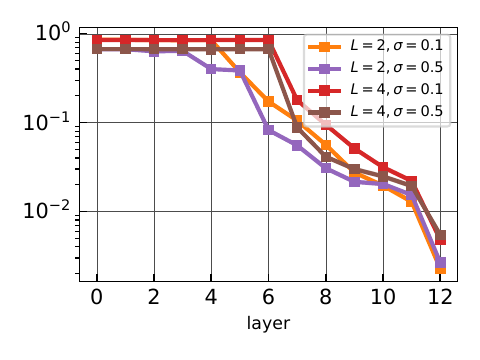}
  \end{minipage}
  \vspace{-1em}
  \caption{\small Probing errors for the learning with representation setting. Each setting modifies one or two problem parameters from the base setting $(L,D,\sigma)=(2, 20, 0.1)$. Note that the orange  curve corresponds to the same setting (and thus the same transformer) across (a,b,c), as well as the red curve.}
  \label{figure:fixed-rep-probe}
  \vspace{-1em}
\end{figure}

\subsection{Supervised learning with representation}
\label{sec:exp-fixed}

We first test ICL with supervised learning data as in~\cref{sec:fixed-rep}, where for each configuration of $(L,D,\sigma)$ (which induces a $\Phi^\star$) we train a transformer on ICL data distribution~\cref{eqn:model} and evaluate ICL on the same distribution. Note that \cref{fig:fig1-probe} \&~\ref{fig:fig1-risk} plots the results for $(L,D,\sigma)=(2,20,0.1)$.

\paragraph{ICL performance} 
\cref{figure:fixed-rep-risk} reports the test risk across various settings, where we observe that trained transformers can consistently match the Bayes-optimal ridge predictor. This extends existing results which show that linear functions (without a representation) can be learned near-optimally in-context by transformers~\citep{garg2022can,akyurek2022learning}, adding our model~\cref{eqn:model} to this list of (empirically) nearly-optimally learnable function classes. Among the complexity measures $(L,D,\sigma)$, observe that the noise level $\sigma$ and hidden dimension $D$ of the representation (\cref{fig:fixed-rep-risk-noise} \&~\ref{fig:fixed-rep-risk-D}) appears to have a larger effect on the (nearly Bayes-optimal) risk than the depth $L$ (Figure~\ref{fig:fixed-rep-risk-L}).

\paragraph{Mechanisms via linear probing}
We conduct probing experiments to further understand the mechanisms of the trained transformers. In accordance with the theoretical construction in~\cref{thm:fixed-rep}, our main question here is: Does the trained transformer perform the following in order:
\begin{enumerate}[leftmargin=2em, topsep=0pt, itemsep=0pt]
    \item Computes $\Phi^\star(\bx_i)$ at $x_i$ tokens;
    \item Copies them onto the following $y_i$ token and obtains dataset $\sets{\Phi^\star(\bx_i), y_i}_i$ in the form of~\cref{eqn:transformed-input};
    \item Performs linear ICL on top of $\sets{\Phi^\star(\bx_i), y_i}_i$?
\end{enumerate}
While such internal mechanisms are in general difficult to quantify exactly, we adapt the \emph{linear probing}~\citep{alain2016understanding} technique to the transformer setting to identify evidence. Linear probing allows us to test whether intermediate layer outputs (tokens) $\sets{\bh^{x,(\ell)}_i}_{\ell\in[12]}$ ($\ell$ denotes the layer) and $\sets{\bh^{y,(\ell)}_i}_{\ell\in[12]}$ ``contains'' various quantities of interest, by linearly regressing these quantities (as the y) on the intermediate tokens (as the x), pooled over the token index $i\in[N]$. For example, regressing $\Phi^\star(\bx_i)$ on $\bh^{x,(\ell)}_i$ tests whether the $\bx_i$ token after the $\ell$-th layer ``contains'' $\Phi^\star(\bx_i)$, where a smaller error indicates a better containment. See~\cref{app:probe-details} for further setups of linear probing.

\cref{figure:fixed-rep-probe} reports the errors of three linear probes across all 12 layers: The representation $\Phi^\star(\bx_i)$ in the $\bx_i$ tokens and $y_i$ tokens, and the optimal ridge prediction $\hy^{\Phi^\star,\lambda}_i$ in the $\bx_i$ tokens. Observe that the probing errors for the representation decrease through lower layers and then increase through upper layers (\cref{fig:fixed-rep-probe-z-x} \&~\ref{fig:fixed-rep-probe-z-y}), whereas probing errors for the ridge prediction monotonically decrease through the layers (\cref{fig:fixed-rep-probe-yhat-x}), aligning with our construction that the transformer first computes the representations and then performs ICL on top of the representation. Also note that deeper representations take more layers to compute (\cref{fig:fixed-rep-probe-z-x}). Further, the representation shows up later in the $y$-tokens (layers 5-6) than in the $x$-tokens (layers 1,3,4,5), consistent with the copying mechanism, albeit the copying appears to be lossy (probe errors are higher at $y$-tokens). 

Finally, observe that the separation between the lower and upper modules seems to be strong in certain runs---For example, the red transformer ($L=4,\sigma=0.1$) computes the representation at layer $5$, copies them onto $y$-tokens at layer $6$, and starts to perform iterative ICL from layer $7$, which aligns fairly well with our theoretical constructions at a high level.

\begin{figure}[t]
  \centering
  \begin{minipage}{0.45\textwidth}
      \centering
      \subcaption{
      \small Illustration of the pasting experiment
      }
      \label{fig:paste-illustration}
      \vspace{.4em}
      \includegraphics[width=\linewidth]{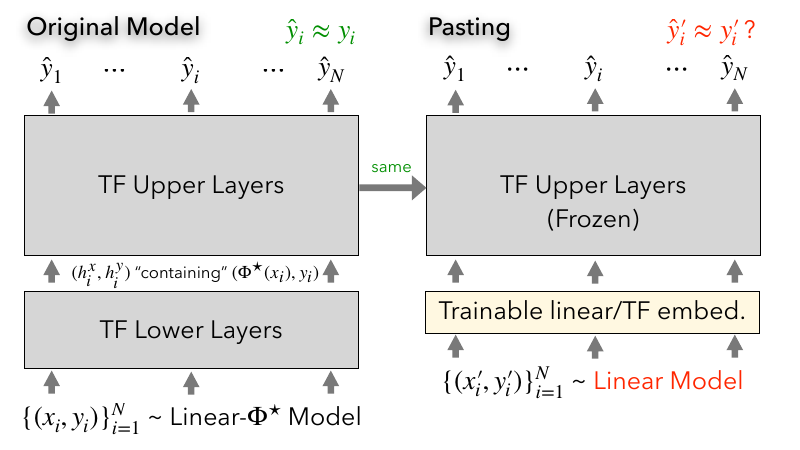}
  \end{minipage}
  \hspace{2em}
  \begin{minipage}{0.35\textwidth}
      \centering
      \subcaption{\small Linear ICL in TF\_upper via pasting}
      \label{fig:paste-risk}
      \vspace{0em}
      \includegraphics[width=1\linewidth]{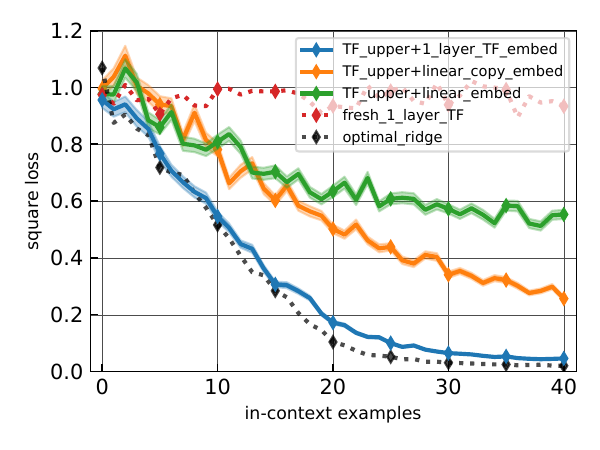}
  \end{minipage}
  \vspace{-1em}
  \caption{\small {\bf (a)} Illustration of our pasting experiment, which examines the linear ICL capability of the upper module of a trained transformer. {\bf (b)} Pasting results for the upper module of a trained transformer in setting $(L, D, \sigma)=(3, 20, 0.1)$. It achieves nearly optimal linear ICL risk (in $20$ dimension with noise $0.1$), using a 1-layer transformer embedding, and also non-trivial performance using linear and linear copy embeddings.
  }
  \label{figure:paste}
  \vspace{-1em}
\end{figure}

\paragraph{Investigating upper module via pasting}
To further investigate upper module, we test whether it is indeed a strong ICL learner \emph{on its own} without relying on the lower module, which would provide stronger evidence that the upper module performs linear ICL. However, a key challenge here is that it is unclear how to feed raw inputs directly into the upper module, as they supposedly only admit input formats emitted from the lower module---the part we wanted to exclude in the first place.

We address this by conducting a \emph{pasting} experiment, where we feed $D$-dimensional \emph{linear ICL problems} ($y_i'=\langle \bw', \bx_i' \rangle$ \emph{without} a representation) with input format~\cref{eqn:input-format} directly to the upper module of the transformer trained on representation $\Phi^\star$, by adding a \emph{trainable embedding layer} in between; see~\cref{fig:paste-illustration} for an illustration of the pasting approach. This trainable embedding layer itself needs to be shallow without much ICL power---we test the following three choices: (1) \emph{Linear} embedding: $\barbh^x_i=\bW[\bx_i; 0]$ and $\bh^y_i=\bW[\bzero_D; y_i]$; (2) \emph{Linear-copy} embedding, where the $y$ tokens are instead $\barbh^y_i=\bW[\bx_i; y_i]$, motivated by the format~\cref{eqn:transformed-input}; (3) \emph{One-layer transformer} embedding $\barTF$, which computes $\barbH=\barTF(\bH)$. See~\cref{app:pasting-details} for further setups of pasting.

\cref{fig:paste-risk} shows the pasting results on a trained transformer on $(L,D,\sigma)=(3,20,0.1)$ (an ablation in~\cref{fig:paste-ablations}), where we dissect the lower and upper modules at layer 4 as suggested by the probing curve (\cref{fig:fixed-rep-probe-z-x} green). Perhaps surprisingly, the upper module of the transformer can indeed perform nearly optimal linear ICL without representation when we use the one-layer transformer embedding. Note that a (freshly trained) single-layer transformer itself performs badly, achieving about the trivial test risk $1.01$, which is expected due to our specific input format\footnote{A one-layer transformer does not have much ICL power using input format~\cref{eqn:input-format}---$\bx_i$ and $y_i$ are stored in separate tokens there, which makes ``one-layer'' mechanisms such as gradient descent~\citep{von2022transformers,akyurek2022learning,bai2023transformers} unlikely to be implementable; see~\cref{app:inability} for a discussion.}~\cref{eqn:input-format}. This suggests that the majority of the ICL is indeed carried by the upper module, with the one-layer transformer embedding not doing much ICL itself. Also note that the linear-copy and linear embeddings also yield reasonable (though suboptimal) performance, with linear-copy performing slightly better.

\subsubsection{Extension: Mixture of multiple representations}
\label{sec:mtl-short}

We aditionally investigate an harder scenario in which there exists \emph{multiple possible representation functions} $(\Phi^\star_j)_{j\in[K]}$, and the ICL data distribution is a mixture of the $K$ distributions of form~\cref{eqn:model} each induced by $\Phi^\star_j$ (equivalent to using the concatenated representation $\barPhi^\star=[\Phi^\star_1,\dots,\Phi^\star_K]$ with a group $1$-sparse prior on $\barbw\in\R^{KD}$). We find that transformers still approach Bayes-optimal risks, though less so compared with the single-representation setting. Using linear probes, we find that transformers sometimes implement the \emph{post-ICL algorithm selection} mechanism identified in~\cite{bai2023transformers}, depending on the setting. Details are deferred to~\cref{app:mtl} due to the space limit.

\begin{figure}[t]
  \centering
  \begin{minipage}{0.34\textwidth}
      \centering
      \subcaption{\small Risk}
      \label{fig:dynamics-risk}
      \vspace{-.25em}
      \includegraphics[width=\linewidth]{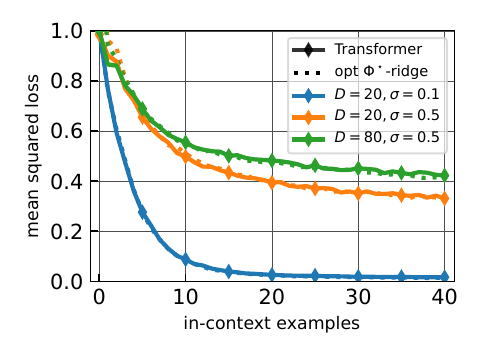}
  \end{minipage}
  \hspace{-1em}
  \begin{minipage}{0.34\textwidth}
      \centering
      \subcaption{\small Probe past inputs at $\bx_i$ tokens}
      \label{fig:dynamics-copy1}
      \vspace{-.1em}
      \includegraphics[width=\linewidth]{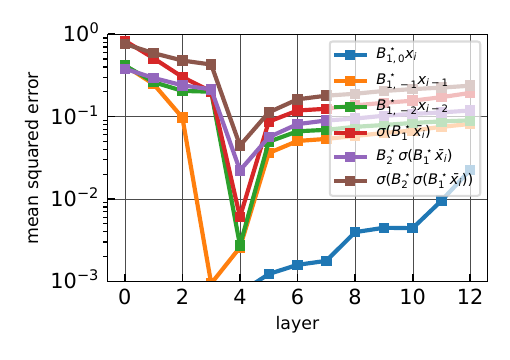}
  \end{minipage}
  \hspace{-1em}
  \begin{minipage}{0.32\textwidth}
      \centering
      \subcaption{\small Probe $\Phi^\star(\bx_{i-j})$ at $\bx_i$ tokens}
      \label{fig:dynamics-copy2}
      \vspace{.4em}
      \includegraphics[width=\linewidth]{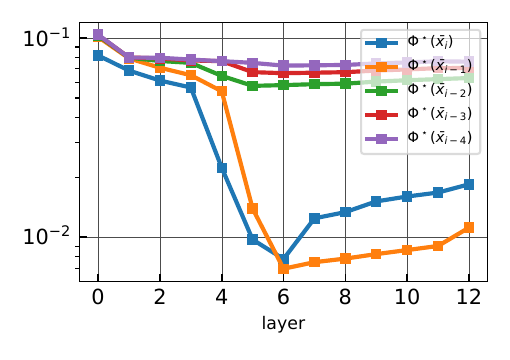}
  \end{minipage}
  \vspace{-1em}
  \caption{\small ICL risks and probing errors for the dynamical system setting. {\bf (a)} Each curve modifies problem parameters from the base setting $(k,L,D,\sigma)=(3,2,80, 0.5)$. {\bf (b,c)} Results are with the same base setting.
  }
  \label{figure:dynamics}
  \vspace{-1em}
\end{figure}

\subsection{Dynamical systems}
\label{sec:exp-dynamical-system}

We now study the dynamical systems setting in~\cref{sec:dynamical-system} using the same approaches as in~\cref{sec:exp-fixed}.~\cref{fig:dynamics-risk} shows that transformers can still consistently achieve nearly Bayes-optimal ICL risk. An ablation of the risks and probing errors in alternative settings can be found in~\cref{app:ablations-dynamical-system}.

\paragraph{Probing copying mechanisms}
The main mechanistic question we ask here is about the data preparation phase, where the transformer construction in~\cref{thm:dynamical-system} performs copying \emph{twice}:
\begin{enumerate}[label=\roman*),leftmargin=2em, topsep=0pt, itemsep=0pt]
    \item A copying of $[\bx_{i-k+1};\dots;\bx_{i-1}]$ onto the $\bx_i$ token as in~\cref{eqn:copied-input-dynamical-system}, to prepare for the computation of $\Phi^\star(\barbx_i)$; As copying may not be distinguishable from the consequent \emph{matrix multiplication} step $[\bx_{i-k+1};\dots,\bx_{i-1};\bx_i]\\\mapsto \bB_1^\star[\bx_{i-k+1};\dots,\bx_{i-1};\bx_i]$, we probe instead the result $\B_{1,-j}^\star\bx_{i-j}$ after matrix multiplication, where $\bB_{1,-j}^\star\in\R^{D\times d}$ denotes the block within $\bB_1^\star$ hitting $\bx_{i-j}$.
    \item A second copying of $\Phi^\star(\barbx_{i-1})$ onto the $\bx_i$ token to obtain~\cref{eqn:transformed-input-dynamical-system}, after $\sets{\Phi^\star(\barbx_i)}_i$ are computed.
\end{enumerate}

We probe one transformer trained on the dynamical systems problem with $k=3$ (so that the useful preceding inputs are $\bx_{i-1}$ and $\bx_{i-2}$), and find that the transformer indeed performs the two conjectured copyings.~\cref{fig:dynamics-copy1} demonstrates copying i) onto the current token, where the copying of $\bx_{i-1}$ happens earlier (at layer 3) and is slightly more accurate than that of $\bx_{i-2}$ (at layer 4), as expected. Further observe that layer 4 (which we recall contains an attention layer and an MLP layer) have seemingly also implemented the (unnormalized) MLP representation $\tPhi^\star(\barbx_i)=\sigma_\rho(\bB_2^\star\sigma_\rho(\bB_1^\star\barbx_i))$, though the probing error for the actual representation $\Phi^\star(\barbx_i)=\tPhi^\star(\barbx_i)/\ltwos{\tPhi^\star(\barbx_i)}$ continues to drop in layer 4-6 (\cref{fig:dynamics-copy2}). \cref{fig:dynamics-copy2} further demonstrates copying ii), where $\Phi^\star(\barbx_{i-1})$ are indeed copied to the $i$-th token, whereas by sharp contrast $\Phi^\star(\barbx_{i-k})$ for $k\ge 2$ are \emph{not} copied at all into the $\bx_i$ token, aligning with our conjectured intermediate output format~\cref{eqn:transformed-input-dynamical-system}.

\section{Conclusion}

This paper presents theoretical and mechanistic studies on the in-context learning ability of transformers on learning tasks involving representation functions, where we give efficient transformer constructions for linear ICL on top of representations for the supervised learning and dynamical system setting, and empirically confirm the existence of various high-level mechanisms in trained transformers. We believe our work opens up the investigation of ICL beyond simple function classes, and suggests open questions such as further investigations of the mechanisms of the linear ICL modules, and theory for ICL in more complex function classes. One limitation of our work is that the setting still consists of synthetic data with idealistic representation functions; performing similar studies on more real-world data would be an important direction for future work.

\bibliographystyle{plainnat}
\bibliography{bib}

\appendix
\makeatletter
\def\renewtheorem#1{%
  \expandafter\let\csname#1\endcsname\relax
  \expandafter\let\csname c@#1\endcsname\relax
  \gdef\renewtheorem@envname{#1}
  \renewtheorem@secpar
}
\def\renewtheorem@secpar{\@ifnextchar[{\renewtheorem@numberedlike}{\renewtheorem@nonumberedlike}}
\def\renewtheorem@numberedlike[#1]#2{\newtheorem{\renewtheorem@envname}[#1]{#2}}
\def\renewtheorem@nonumberedlike#1{  
\def\renewtheorem@caption{#1}
\edef\renewtheorem@nowithin{\noexpand\newtheorem{\renewtheorem@envname}{\renewtheorem@caption}}
\renewtheorem@thirdpar
}
\def\renewtheorem@thirdpar{\@ifnextchar[{\renewtheorem@within}{\renewtheorem@nowithin}}
\def\renewtheorem@within[#1]{\renewtheorem@nowithin[#1]}
\makeatother

\renewtheorem{theorem}{Theorem}[section]
\renewtheorem{lemma}[theorem]{Lemma}
\renewtheorem{remark}{Remark}
\renewtheorem{corollary}[theorem]{Corollary}
\renewtheorem{corollary*}{Corollary}
\renewtheorem{observation}[theorem]{Observation}
\renewtheorem{proposition}[theorem]{Proposition}
\renewtheorem{definition}[theorem]{Definition}
\renewtheorem{claim}[theorem]{Claim}
\renewtheorem{fact}[theorem]{Fact}
\renewtheorem{assumption}{Assumption}%
\renewcommand{\theassumption}{\Alph{assumption}}
\renewtheorem{conjecture}[theorem]{Conjecture}

\section{Technical tools}

The following convergence result for minimizing a smooth and strongly convex function is standard from the convex optimization literature, e.g. by adapting the learning rate in~\cite[Theorem 3.10]{bubeck2015convex} from $\eta=1/\beta$ to any $\eta\le 1/\beta$.

\begin{proposition}[Gradient descent for smooth and strongly convex functions]
\label{prop:strongly-convex-gd}
Suppose $L:\R^d\to\R$ is $\alpha$-strongly convex and $\beta$-smooth for some $0<\alpha\le\beta$. Then, the gradient descent iterates $\bw^{t+1}_\gd\defeq \bw^t_\gd - \eta\grad L(\bw^t_\gd)$ with learning rate $\eta\le 1/\beta$ and initialization $\bw^0_\gd\in\R^d$ satisfies for any $t\ge 1$,
\begin{align*}
    \ltwo{\bw^t_\gd - \bw^\star}^2 \le \exp\paren{-\eta\alpha\cdot t} \cdot \ltwo{\bw^0_\gd - \bw^\star}^2.
\end{align*}
where $\bw^\star\defeq \argmin_{\bw\in\R^d} L(\bw)$ is the minimizer of $L$.
\end{proposition}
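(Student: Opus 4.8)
The plan is to reduce the statement to a one-step distance contraction and then iterate it $t$ times. Concretely, I would first establish that under $\eta\le 1/\beta$ a single gradient step satisfies
\begin{align*}
\ltwo{\bw^{t+1}_\gd - \bw^\star}^2 \le (1-\eta\alpha)\,\ltwo{\bw^t_\gd - \bw^\star}^2 .
\end{align*}
Granting this, applying the inequality $t$ times gives $\ltwo{\bw^t_\gd - \bw^\star}^2\le (1-\eta\alpha)^t\ltwo{\bw^0_\gd - \bw^\star}^2$, and since $\eta\le 1/\beta\le 1/\alpha$ forces $1-\eta\alpha\in[0,1]$, the elementary bound $1-x\le e^{-x}$ yields $(1-\eta\alpha)^t\le e^{-\eta\alpha t}$, which is exactly the claim.

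For the one-step contraction I would expand the square using the update $\bw^{t+1}_\gd=\bw^t_\gd-\eta\grad L(\bw^t_\gd)$:
\begin{align*}
\ltwo{\bw^{t+1}_\gd - \bw^\star}^2 = \ltwo{\bw^t_\gd - \bw^\star}^2 - 2\eta\,\iprod{\grad L(\bw^t_\gd)}{\bw^t_\gd - \bw^\star} + \eta^2\,\ltwo{\grad L(\bw^t_\gd)}^2,
\end{align*}
and then control the last two terms with two standard facts about $L$. First, co-coercivity of the gradient of a $\beta$-smooth convex function, instantiated at the pair $(\bw^t_\gd,\bw^\star)$ and using $\grad L(\bw^\star)=\bzero$, gives $\ltwo{\grad L(\bw^t_\gd)}^2\le \beta\,\iprod{\grad L(\bw^t_\gd)}{\bw^t_\gd - \bw^\star}$; combined with $\eta^2\beta\le\eta$ (this is where $\eta\le 1/\beta$ enters, replacing the $\eta=1/\beta$ step in the proof of \cite[Theorem 3.10]{bubeck2015convex}) the quadratic term is absorbed, leaving $\ltwo{\bw^{t+1}_\gd - \bw^\star}^2\le \ltwo{\bw^t_\gd - \bw^\star}^2-\eta\,\iprod{\grad L(\bw^t_\gd)}{\bw^t_\gd - \bw^\star}$. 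Second, $\alpha$-strong convexity gives $\iprod{\grad L(\bw^t_\gd)}{\bw^t_\gd - \bw^\star}\ge \alpha\ltwo{\bw^t_\gd - \bw^\star}^2$ (again using $\grad L(\bw^\star)=\bzero$), which turns the previous display into the desired contraction.

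I do not expect a genuinely hard step; the result is classical. The only points that need care are (i) invoking co-coercivity in the right direction and evaluating it at the minimizer, and (ii) checking that $1-\eta\alpha\ge 0$ so that both the iteration and the final $1-x\le e^{-x}$ step are legitimate. As an alternative that avoids co-coercivity, when $L$ is twice differentiable (as in our applications, where $L$ is a quadratic ridge objective) one may instead write $\grad L(\bw^t_\gd)=\bH_t(\bw^t_\gd-\bw^\star)$ with $\bH_t\defeq\int_0^1\nabla^2 L\big(\bw^\star+s(\bw^t_\gd-\bw^\star)\big)\,\dd s$ satisfying $\alpha\bI\preceq \bH_t\preceq\beta\bI$, so that $\bw^{t+1}_\gd-\bw^\star=(\bI-\eta\bH_t)(\bw^t_\gd-\bw^\star)$ with $\lops{\bI-\eta\bH_t}\le \max\{1-\eta\alpha,\,1-\eta\beta\}=1-\eta\alpha$ under $\eta\le 1/\beta$; squaring and iterating again gives the bound.
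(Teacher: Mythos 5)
Your proposal is correct, and it fills a gap the paper leaves implicit: the paper does not actually prove this proposition — it merely cites \cite[Theorem~3.10]{bubeck2015convex} and remarks that the learning rate there can be relaxed from $\eta=1/\beta$ to any $\eta\le 1/\beta$. Your argument is the standard one and carries out exactly the adaptation the paper alludes to. The one-step contraction
$\ltwo{\bw^{t+1}_\gd-\bw^\star}^2\le(1-\eta\alpha)\ltwo{\bw^t_\gd-\bw^\star}^2$
obtained from co-coercivity (to absorb the $\eta^2\ltwos{\grad L}^2$ term via $\eta^2\beta\le\eta$) plus the strong-convexity monotonicity inequality (to extract the $\alpha\ltwos{\bw^t_\gd-\bw^\star}^2$ lower bound) is exactly right, and both auxiliary facts are invoked with $\grad L(\bw^\star)=\bzero$ as required. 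The observation that $\eta\alpha\le\eta\beta\le 1$ keeps the contraction factor in $[0,1]$ so that iteration and $1-x\le e^{-x}$ are legitimate, and iterating then gives $(1-\eta\alpha)^t\le e^{-\eta\alpha t}$. One minor presentational point: Bubeck's proof of his Theorem~3.10 proceeds through a combined coercivity inequality for functions that are simultaneously $\alpha$-strongly convex and $\beta$-smooth (his Lemma~3.11), which yields a slightly different contraction constant at his step size $\eta=2/(\alpha+\beta)$; your route, which invokes co-coercivity of the $\beta$-smooth convex gradient and the strong-convexity inequality separately, is the more elementary and more commonly taught variant, and is what gives precisely the stated $(1-\eta\alpha)$ factor for any $\eta\le 1/\beta$. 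Your alternative via Hessian averaging is also correct (and is arguably the cleanest route for the quadratic ridge objectives actually used in this paper), giving the even sharper factor $(1-\eta\alpha)^{2t}$, which of course implies the claimed bound.
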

\section{Proofs for Section~\ref{sec:fixed-rep}}
\label{app:proof-fixed-rep}

Throughout the rest of this and next section, we consider transformer architectures defined in~\cref{sec:prelim} where we choose $\barsig$ to be the (entry-wise) ReLU activation normalized by sequence length, following~\citep{bai2023transformers}: For all $\bA\in\R^{N\times N}$ and $i,j\in[N]$,
\begin{align}
\label{eqn:normalized-relu}
    \brac{\barsig(\bA)}_{ij} = \frac{1}{j}\sigma(A_{ij}),
\end{align}
where we recall $\sigma(t)=\max\sets{t,0}$ denotes the standard ReLU.
This activation is similar as the softmax in that, for every (query index) $j$, the resulting attention weights $\sets{\frac{1}{j}\sigma(A_{ij})}_{i\in[j]}$ is approximately a probability distribution in typical scenarios, in the sense that they are non-negative and sum to $O(1)$ when each $A_{ij}=O(1)$. We remark that transformers with (normalized) ReLU activation is recently shown to achieve comparable performance with softmax in larger-scale tasks~\citep{shen2023study,wortsman2023replacing}.

With activation chosen as~\cref{eqn:normalized-relu}, a (decoder-only) attention layer $\tbH=\Attn_\btheta(\bH)$ with $\btheta=(\bQ_m,\bK_m,\bV_m)_{m\in[M]}$ takes the following form in vector notation:
\begin{align*}
    \tbh_i = \bh_i + \sum_{m=1}^M \frac{1}{i}\sum_{j=1}^i \sigma\paren{\<\bQ_m\bh_i, \bK_m\bh_j\>} \cdot \bV_m\bh_j.
\end{align*}

Recall our input format~\cref{eqn:input-format}: 
\begin{align*}
\bH = \begin{bmatrix}
\bx_1 & \bzero & \dots & \bx_{N} & \bzero \\
0 & y_1 & \dots & 0 & y_N \\
\bp^x_1 & \bp^y_1 & \dots & \bp^x_N & \bp^y_N
\end{bmatrix}
\in \R^{\Dhid\times 2N}.
\end{align*}
We will use $(\bh_k)_{k\in[2N]}$ and $(\bh^x_i, \bh^y_i)_{i\in[N]}$ interchangeably to denote the tokens in~\cref{eqn:input-format}, where $\bh^x_i\defeq \bh_{2i-1}$ and $\bh^y_i\defeq \bh_{2i}$. Similarly, we will use $(\bp^x_i,\bp^y_i)_{i\in[N]}$ and $(\bp_k)_{k\in[2N]}$ interchangably to denote the positional encoding vectors in~\cref{eqn:input-format}, where $\bp_{2i-1}\defeq \bp^x_i$ and $\bp_{2i}\defeq \bp^y_i$. Unless otherwise specified, we typically reserve use $i,j$ as (query, key) indices within $[N]$ and $k,\ell$ as (query, key) indices within $[2N]$.

We use the following positional encoding vectors for all $i\in[N]$:
\begin{align}
\label{eqn:positional-encoding}
\begin{aligned}
    & \bp^x_i = [\bzero_{\Dhid-d-9}; 1; 2i-1; (2i-1)^2; (2i-1)^3; i; i^2; 1; i], \\
    & \bp^y_i = [\bzero_{\Dhid-d-9}; 1; 2i; (2i)^2; (2i)^3; i; i^2; 0; 0].
\end{aligned}
\end{align}
Note that $\bp_k$ contains $[1; k; k^2; k^3]$ for all $k\in[2N]$; $\bp^x_i$, $\bp^y_i$ contains $[i; i^2]$, an indicator of being an $x$-token, and the product of the indicator and $i$.

\subsection{Useful transformer constructions}

\begin{lemma}[Copying by a single attention head]
\label{lem:copy}
There exists a single-head attention layer $\btheta = (\bQ, \bK, \bV)\subset\R^{\Dhid\times \Dhid}$ that copies each $\bx_i$ into the next token for every input $\bH$ of the form~\cref{eqn:input-format}, i.e.
\begin{align*}
\Attn_\btheta(\bH) = \begin{bmatrix}
\bx_1 & \bx_1 & \dots & \bx_{N} & \bx_{N} \\
0 & y_1 & \dots & 0 & y_N \\
\bp^x_1 & \bp^y_1 & \dots & \bp^x_N & \bp^y_N
\end{bmatrix}
\in \R^{\Dhid\times 2N}.
\end{align*}
\end{lemma}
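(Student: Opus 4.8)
The plan is to exhibit explicit matrices $(\bQ,\bK,\bV)$ that use the positional encodings in~\cref{eqn:positional-encoding} to route information from token $2i-1$ (the $\bx_i$-token) into token $2i$ (the following $y_i$-token), while leaving the $x$-tokens themselves unchanged. The structure we want is: at query token $k$, the attention should attend \emph{only} to key token $k-1$ when $k$ is even (i.e. $k=2i$), and to \emph{nothing} (or only to itself, contributing zero) when $k$ is odd. The value matrix $\bV$ should then read off the first $d$ coordinates (the $\bx$-slot) of the attended token and write them into the first $d$ coordinates of the query token. Since the input $y$-tokens have zeros in their $\bx$-slot, adding $\bx_i$ there via the residual connection produces exactly the claimed output; and since $x$-tokens receive zero from attention, they are unchanged.

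The key steps, in order. \textbf{(1)} Design the score function $\<\bQ\bh_k,\bK\bh_\ell\>$ so that, using only the positional parts $[1;k;k^2;k^3]$ and the $x$-token indicator, it evaluates to a positive constant $c>0$ precisely when $\ell=k-1$ and $k$ is even, and to $\le 0$ otherwise; then the normalized-ReLU activation~\cref{eqn:normalized-relu} gives attention weight $\frac{1}{k}\sigma(c)$ on the single key $\ell=k-1$ and $0$ on all other keys. Matching $\ell=k-1$ exactly is a standard trick: the quadratic $-(k-\ell-1)^2$ is a degree-$\le 2$ polynomial in $(k,\ell)$, expressible as a bilinear form in $[1;k;k^2]\otimes[1;\ell;\ell^2]$, and it equals $0$ iff $\ell=k-1$ and is $\le -1$ otherwise for integers; adding a large multiple of the "$k$ is even / query is a $y$-token" indicator (available as $1$ minus the $x$-token-indicator coordinate of $\bp_k$) and subtracting a large multiple of the "$\ell$ is a $y$-token" indicator pins down exactly the even-query, odd-key pair. \textbf{(2)} Scale the score so the surviving weight is exactly $\frac{1}{2i}\cdot 2i = 1$ on key $2i-1$ at query $2i$ — i.e. choose the positive constant to cancel the $\frac{1}{j}=\frac1k$ normalization; concretely pick the score to equal $k=2i$ on the matching pair so that $\frac1k\sigma(k)=1$. (Here one uses that $[1;k;k^2;k^3]$, hence $k$ itself, is linearly available in $\bp_k$.) \textbf{(3)} Let $\bV$ be the matrix that maps a token's $\bx$-slot (coordinates $1{:}d$) to the $\bx$-slot of the output and zeroes everything else; then $\bV\bh_{2i-1}$ contributes $[\bx_i;0;\dots]$, and the attention update adds $\bx_i$ into the $\bx$-slot of token $2i$, which was $\bzero$, giving the claimed form; all other coordinates and all $x$-tokens are untouched. \textbf{(4)} Verify the boundary token $k=2$ works (key $\ell=1$ exists) and that $k=1$ receives nothing, and note $N$ can be arbitrary since the construction is position-index-agnostic beyond parity.

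The main obstacle is Step (1)–(2): making the score \emph{exactly} single out $\ell=k-1$ with the \emph{right} magnitude using only a degree-$\le 3$ polynomial embedding and the parity indicators, so that after the $\frac1k$-normalized ReLU the weight is precisely $1$ and no spurious contribution leaks from non-matching keys (in particular from the query attending to itself, $\ell=k$). This is a routine but slightly fiddly bilinear-form bookkeeping exercise; everything else — the value matrix, the residual cancellation, invariance of $x$-tokens — is immediate. I expect the formal proof to just write down $\bQ,\bK,\bV$ blockwise against the coordinate layout of~\cref{eqn:positional-encoding} and check the score on the four parity cases $(k,\ell)$ even/odd.
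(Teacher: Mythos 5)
Your approach is sound and would yield a valid construction, but it is slightly over-engineered compared to the paper's proof, and the extra machinery you flag as the ``main obstacle'' is in fact unnecessary.

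The paper's score is simply $\<\bQ\bh_k,\bK\bh_\ell\> = k\bigl(1-(k-\ell-1)^2\bigr)$, built only from the polynomial positional coordinates $[1;k;k^2;k^3]$, with \emph{no parity filtering on the query side at all}: $\sigma$ of this score equals $k\,\indic{\ell=k-1}$ for every $k$, so the $\frac1k$ normalization cancels exactly and each query $k$ (whether an $x$- or a $y$-token) attends with weight $1$ to key $k-1$. The parity is handled entirely by $\bV$, which is chosen so that $\bV\bh^x_j=[\bx_j;\bzero]$ and $\bV\bh^y_j=\bzero$. Since an odd (i.e. $x$-token) query $k=2i-1$ attends to the $y$-token at $\ell=2i-2$, whose value is zero, the $x$-tokens are automatically unchanged; the even queries attend to the preceding $x$-token and receive $[\bx_i;\bzero]$ as desired.

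What you propose instead is to push the parity filter into the score itself: make the score positive only when $\ell=k-1$ \emph{and} $k$ is even, using the $x$-token indicator in $\bp_k$. This works, but it forces you to juggle large-constant indicator terms against the $-(k-\ell-1)^2$ penalty while simultaneously keeping the magnitude on the matching pair equal to exactly $k$ — precisely the ``fiddly bilinear bookkeeping'' you worry about. The point you are missing is that your own choice of $\bV$ (project out the $\bx$-slot) already sends $y$-tokens to zero, since $\bh^y_j$ has $\bzero$ in that slot. You observe this fact when arguing that the \emph{destination} $y$-token starts at zero, but you don't use it on the \emph{source} side, where it makes the query-parity filter redundant. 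Dropping that filter gives you exactly the paper's simpler score and removes the only genuinely delicate step in your plan.
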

\begin{proof}
By assumption of the positional encoding vectors, we can define matrices $\bQ,\bK\in\R^{\Dhid\times\Dhid}$ such that for all $k,\ell\in[2N]$,
\begin{align*}
    \bQ\bh_k = [k^3; k^2; k; \bzero_{\Dhid-3}], \quad \bK\bh_\ell = [-1; 2\ell+2; -\ell^2-2\ell; \bzero_{\Dhid-3}].
\end{align*}
This gives that for all $\ell\le k$,
\begin{align*}
    & \quad \sigma\paren{ \<\bQ\bh_k, \bK\bh_\ell\> } \\
    & = \sigma\paren{ -k^3+k^2(2\ell+2)-k(\ell^2+2\ell) } = \sigma\paren{ k(1 - (k-\ell-1)^2) } = k\indic{\ell=k-1}.
\end{align*}
Further defining $\bV$ such that $\bV\bh^x_i=[\bx_i; \bzero]$ and $\bV\bh^y_i=\bzero$, we have for every $k\in[2N]$ that
\begin{align*}
    & \quad \sum_{\ell\le k} \frac{1}{k}\sigma\paren{ \<\bQ\bh_k, \bK\bh_\ell\> } \bV\bh_{\ell} \\
    & = \frac{1}{k} \cdot k\indic{\ell=k-1}\cdot [\bx_{\ceil{\ell/2}}\indic{\ell~\textrm{is odd}}; \bzero] = [\bx_{\ceil{\ell/2}}; \bzero] \cdot \indic{\ell=k-1~\textrm{and}~\ell~\textrm{is odd}}.
\end{align*}
By the residual structure of the attention layer, the above exactly gives the desired copying behavior, where every $\bx_i$ on the odd token $\bH$ is copied to the next token.
\end{proof}

\begin{lemma}[Linear prediction layer]
\label{lem:linear-prediction}
For any $\Bx,\Bw,\By>0$, there exists an attention layer $\btheta=\sets{(\bQ_m,\bK_m,\bV_m)}_{m\in[M]}$ with $M=2$ heads such that the following holds. For any input sequence $\bH\in\R^{\Dhid\times 2N}$ that takes form 
\begin{align*}
    \bh^x_{i}=[\bx_i; 0; \bw_i; \bp^x_i], \quad \bh^y_i=[\bx_i; y_i; \bzero_d; \bp^y_i]
\end{align*}
with $\ltwo{\bx_i}\le \Bx$, $\abs{y_i}\le \By$, and $\ltwo{\bw}\leq\Bw$, it gives output $\Attn_\btheta(\bH)=\tbH\in\R^{\Dhid\times 2N}$ with
\begin{align*}
    \tbh^x_i = \tbh_{2i-1}=[\bx_i; \hy_i; \bw_i; \bp^x_i], \quad \textrm{where}~~\hy_i = \<\bx_i, \bw_i\>
\end{align*}
for all $i\in[N]$.
\end{lemma}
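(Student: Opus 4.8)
The plan is to realize $\hy_i = \langle\bx_i,\bw_i\rangle$ with a single self-attention layer: for an $x$-token the only surviving attention interaction will be the token attending to \emph{itself}, its query--key score will equal $\langle\bx_i,\bw_i\rangle$ (reading $\bx_i$ out of the query slot and $\bw_i$ out of the key slot of that same token), and the value will deposit this scalar into coordinate $d+1$. The two heads are needed only to undo the one-sidedness of the ReLU.

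\textbf{Gating to the self-token.} Using the entries $[1;k;k^2;k^3]$ and the $x$-token indicator present in the positional encodings \eqref{eqn:positional-encoding}, and in the spirit of \cref{lem:copy}, I would pick $\bQ_1,\bK_1$ so that for all $k,\ell\in[2N]$,
\[
\langle \bQ_1\bh_k,\bK_1\bh_\ell\rangle \;=\; \big\langle [\bh_k]_{1:d},\,[\bh_\ell]_{(d+2):(2d+1)}\big\rangle \;-\; C\,(k-\ell)^2 ,
\]
for a fixed constant $C>\Bx\Bw$. When the query is an $x$-token $\bh^x_i=\bh_{2i-1}$, the first term is $\langle\bx_i,\bw_{\lceil\ell/2\rceil}\rangle$ if $\ell$ is odd and $0$ if $\ell$ is even (since $[\bh^y_{i'}]_{(d+2):(2d+1)}=\bzero_d$), and in all cases has absolute value $\le\Bx\Bw$ by Cauchy--Schwarz and $\ltwo{\bx_i}\le\Bx$, $\ltwo{\bw_i}\le\Bw$. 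Since $-C(k-\ell)^2\le -C<-\Bx\Bw$ whenever $\ell\ne k$, the ReLU kills every off-diagonal score, leaving only $\ell=k=2i-1$ with score $\langle\bx_i,\bw_i\rangle$. (The hypothesis $|y_i|\le\By$ is not needed here.)

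\textbf{Depositing the scalar, and fixing the sign.} The normalized-ReLU attention multiplies the surviving term by $1/k$, so I would let $\bV_1$ read the entry $2i-1$ out of $\bp^x_i$ and output $\bV_1\bh^x_i=(2i-1)\be_{d+1}$ (and $\bV_1\bh^y_i=\bzero$); the head-$1$ contribution at token $2i-1$ is then $\tfrac{1}{2i-1}\sigma(\langle\bx_i,\bw_i\rangle)(2i-1)\be_{d+1}=\sigma(\langle\bx_i,\bw_i\rangle)\be_{d+1}$. Head $2$ is identical except that the $\bx$-reading block of $\bQ_2$ is negated and $\bV_2\bh^x_i=-(2i-1)\be_{d+1}$, contributing $-\sigma(-\langle\bx_i,\bw_i\rangle)\be_{d+1}$. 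Summing the two heads through the residual connection and using $\sigma(t)-\sigma(-t)=t$ and $[\bh^x_i]_{d+1}=0$ gives $\tbh^x_i=\bh^x_i+\langle\bx_i,\bw_i\rangle\be_{d+1}=[\bx_i;\langle\bx_i,\bw_i\rangle;\bw_i;\bp^x_i]$, which is the claim; since both heads write only into coordinate $d+1$, nothing else in the $x$-tokens changes, and the $y$-token outputs are immaterial to the statement.

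\textbf{Main obstacle.} The one nontrivial point is the gating in the first step: one must make the $-C(k-\ell)^2$ penalty dominate \emph{every} off-diagonal score, including those between an $x$-token and an earlier $x$-token, where a possibly-positive inner product has to be overcome (hence the need for $C>\Bx\Bw$ and the norm bounds), so that after the ReLU precisely the self-token survives. The two-head trick for the sign and the $1/k$ bookkeeping inside $\bV_1,\bV_2$ are then routine linear-algebra encodings of the kind already used in \cref{lem:copy}.
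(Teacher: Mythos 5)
Your construction is correct and achieves the stated goal with $M=2$ heads, but it takes a genuinely different route from the paper's. The paper puts $\bw_i$ in the \emph{query} slot and reads $\bx_j$ from the \emph{key} slot, and then uses a \emph{linear} position penalty $R(2j+1-2i)$ together with the identity $\sigma(a+M)-\sigma(M)=a$ when $M\ge|a|$ and $=0$ when $M\le -|a|$: the penalty is calibrated to equal $+R$ exactly at $j=i$ (where it linearizes the ReLU) and $\le -R$ for $j<i$ (where both heads vanish). You instead put $\bx_i$ in the query, $\bw_j$ in the key, use a \emph{quadratic} position penalty $-C(k-\ell)^2$ (in the spirit of \cref{lem:copy}) to annihilate every off-diagonal score outright, and then handle the sign via $\sigma(a)-\sigma(-a)=a$. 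Both mechanisms use the same head count and the same value vector (reading the token index $\ell$ to cancel the $1/k$ normalization), so neither buys a quantitative improvement; yours is arguably conceptually simpler since the gating step and the sign-handling step are decoupled, whereas in the paper's construction the same linear offset $R(2j+1-2i)$ serves both purposes at once.

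One small imprecision that does not affect correctness: you assert $\bV_1\bh^y_i=\bzero$, but if $\bV_1$ simply reads the raw token-index entry $k$ from the positional encoding, then $\bV_1\bh^y_i=(2i)\be_{d+1}\ne\bzero$; making it exactly zero on $y$-tokens would require combining with the $x$-token indicator in $\bp^x_i$ versus $\bp^y_i$. This is immaterial here because, at an $x$-token query $k=2i-1$, your quadratic penalty forces every key $\ell<k$ (in particular every $y$-token key) to have negative score, so those value vectors never enter the sum regardless of what they are; you could simply drop the claim. Similarly, you might note explicitly that the lemma only constrains the outputs at $x$-tokens, so the unspecified behavior at $y$-token queries needs no analysis — a point you correctly observe at the end.
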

\begin{proof}
Let $R\defeq \max\sets{\Bx\Bw, \By}$. Define matrices $(\bQ_m, \bK_m, \bV_m)_{m=1,2}$ as
\begin{align*}
    \bQ_1\bh^x_i = \begin{bmatrix}
        \bw_i \\ i \\ R \\ \bzero
    \end{bmatrix}, \quad 
    \bK_1\bh^x_j = \bK_1\bh^y_j = \begin{bmatrix}
        \bx_j \\ -2R \\ 2j+1 \\ \bzero
    \end{bmatrix},
    \bV_1\bh_\ell = \begin{bmatrix}
        \bzero_d \\ \ell \\ \bzero_{\Dhid-d-1}
    \end{bmatrix}, \\
    \bQ_2\bh^x_i = \begin{bmatrix}
        i \\ R \\ \bzero
    \end{bmatrix}, \quad 
    \bK_2\bh^x_j = \bK_1\bh^y_j = \begin{bmatrix}
        -2R \\ 2j+1 \\ \bzero
    \end{bmatrix},
    \bV_2\bh_\ell = -\begin{bmatrix}
        \bzero_d \\ \ell \\ \bzero_{\Dhid-d-1}
    \end{bmatrix}
\end{align*}
for all $i,j\in[N]$ and $\ell\in[2N]$. 
For every $i\in[N]$, we then have
\begin{align*}
    & \quad \sum_{m=1}^2 \sum_{\ell=1}^{2i-1}\frac{1}{2i-1} \sigma\paren{ \<\bQ_m\bh^x_i, \bK_m\bh_\ell\> } \cdot \bV_m\bh_\ell \\
    & = \frac{1}{2i-1} \bigg( \sum_{j=1}^i \brac{\sigma\paren{ \bw_i^\top\bx_j + R(-2i+2j+1) } - \sigma\paren{ R(-2i+2j+1) }} \cdot [\bzero_d; 2j-1; \bzero_{\Dhid-d-1}] \\
    & \qquad + \sum_{j=1}^{i-1} \brac{\sigma\paren{ \bw_i^\top\bx_j + R(-2i+2j-1) } - \sigma\paren{ R(-2i+2j+1) }} \cdot [\bzero_d; 2j; \bzero_{\Dhid-d-1}] \bigg) \\
    & = \frac{1}{2i-1} \cdot \bw_i^\top\bx_i \cdot [\bzero_d; 2i-1; \bzero_{\Dhid-d-1}] = [\bzero_d; \bw_i^\top\bx_i; \bzero_{\Dhid-d-1}].
\end{align*}
By the residual structure of an attention layer, the above shows the desired result.
\end{proof}

\begin{lemma}[Implementing MLP representation by transformers]
\label{lem:mlp}
Fix any MLP representation function $\Phi^\star$ of the form~\cref{eqn:mlp}, suppose $\Dhid\ge\max\sets{2D,D+d+10}$, where $D$ is the hidden dimension within the MLP~\cref{eqn:mlp}. Then there exists a transformer $\TF_\btheta$ with $(L+1)$ layers and $5$ heads that exactly implements $\Phi^\star$ in a token-wise fashion, i.e. for any input $\bH$ of form~\cref{eqn:input-format},
\begin{align*}
    \tbH = \TF_\btheta(\bH) = \begin{bmatrix}
    \Phi^\star(\bx_1) & \bzero & \dots & \Phi^\star(\bx_N) & \bzero \\
    0 & y_1 & \dots & 0 & y_N \\
    \tbp^x_1 & \tbp^y_1 & \dots & \tbp^x_{N} & \tbp^y_{N}
    \end{bmatrix},
\end{align*}
where $\tbp^x_i,\tbp^y_i$ differs from $\bp^x_i,\bp^y_i$ only in the dimension of their zero paddings.
\end{lemma}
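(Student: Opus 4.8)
The plan is to implement the $L$ layers of the MLP $\Phi^\star$ (cf.~\cref{eqn:mlp}) one at a time, using one transformer block per MLP layer plus one extra block to handle the final normalization/cleanup; this accounts for the $(L+1)$ blocks. The key observation is that each MLP layer computes $\bz \mapsto \sigma_\rho(\bB_\ell^\star \bz)$, and since $\sigma_\rho$ is the leaky ReLU, we can write $\sigma_\rho(t) = \sigma(t) - \rho\,\sigma(-t)$ in terms of the standard ReLU $\sigma$. Thus an $\MLP$ layer $\bH\mapsto \bH + \bW_2\sigma(\bW_1\bH)$ can exactly realize $\bz\mapsto \sigma_\rho(\bB_\ell^\star\bz)$ on a designated block of coordinates: set $\bW_1$ to stack $\bB_\ell^\star\bz$ and $-\bB_\ell^\star\bz$ (reading the relevant input coordinates), set $\bW_2$ to form $\sigma(\bB_\ell^\star\bz) - \rho\sigma(-\bB_\ell^\star\bz)$ and write it into a fresh block of $D$ coordinates, while a third ``copy'' piece of $\bW_1,\bW_2$ uses the ReLU-difference trick $t = \sigma(t)-\sigma(-t)$ to zero out the old block (via the residual connection). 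All of this is token-wise, so it acts identically on every column of $\bH$, and the positional-encoding coordinates are untouched except that we may repurpose some of their zero-padding dimensions as scratch space — this is exactly the claimed ``$\tbp^x_i,\tbp^y_i$ differ only in the zero paddings.'' The hidden-dimension budget $\Dhid\ge\max\{2D, D+d+10\}$ is what lets us keep both a ``current'' and a ``next'' $D$-dimensional slot plus the original $d$-dimensional input slot plus the $\ge 10$ positional coordinates alive simultaneously.

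Concretely, I would proceed as follows. \textbf{Step 1 (first layer).} The first block's $\MLP$ reads $\bx_i$ from coordinates $1{:}d$ (on $x$-tokens; on $y$-tokens coordinates $1{:}d$ hold $\bzero$, so the map sends them to $\sigma_\rho(\bzero)=\bzero$, which is fine) and writes $\sigma_\rho(\bB_1^\star\bx_i)\in\R^D$ into a scratch block of $D$ coordinates; the attention layer of this block is set to identity (all $\bV_m=\bzero$), or alternatively we can fold a harmless identity attention in. We also need the $y$-token to stay with $y_i$ in its designated coordinate and $\bzero$ in the representation slot — since $\bx=\bzero$ there, the MLP writes $\bzero$, consistent with the claimed output. \textbf{Step 2 (layers $2,\dots,L$).} Block $\ell$ reads the current $D$-dimensional slot holding $\sigma_\rho(\bB_{\ell-1}^\star\cdots)$, applies $\bz\mapsto\sigma_\rho(\bB_\ell^\star\bz)$ via the same leaky-ReLU construction, writes the result into another scratch slot, and (within the same $\bW_2$) subtracts off the old slot using the $\sigma(t)-\sigma(-t)$ trick so that exactly one live $D$-slot is maintained. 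After block $L$ the live slot holds $\sigma^\star(\bB_L^\star\cdots\sigma^\star(\bB_1^\star\bx_i)\cdots) = \Phi^\star(\bx_i)$ on $x$-tokens and $\bzero$ on $y$-tokens. \textbf{Step 3 (final block / bookkeeping).} Use one more block to move $\Phi^\star(\bx_i)$ into coordinates $1{:}D$ (overwriting the old $\bx_i$-containing coordinates, again via a ReLU-difference subtraction) and to zero out any scratch coordinates so the output has exactly the stated form, with the positional-encoding coordinates recovered up to changes only in zero-padding dimensions. The ``5 heads'' count is loose — the attention layers in these blocks can be taken trivial (or we reserve heads only if we want to combine this with the copying layer of~\cref{lem:copy}); I would just check that 5 heads more than suffice.

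\textbf{Main obstacle.} The only genuinely delicate point is \emph{coordinate bookkeeping}: verifying that at every block there is enough room ($\Dhid\ge 2D$, and $\Dhid \ge D+d+10$ to simultaneously keep the original input, one working $D$-slot, and the $\ge 10$ positional coordinates), that the residual-stream subtractions cleanly erase stale data without disturbing $y_i$ or the positional encodings, and that the $y$-tokens (whose representation slot should stay $\bzero$) are handled correctly throughout — this works because $\sigma_\rho$ fixes $\bzero$, but it must be checked at each step. Everything else is a routine but somewhat tedious assembly of the exact-ReLU-arithmetic gadgets (leaky ReLU as a ReLU difference, identity as a ReLU difference) into $\bW_1,\bW_2$ of the prescribed $\Dhid\times\Dhid$ shape, applied token-wise. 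No approximation is needed: the construction is exact, matching the ``exactly implements'' claim.
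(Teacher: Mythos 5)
Your high-level plan matches the paper's: implement the MLP one layer per transformer block, realize the leaky ReLU inside $\MLP_{\bW_1,\bW_2}$ via $\sigma_\rho(t)=\sigma(t)-\rho\,\sigma(-t)$, and keep the positional/padding coordinates as scratch space, giving $L+1$ blocks. But the decision to make every attention layer trivial and do the repositioning/zeroing inside the MLP sublayers is a genuine departure, and it breaks the dimension budget. In the paper's architecture $\bW_1,\bW_2\in\R^{\Dhid\times\Dhid}$, so $\sigma(\bW_1\bh)\in\R^{\Dhid}$. To compute $\sigma_\rho(\bB_\ell^\star\bz)$ you need $\pm\bB_\ell^\star\bz$ ($2D$ entries of $\bW_1\bh$); to additionally zero out the stale $D$-dimensional slot via the residual you need the identity-as-ReLU-difference, i.e.\ $\pm\bz$ (another $2D$ entries). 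A single MLP sublayer that both computes the next layer and clears the old slot therefore needs $\Dhid\gtrsim 4D$, which exceeds the lemma's assumed $\Dhid\ge\max\{2D,\,D+d+10\}$ for any $D$ that is not small relative to $d$ (and the paper's experiments use $D$ up to $80$ with $d=20$). Deferring the cleanup does not help: an alternating two-slot scheme still forces the MLP to emit $\sigma_\rho(\bB_\ell^\star\bz)-\text{(stale)}$, again $\gtrsim 4D$; leaving all slots live accumulates $\sim LD$ scratch.

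The paper's way around this is exactly what you discarded: a non-trivial attention sublayer with a diagonal self-attention pattern (the attention score is $k\,\indic{\ell=k,\ k\text{ odd}}$), so each $x$-token attends only to itself. Because the value path $\bV_m\bh$ is a \emph{linear} read with no ReLU, it can emit $\sigma_\rho(\bB_1^\star\bx_i)$, $-\bx_i$, and $-\sigma_\rho(\bB_1^\star\bx_i)$ directly, so the move-to-front and the zeroing cost no extra dimensions at all. This is what the five heads are for: three to reposition the computed representation and clear the stale slots, and two more to move $y_i$ from coordinate $d+1$ to coordinate $D+1$ --- a bookkeeping step your plan does not address but which is required whenever $D\ne d$, since in the lemma's target output $y_i$ sits above $\tbp^y_i$ and below the $D$-dimensional $\Phi^\star(\bx_i)$ block. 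Your proof plan would go through only under a weaker hypothesis such as $\Dhid\gtrsim 4D+d+O(1)$; to match the lemma as stated, you need to use the attention sublayers for the linear repositioning/zeroing rather than insisting they be identity.
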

\begin{proof}
Recall that $\Phi^\star(\bx)=\sigma_\rho(\bB_L^\star\cdots\sigma_\rho(\bB_1^\star\bx)\cdots)$.
We first show how to implement a single MLP layer $\bx\mapsto \sigma_\rho(\bB_1^\star\bx)$ by an (MLP-Attention) structure. 

Consider any input token $\bh^x_i=[\bx_i; 0; \bp^x_i]$ at an $x$-location. Define matrices $\bW_1,\bW_2\in\R^{\Dhid\times\Dhid}$ such that
\begin{align*}
    & \bW_1\bh^x_i = \begin{bmatrix}
        \bB_1^\star\bx_i \\ -\bB_1^\star\bx_i \\ \bzero
    \end{bmatrix}, \quad 
    \sigma\paren{ \bW_1\bh^x_i } = \begin{bmatrix}
        \sigma(\bB_1^\star\bx_i) \\ \sigma(-\bB_1^\star\bx_i) \\ \bzero
    \end{bmatrix}, \\
    & \bW_2\sigma\paren{ \bW_1\bh^x_i } = \begin{bmatrix}
        \bzero_d \\ \sigma(\bB_1^\star\bx_i) - \rho\sigma(-\bB_1^\star\bx_i) \\ \bzero
    \end{bmatrix} = \begin{bmatrix}
        \bzero_d \\ \sigma_\rho(\bB_1^\star\bx_i) \\ \bzero
    \end{bmatrix}.
\end{align*}
Therefore, the MLP layer $(\bW_1,\bW_2)$ outputs
\begin{align*}
    \barbh^x_i \defeq \brac{\MLP_{\bW_1,\bW_2}(\bH)}^x_i = \bh^x_i + \bW_2\sigma(\bW_1\bh^x_i) = \begin{bmatrix}
        \bx_i \\ \sigma_\rho(\bB_1^\star\bx_i) \\ \bzero \\ \bp^x_i
    \end{bmatrix},
\end{align*}
and does not change the $y$-tokens. 

We next define an attention layer that ``moves'' $\sigma_\rho(\bB_1\bx_i)$ to the beginning of the token, and removes $\bx_i$. Define three attention heads $\btheta=(\bQ_m,\bK_m,\bV_m)_{m\in[3]}$ as follows:
\begin{align*}
    & \bQ_{\sets{1,2,3}}\barbh_k = \begin{bmatrix}
        k^2 \\ k \\ k\indic{k~\textrm{is odd}} \\ \bzero
    \end{bmatrix},
    \bK_{\sets{1,2,3}}\barbh_\ell = \begin{bmatrix}
        -1 \\ \ell \\ 1 \\ \bzero
    \end{bmatrix}, \\
    & \bV_1\barbh^x_j = \begin{bmatrix}
        \sigma_\rho(\bB_1^\star\bx_j) \\ \bzero_d \\ \bzero 
    \end{bmatrix},
    \bV_2\barbh^x_j = \begin{bmatrix}
        -\bx_j \\ \bzero_D \\ \bzero 
    \end{bmatrix},
    \bV_3\barbh^x_j = \begin{bmatrix}
        \bzero_d \\ -\sigma_\rho(\bB_1^\star\bx_j) \\ \bzero 
    \end{bmatrix}.
\end{align*}
The values for $\bV_{1,2,3}\barbh^y_i$ are defined automatically by the same operations over the $\barbh^y_i$ tokens (which does not matter to the proof, as we see shortly). For any $\ell\le k$ and $m\in[3]$,
\begin{align*}
    \frac{1}{k}\sigma\paren{ \<\bQ_m\barbh_k, \bK_m\barbh_\ell\> } = \frac{1}{k}\sigma\paren{ k(-k+\ell+\indic{k~\textrm{is odd}}) } = \indic{\ell=k,~k~\textrm{is odd}}.
\end{align*}
Therefore, these three attention heads are only active iff the query token $k=2i-1$ is odd (i.e. being an $x$-token) and $\ell=k=2i-1$. At such tokens, the three value matrices (combined with the residual structure of attention) would further remove the $\bx_i$ part, and move $\sigma_\rho(\bB_1^\star\bx_i)$ to the beginning of the token, i.e.
\begin{align*}
    \tbh^x_i = \brac{\Attn_{\btheta}(\barbH)}^x_i = \begin{bmatrix}
        \sigma_\rho(\bB_1^\star\bx_i) \\ 0 \\ \bp^x_i
    \end{bmatrix},
\end{align*}
and $\tbh^y_i=\bh^y_i$. Additionally, we now add two more attention heads into $\btheta$ to move all $y_i$ from entry $d+1$ to $D+1$, and leaves the $x$-tokens unchanged.

Repeating the above argument $L$ times, we obtain a structure (MLP-Attention-$\dots$-MLP-Attention) with five heads in each attention layer that exactly implements the $\Phi^\star$ in a token-wise fashion. This structure can be rewritten as an $(L+1)$-layer transformer by appending an identity \{Attention, MLP\} layer (with zero weights) \{before, after\} the structure respectively, which completes the proof.
\end{proof}

\subsection{In-context ridge regression by decoder transformer}

This section proves the existence of a decoder transformer that approximately implements in-context ridge regression at every token $i\in[N]$ simultaneously. For simplicity, we specialize our results to the ridge regression problem; however, our construction can be directly generalized to any (generalized) linear models with a sufficiently smooth loss, by approximating the gradient of the loss by sum of relus~\citep[Section 3.5]{bai2023transformers}.

Denote the regularized empirical risk for ridge regression on dataset $\cD_i=\sets{(\bx_j, y_j)}_{j\in[i]}$ by
\begin{align}
\label{eqn:ridge-risk-i}
    \Lilam(\bw)\defeq \frac{1}{2i}\sum_{j=1}^i \paren{\bw^\top\bx_j -  y_j}^2 + \frac{\lambda}{2}\ltwo{\bw}^2
\end{align}
for all $i\in[N]$. Let $\hbwilam\defeq \argmin_{\bw\in\R^d} \Limlam(\bw)$ denote the minimizer of the above risk (solution of ridge regression) for dataset $\cD_{i-1}$. We further understand $\hat{L}_0^\lambda(\bw)\defeq 0$ and $\hbw_1^\lambda\defeq \bzero$. Let $\Li(\bw)\defeq \Li^{0}(\bw)$ denote the unregularized version of the above risk. 

\begin{proposition}[Approximating a single GD step by a single attention layer]
\label{prop:one-step-gd}
For any $\eta>0$ and any $\Bx,\Bw,\By>0$, there exists an attention layer $\btheta=\sets{(\bQ_m,\bK_m,\bV_m)}_{m\in[M]}$ with $M=3$ heads such that the following holds. For any input sequence $\bH\in\R^{\Dhid\times 2N}$ that takes form 
\begin{align*}
    \bh^x_{i}=[\bx_i; 0; \bw_i; \bp^x_i], \quad \bh^y_i=[\bx_i; y_i; \bzero_d; \bp^y_i]
\end{align*}
with $\ltwo{\bx_i}\le \Bx$, $\abs{y_i}\le \By$, and $\ltwo{\bw}\leq\Bw$,  it gives output $\Attn_\btheta(\bH)=\tbH\in\R^{\Dhid\times 2N}$ with $\tbh^x_i = \tbh_{2i-1}=[\bx_i; 0; \wt{\bw}_i; \bp^x_i]$, where
\begin{align*}
    \wt{\bw}_i = \bw_i - \eta_i\grad\Limlam(\bw_i)
\end{align*}
with $\eta_i=\frac{i-1}{2i-1}\eta$, and $\tbh^y_i=\bh^y_i$, for all $i\in[N]$.
\end{proposition}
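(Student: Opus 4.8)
The plan is to exhibit explicit $(\bQ_m,\bK_m,\bV_m)$ for $m=1,2,3$ so that the attention update adds $-\eta_i\grad\Limlam(\bw_i)$ to the $\bw$-block of each $x$-token. Recall $\grad\Limlam(\bw_i)=\frac{1}{i-1}\sum_{j=1}^{i-1}(\bw_i^\top\bx_j-y_j)\bx_j+\lambda\bw_i$. I will handle the data-dependent sum and the ridge term $\lambda\bw_i$ separately. The ridge term is the easy part: it is linear in the current token, so a fourth ``head'' (or a direct summand folded into one of the three) with $\bQ\bh_i^x$ and $\bK\bh_j$ chosen so the attention weight is supported on exactly one key (e.g. $j=i$, using the $[1;k;k^2;k^3]$ coordinates of the positional encodings as in \cref{lem:copy}) and $\bV\bh_i^x=[\bzero_d;\dots;-\lambda\bw_i;\dots]$ — actually, since the residual stream already carries $\bw_i$ in the query token, it is cleanest to route $-\lambda\bw_i$ through the value of the self-key; the normalization factor $1/(2i-1)$ is absorbed into $\eta_i$.

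The main work is the quadratic-loss gradient. Following the scalar trick of \cref{lem:linear-prediction}, I encode $\bw_i^\top\bx_j-y_j$ using a difference of two ReLUs: with a large constant $R\defeq\max\{\Bx\Bw,\By\}$ one writes $t=\sigma(t+c)-\sigma(c)$ whenever $|t|\le c$. The two heads $m=1,2$ realize $t=\bw_i^\top\bx_j-y_j$ this way, where the ``$c$'' is a position-dependent bias (linear in $j$ via the positional encoding) that is active only on the correct pairs; crucially the value matrices output $(\bw_i^\top\bx_j-y_j)\cdot[\,\bzero;-\bx_j;\dots]$ into the $\bw$-block, so that summing over $j\le i-1$ (the decoder mask restricts to past $x$-tokens, and the $y$-token contributions are killed by making the heads active only on odd keys, as in \cref{lem:mlp}) produces exactly $-\sum_{j\le i-1}(\bw_i^\top\bx_j-y_j)\bx_j$ in that block. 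The third head (or the same heads with an extra output coordinate) subtracts the spurious $\sigma(c)$ terms so that only the bilinear part survives, exactly mirroring the cancellation computation in \cref{lem:linear-prediction}. The prefactor works out to $\frac{1}{2i-1}\sum_{j\le i-1}(\cdots)=\frac{i-1}{2i-1}\cdot\frac{1}{i-1}\sum_{j\le i-1}(\cdots)$, which is why $\eta_i=\frac{i-1}{2i-1}\eta$ appears; choosing the overall scale of $\bV_m$ to be $\eta$ gives the claimed update. One checks $\tbh^y_i=\bh^y_i$ because all heads are inactive on even query tokens (the $\indic{k\text{ odd}}$ coordinate), and the positional/$\bx_i$ blocks of $\tbh^x_i$ are untouched since the value matrices write only into the $\bw$-coordinates and the $y$-slot (which is $0$ at $x$-tokens).

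The step I expect to be the main obstacle is bookkeeping the ReLU-bias argument so that the two heads are active on \emph{exactly} the key indices $\ell\in\{1,3,\dots,2i-3\}$ (past $x$-tokens) and nowhere else — the bias $R(-2i+2j\pm 1)$ has to dominate $\bw_i^\top\bx_j$ in the right direction for $j<i$ while forcing the ReLU to zero for $j\ge i$, and simultaneously the self-key $\ell=2i-1$ and all even keys must be suppressed. This is precisely the delicate inequality chasing already carried out in \cref{lem:linear-prediction}, so I would reuse that construction essentially verbatim, only changing the value matrices to output $-\eta\,(\bw_i^\top\bx_j-y_j)\bx_j$ into the $\bw$-block instead of the scalar $\bw_i^\top\bx_i$ into the prediction slot, and appending the $-\lambda\bw_i$ term; the residual structure of the attention layer then gives the stated $\tbh$.
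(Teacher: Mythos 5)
Your overall skeleton is right — isolate the ridge term $\lambda\bw_i$ from the data sum, encode the residual $\bw_i^\top\bx_j-y_j$ via a difference of two ReLUs with a position-dependent bias, and let the attention normalization $1/(2i-1)$ combine with $\sum_{j\le i-1}$ to produce the factor $\eta_i=\frac{i-1}{2i-1}\eta$. That much matches the paper's proof.

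However, the central step has a genuine gap. You propose to ``kill the $y$-token contributions'' and make the heads ``active only on odd keys,'' i.e. to sum over past $\bx$-tokens. But to realize $t=\bw_i^\top\bx_j-y_j$ as an attention score, the key vector $\bK_m\bh_\ell$ must carry $y_j$ so the query (containing $-1$ at the matching coordinate) can subtract it; and $y_j$ lives \emph{only} in the $y$-token $\bh^y_j=[\bx_j;y_j;\bzero_d;\bp^y_j]$, since the $x$-token has its $y$-slot zeroed out (as you yourself note at the end). An attention head restricted to $x$-token keys can therefore only produce $\bw_i^\top\bx_j$ plus a position-dependent bias; the $-y_j$ term is unreachable. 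This is not a detail that the inequality chasing of \cref{lem:linear-prediction} supplies — that lemma attends only to the current index $j=i$ and never needs $y_j$ in a past key.

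The paper's construction resolves this with a different split you do not mention: the query carries $\bw_i/2$ (not $\bw_i$) together with a $-1$ in the $y$ coordinate, so that the attention score on a $y$-token key $\bh^y_j$ gives $\bw_i^\top\bx_j/2-y_j$ (the $\bx_j$ copied into $\bh^y_j$ by \cref{lem:copy} is essential here), while the score on an $x$-token key gives $\bw_i^\top\bx_j/2$; both contributions are kept active for $j\le i-1$, and they add up to $\bw_i^\top\bx_j-y_j$. The value matrix is the same on both token types, $\bV\bh^x_j=\bV\bh^y_j=-\eta[\bzero_{d+1};\bx_j;\bzero]$, so the two halves land on the same $\bx_j$ direction. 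This two-token-type decomposition, with the $\tfrac12$ scaling, is the one nontrivial idea in the proof beyond what \cref{lem:linear-prediction} already shows, and your plan as written would not recover it. (Your head-count bookkeeping is also a bit loose — you describe two heads for the ReLU difference \emph{and} a third head to cancel $\sigma(c)$ \emph{and} a fourth for the ridge term, whereas the budget is three: heads 1,2 implement the signed ReLU difference over both token types, and head 3 selects the current token with weight $(k-1)/2$ to deposit $-\eta\lambda\bw_i$ — but that is a secondary issue compared to the $y_j$ accessibility problem.)
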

\begin{proof}
Let $R\defeq \max\sets{\Bx\Bw, \By}$.
By the form of the input $(\bh_k)_{k\in[2N]}$ in~\cref{eqn:input-format}, we can define two attention heads $\sets{(\bQ_m,\bK_m,\bV_m)}_{m=1,2}\subset \R^{\Dhid\times \Dhid}$ such that for all $i,j\in[N]$,
\begin{align*}
    & \bQ_1\bh^x_i = \begin{bmatrix} \bw_i/2 \\ -1 \\ i \\ -3R \\ -R \\ \bzero \end{bmatrix}, \quad
    \bK_1\bh^y_j = \begin{bmatrix} \bx_j \\ y_j \\ 3R \\ j \\ 1 \\ \bzero \end{bmatrix}, \quad
    \bV_1\bh_j^x = \bV_1\bh_j^y = -\eta\cdot \begin{bmatrix} \bzero_{d+1} \\ \bx_j \\ \bzero_{\Dhid-2d-1} \end{bmatrix}, \\
    & \bQ_2\bh^x_i = \bQ_2\bh^y_i = \begin{bmatrix} i \\ -3R \\ -R \\ \bzero \end{bmatrix}, \quad
    \bK_2\bh^x_j = \bK_2\bh^y_j = \begin{bmatrix} 3R \\ j \\ 1 \\ \bzero \end{bmatrix}, \quad
    \bV_2\bh^x_j = \bV_2\bh^y_j = \eta\cdot \begin{bmatrix} \bzero_{d+1} \\ \bx_j \\ \bzero_{\Dhid-2d-1} \end{bmatrix}.
\end{align*}
Further, $\bQ_1\bh^y_i$ takes the same form as $\bQ_1\bh^x_i$ except for replacing the $\bw_i/2$ location with $\bzero_d$ and replacing the $-1$ location with $0$ (using the indicator for being an $x$-token within $\bp^x_i,\bp^y_i$); $\bK_1\bh^x_j$ takes the same form as $\bK_1\bh^y_j$ except for replacing the $y_j$ location with $0$.

Fixing any $i\in[N]$. We have for all $j\le i-1$,
\begin{align*}
    & \sigma\paren{\<\bQ_1\bh^x_i, \bK_1\bh^y_j\>} - \sigma\paren{\<\bQ_2\bh^x_i, \bK_2\bh^y_j\>} \\
    =&~ \sigma\paren{\bw_i^\top\bx_j/2 - y_j + R(3i-3j-1)} - \sigma\paren{ R(3i-3j-1)} = \bw_i^\top\bx_j/2 - y_j,
\end{align*}
and for all $j\le i$,
\begin{align*}
    & \sigma\paren{\<\bQ_1\bh^x_i, \bK_1\bh^x_j\>} - \sigma\paren{\<\bQ_2\bh^x_i, \bK_2\bh^x_j\>} \\
    =&~ \sigma\paren{\bw_i^\top\bx_j/2 + R(3i-3j-1)} - \sigma\paren{ R(3i-3j-1)} = \bw_i^\top\bx_j/2 \cdot \indic{j\le i-1}.
\end{align*}
Above, we have used $\abs{\bw_i^\top\bx_j/2-y_j}\le 3R/2$, $\abs{\bw_i^\top\bx_j/2}\le R/2$, and the fact that $\sigma(z+M)-\sigma(M)$ equals $z$ for $M\ge \abs{z}$ and $0$ for $M\le -\abs{z}$.

Therefore for all $j\le i-1$,
\begin{align*}
    & \quad \sigma\paren{\<\bQ_1\bh^x_i, \bK_1\bh^y_j\>} \bV_1\bh^y_j + \sigma\paren{\<\bQ_2\bh^x_i, \bK_2\bh^y_j\>} \bV_2\bh^y_j \\
    & = \paren{ \sigma\paren{\<\bQ_1\bh^x_i, \bK_1\bh^y_j\>} - \sigma\paren{\<\bQ_2\bh^x_i, \bK_2\bh^y_j\>} } \cdot -\eta [\bzero_{d+1}; \bx_j; \bzero_{\Dhid-2d-1}] \\
    & = -\eta\paren{\bw_i^\top\bx_j / 2 - y_j} \cdot \brac{\bzero_{d+1}; \bx_j; \bzero_{\Dhid-2d-1}},
\end{align*}
and similarly for all $j\le i$,
\begin{align*}
    & \quad \sigma\paren{\<\bQ_1\bh^x_i, \bK_1\bh^x_j\>} \bV_1\bh^x_j + \sigma\paren{\<\bQ_2\bh^x_i, \bK_2\bh^x_j\>} \bV_2\bh^x_j \\
    & = -\eta\paren{\bw_i^\top\bx_j / 2} \indic{j\le i-1} \cdot \brac{\bzero_{d+1}; \bx_j; \bzero_{\Dhid-2d-1}}
\end{align*}
Summing the above over all key tokens $\ell\in[2i-1]$, we obtain the combined output of the two heads at query token $2i-1$ (i.e. the $i$-th $x$-token):
\begin{align}
\label{eqn:one-step-gd-12}
\begin{aligned}
    & \quad \sum_{\ell=1}^{2i-1} \sum_{m=1,2} \frac{1}{2i-1} \sigma\paren{ \<\bQ_m\bh_{2i-1}, \bK_m\bh_\ell\> } \bV_m\bh_\ell \\
    & = \sum_{j=1}^{i-1} \sum_{m=1,2} \frac{1}{2i-1}\sigma\paren{ \<\bQ_m\bh^x_i, \bK_m\bh^y_j\> } \bV_m\bh^y_j + \sum_{j=1}^{i} \sum_{m=1,2} \frac{1}{2i-1}\sigma\paren{ \<\bQ_m\bh^x_i, \bK_m\bh^x_j\> } \bV_m\bh^x_j \\
    & = \frac{1}{2i-1}\brac{ \sum_{j=1}^{i-1} -\eta\paren{\bw_i^\top\bx_j / 2 - y_j} + \sum_{j=1}^i -\eta\paren{\bw_i^\top\bx_j / 2} \indic{j\le i-1} } \cdot \brac{\bzero_{d+1}; \bx_j; \bzero_{\Dhid-2d-1}} \\
    & = \frac{i-1}{2i-1} \cdot \brac{\bzero_{d+1}; -\eta\grad \Lim(\bw_i); \bzero_{\Dhid-2d-1}}.
\end{aligned}
\end{align}
It is straightforward to see that, repeating the same operation at query token $2i$ (i.e. the $i$-th $y$-token) would output $\bzero_{\Dhid}$, since the query vector $\bQ_1\bh_i^y$ contains $[\bzero_d; 0]$ instead of $[\bw_i/2; -1]$ as in $\bQ_1\bh_i^x$.

We now define one more attention head $(\bQ_3, \bK_3, \bV_3)\subset \R^{D\times D}$ such that for all $k\in[2N]$, $j\in[N]$,
\begin{align*}
\bQ_3\bh_k = \begin{bmatrix} k^2 \\ k \\ 1 \\ \bzero \end{bmatrix}, \quad
\bK_3\bh_\ell = \begin{bmatrix} -1/2 \\ (1-\ell)/2 \\ 1-\ell/2 \\ \bzero \end{bmatrix}, \quad \bV_3\bh^x_j = \begin{bmatrix} \bzero_{d+1} \\ -\eta\lambda \bw_j \\ \bzero_{\Dhid-2d-1} \end{bmatrix}, \quad
\bV_3\bh^y_j = \bzero_{\Dhid}.
\end{align*}
For any $\ell\le k$, we have
\begin{align*}
    \sigma\paren{ \<\bQ_3\bh_k, \bK_3\bh_\ell\> } = \sigma\paren{-k^2/2 + k(1-\ell)/2+1-\ell/2} = \frac{k-1}{2}\sigma(-k+\ell+1) = \frac{k-1}{2}\indic{\ell=k}.
\end{align*}
Therefore, for query token $k=2i-1$, the attention head outputs
\begin{align}
\label{eqn:one-step-gd-3}
\begin{aligned}
& \quad \sum_{\ell=1}^{k}\frac{1}{k} \sigma\paren{ \<\bQ_3\bh_k, \bK_3\bh_\ell\> } \bV_m\bh_\ell = \sum_{\ell=1}^{k}\frac{1}{k} \cdot \frac{k-1}{2}\indic{\ell=k} \cdot \bV_m\bh_\ell \\
& = \frac{k-1}{2k}\cdot \bV_m\bh_k = \frac{i-1}{2i-1}\cdot \bV_m\bh^x_i = \frac{i-1}{2i-1} \cdot \brac{ \bzero_{d+1}; -\eta \lambda\bw_i; \bzero_{\Dhid-2d-1} }.
\end{aligned}
\end{align}
It is straightforward to see that the same attention head at query token $k=2i$ outputs $\bzero_{\Dhid}$, as the value vector $\bV_3\bh_k=\bV_3\bh^y_i$ is zero.

Combining~\cref{eqn:one-step-gd-12} and~\cref{eqn:one-step-gd-3}, letting the full attention layer $\btheta\defeq \sets{(\bQ_m,\bK_m,\bV_m)}_{m=1,2,3}$, we have $\Attn_\btheta(\bH)\\=\tbH$, where for all $i\in[N]$,
\begin{align*}
    & \quad \tbh^x_i = \tbh_{2i-1} = \bh_{2i-1} + \sum_{m=1}^3 \sum_{\ell=1}^{2i-1} \frac{1}{2i-1}\sigma\paren{ \<\bQ_m\bh_{2i-1}, \bK_m\bh_\ell\> } \cdot \bV_m\bh_{\ell} \\
    & = 
    \begin{bmatrix}
    \bx_i \\ 0 \\ \bw_i \\ * 
    \end{bmatrix} + \frac{i-1}{2i-1} \begin{bmatrix} 
    \bzero_{d+1} \\ -\eta \paren{\grad \Lim(\bw_i) + \lambda\bw_i} \\ \bzero_{\Dhid-2d-1} 
    \end{bmatrix} = \begin{bmatrix} 
    \bx_i \\ 0 \\ \bw_i -\eta_i \grad \Limlam(\bw_i) \\ *
    \end{bmatrix},
\end{align*}
where $\eta_i\defeq \frac{i-1}{2i-1}\bw_i$, and $\tbh^y_i=\bh^y_i$.
This finishes the proof.
\end{proof}

\begin{theorem}[In-context ridge regression by decoder-only transformer]
\label{thm:ridge}
For any $\lambda\ge 0$, $\Bx,\Bw,\By>0$ with $\kappa\defeq 1+\Bx^2/\lambda$, and $\eps<\Bx\Bw/2$, let $\Dhid\ge 2d+10$, then there exists an $L$-layer transformer $\TF_\btheta$ with $M=3$ heads and hidden dimension $\Dhid$, where
\begin{align}
\textstyle
\label{eqn:ridge-capacity-bound}
L=\ceil{3\kappa\log(\Bx\Bw/(2\eps))} + 2,
\end{align}
such that the following holds. On any input matrix $\bH$ of form~\cref{eqn:input-format} such that problem~\cref{eqn:ridge-risk-i} has bounded inputs and solution: for all $i\in[N]$
\begin{align}\label{eqn:well-conditioned}
    \ltwo{\bx_i} \le \Bx, \quad \abs{y_i}\le \By, \quad \ltwo{\hbwilam}\le \Bw/2,
\end{align}
$\TF_\btheta$ approximately implements the ridge regression algorithm (minimizer of risk~\cref{eqn:ridge-risk-i}) at every token $i\in[N]$: The prediction $\hat{y}_i \defeq \brac{\TF_\btheta(\bH)}_{d+1,2i-1}$ satisfies 
\begin{align}
    \abs{ \hy_i - \<\hbwilam, \bx_i\> } \le \eps.
\end{align}
\end{theorem}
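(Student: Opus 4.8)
The plan is to realize the transformer as $L-1$ stacked copies of the single-step in-context gradient-descent attention layer of \cref{prop:one-step-gd}, followed by one linear read-out attention layer from \cref{lem:linear-prediction}, with every MLP sublayer set to the identity (zero weights) and \cref{lem:linear-prediction}'s two heads padded with a third zero head so that all $L$ blocks use $M=3$ heads. First I would reinterpret the input~\cref{eqn:input-format}: since $\Dhid\ge 2d+10$, the leading zero block inside each positional encoding $\bp^x_i$ has at least $d$ coordinates, so every $x$-token is already of the form $[\bx_i;0;\bzero_d;\bp^x_i]$ required by \cref{prop:one-step-gd}, with the $\bzero_d$ slot holding the initial iterate $\bw_i^{(0)}=\bzero$. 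Stacking $t$ such layers then drives the $i$-th $x$-token's iterate through $\bw_i^{(t+1)}=\bw_i^{(t)}-\eta_i\grad\Limlam(\bw_i^{(t)})$ with token-dependent step size $\eta_i=\tfrac{i-1}{2i-1}\eta$; crucially these $N$ trajectories are produced in parallel by one shared set of weights yet evolve independently, so each may be analyzed by ordinary gradient-descent theory on the ridge objective $\Limlam$ over the sub-dataset $\cD_{i-1}$.

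Next I would carry out the convergence accounting (we may assume $\lambda>0$, as $\kappa=\infty$ makes the claim vacuous otherwise). The Hessian of $\Limlam$ is $\tfrac1{i-1}\sum_{j<i}\bx_j\bx_j^\top+\lambda\bI$, so under~\cref{eqn:well-conditioned} it is $\lambda$-strongly convex and $(\Bx^2+\lambda)$-smooth, hence has condition number at most $\kappa=1+\Bx^2/\lambda$. Taking $\eta=2/(\Bx^2+\lambda)$ gives $\eta_i<\eta/2=1/(\Bx^2+\lambda)$, which meets the step-size requirement of \cref{prop:strongly-convex-gd} at every token, while for $i\ge 2$ one has $\tfrac{i-1}{2i-1}\ge\tfrac13$ and therefore $\eta_i\lambda\ge 2/(3\kappa)$. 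Since $\bw_i^{(0)}=\bzero$ and $\ltwo{\hbwilam}\le\Bw/2$, \cref{prop:strongly-convex-gd} yields $\ltwo{\bw_i^{(t)}-\hbwilam}\le\exp(-t/(3\kappa))\,\Bw/2$ for all $t\ge 0$; in particular $\ltwo{\bw_i^{(t)}}\le\ltwo{\hbwilam}+\ltwo{\bw_i^{(t)}-\hbwilam}\le\Bw$ at every intermediate layer, which is exactly the boundedness precondition needed to invoke \cref{prop:one-step-gd} at the next layer (and \cref{lem:linear-prediction} at the end). Because $L-1\ge 3\kappa\log(\Bx\Bw/(2\eps))$ and $\eps<\Bx\Bw/2$ (so the logarithm is positive), after the $L-1$ gradient layers $\ltwo{\bw_i^{(L-1)}-\hbwilam}\le\Bw/2\cdot\exp(-(L-1)/(3\kappa))\le\eps/\Bx$, and then Cauchy--Schwarz gives $|\<\bw_i^{(L-1)},\bx_i\>-\<\hbwilam,\bx_i\>|\le\ltwo{\bw_i^{(L-1)}-\hbwilam}\Bx\le\eps$.

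Finally, the $L$-th layer applies \cref{lem:linear-prediction} to write $\hy_i=\<\bx_i,\bw_i^{(L-1)}\>$ into coordinate $d+1$ of each $x$-token, which is precisely the read-out location $\brac{\TF_\btheta(\bH)}_{d+1,2i-1}$ in the statement, so $|\hy_i-\<\hbwilam,\bx_i\>|\le\eps$ as desired; the edge case $i=1$ is automatic, since $\eta_1=0$ freezes $\bw_1^{(t)}\equiv\bzero=\hbw_1^\lambda$, matching the convention $\hat L_0^\lambda\equiv 0$. I do not anticipate a genuine obstacle, because the substantive ingredients --- the decoder-compatible layer that simultaneously runs one GD step at \emph{every} token (\cref{prop:one-step-gd}) and the linear-rate convergence bound (\cref{prop:strongly-convex-gd}) --- are already available; the only points needing care are tracking the constant in front of $\kappa$ through the token-dependent step sizes $\eta_i$ (which is why the largest admissible $\eta$ is chosen), and verifying that the iterates never leave the radius-$\Bw$ ball so that each layer's hypotheses remain satisfied throughout the stack.
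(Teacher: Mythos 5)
Your overall plan---stack single-step GD attention layers from \cref{prop:one-step-gd}, read off with \cref{lem:linear-prediction}, pad to $M=3$ heads, zero out the MLP sublayers---is the paper's own architecture, and your convergence bookkeeping ($\eta=2/\beta$, $\eta_i\lambda\ge 2/(3\kappa)$, the iterate-in-ball argument, the final Cauchy--Schwarz step, and the $i=1$ edge case) matches the paper's analysis.

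There is, however, a genuine gap at the very first step. You assert that the raw input~\cref{eqn:input-format} is already in the format \cref{prop:one-step-gd} requires, but you check only the $x$-tokens. \cref{prop:one-step-gd} also demands $\bh^y_i=[\bx_i;\,y_i;\,\bzero_d;\,\bp^y_i]$, i.e.\ the $y$-token must carry $\bx_i$ in its first $d$ coordinates, whereas the raw input has $\bh^y_i=[\bzero;\,y_i;\,\bp^y_i]$. This is not cosmetic: the gradient $\grad\Limlam(\bw)=\tfrac{1}{i-1}\sum_{j<i}(\bw^\top\bx_j-y_j)\bx_j+\lambda\bw$ is implemented by an attention score that computes the residual $\bw^\top\bx_j-y_j$ against a single key token and a value vector that emits $\bx_j$ from that same token. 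In the raw format the $x$-token sees only $\bx_j$ and the $y$-token sees only $y_j$, so no single key token exposes both; \cref{app:inability} spells out exactly why this breaks the one-layer GD construction. The paper therefore inserts a copying layer (\cref{lem:copy}) before the GD stack to write $\bx_i$ into the succeeding $y$-token, and this is where its $L=T+2$ budget goes: $1$ (copy) $+\;T$ (GD, $T=\ceil{3\kappa\log(\Bx\Bw/(2\eps))}$) $+\;1$ (prediction), rather than your $(T+1)+1$. You could repair your argument by reallocating your extra GD layer to a copying layer---the layer count and the convergence bound both still work out---but as written your first GD layer receives $y$-tokens missing $\bx_i$, so \cref{prop:one-step-gd}'s guarantee simply does not apply; the same defect propagates to the precondition of \cref{lem:linear-prediction}, which also assumes $\bx_i$ is present in $\bh^y_i$.
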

\begin{proof}
The proof consists of two steps.

\paragraph{Step 1}
We analyze the convergence rate of gradient descent on $\Limlam$ simultaneously for all $2\le i\le N$, each with learning rate $\eta_i=\frac{i-1}{2i-1}\eta$ as implemented in~\cref{prop:one-step-gd}.

Fix $2\le i\le N$. Consider the ridge risk $\Limlam$ defined in~\cref{eqn:ridge-risk-i}, which is a convex quadratic function that is $\lambda$-strongly convex and $\lammax\paren{\bX_{i-1}^\top\bX_{i-1}/(i-1)}+\lambda\le \Bx^2+\lambda\eqdef \beta$ smooth over $\R^d$. Recall $\kappa=\beta/\lambda=1+\Bx^2/\lambda$. 

Consider the following gradient descent algorithm on $\Limlam$: Initialize $\bw^0_i\defeq \bzero$, and for every $t\ge 0$
\begin{align}
\label{eqn:ridge-gd-i}
    \bw^{t+1}_i = \bw^t_i - \eta_i\grad \Limlam(\bw^t_i),
\end{align}
with $\eta_i=\frac{i-1}{2i-1}\eta$. Taking $\eta\defeq 2/\beta$, we have $\eta_i\in [2/(3\beta), 1/\beta]$, and thus $\eta_i\lambda \in [2/(3\kappa), 1/\kappa]$.

By standard convergence results for strongly convex and smooth functions (\cref{prop:strongly-convex-gd}), we have for all $t\ge 1$ that
\begin{align*}
    \ltwo{\bw_i^t - \hbwilam}^2 \le \exp\paren{-\eta_i\lambda t}\ltwo{\bw_i^0 - \hbwilam}^2 = \exp\paren{-\eta_i\lambda t}\ltwo{\hbwilam}^2.
\end{align*}
Further, taking the number of steps as 
\begin{align*}
    T \defeq \ceil{3\kappa\log\paren{\frac{\Bx\Bw}{2\eps}}}
\end{align*}
so that $\eta_i\lambda T/2\ge 2/(3\kappa)\cdot 3\kappa\log(\Bx\Bw/(2\eps)) /2 = \log(\Bx\Bw/(2\eps))$, we have
\begin{align}
\label{eqn:ridge-gd-ridge}
    \ltwo{\bw_i^T - \hbwilam} \le \exp\paren{-\eta_i\lambda T/2}\ltwo{\hbwilam} \le \frac{2\eps}{\Bx\Bw} \cdot \frac{\Bw}{2} \le \frac{\eps}{\Bx}.
\end{align}

\paragraph{Step 2}
We construct a $(T+2)$-layer transformer $\TF_\btheta$ by concatenating the copying layer in~\cref{lem:copy}, $T$ identical gradient descent layers as constructed in~\cref{prop:one-step-gd}, and the linear prediction layer in~\cref{lem:linear-prediction}. Note that the transformer is attention only (all MLP layers being zero), and the number of heads within all layers is at most $3$.

The copying layer ensures that the output format is compatible with the input format required in ~\cref{prop:one-step-gd}, which in turn ensures that the $T$ gradient descent layers implement~\cref{eqn:ridge-gd-i} simultaneously for all $1\le i\le N$ ($\bw_1^T\defeq \bzero$ is not updated at token $i=1$). 
Therefore, the final linear prediction layer ensures that, the output matrix $\tbH\defeq \TF_\btheta(\bH)$ contains the following prediction at every $i\in[N]$:
\begin{align*}
    \hy_{i} \defeq [\tbh^x_i]_{d+1} = \<\bw_i^T, \bx_i\>,
\end{align*}
which satisfies
\begin{align*}
    \abs{\hy_i - \<\hbwilam, \bx_i\>} = \abs{ \<\bw_i^T - \hbwilam, \bx_i\> } \le (\eps/\Bx) \cdot \Bx = \eps.
\end{align*}
This finishes the proof.

\end{proof}

\subsection{Proof of Theorem~\ref{thm:fixed-rep}}
\label{app:proof-fixed-rep-main-theorem}

The result follows directly by concatenating the following two transformer constructions:
\begin{itemize}[leftmargin=1.5em]
    \item The MLP implementation module in~\cref{lem:mlp}, which has $(L+1)$-layers, 5 heads, and transforms every $\bx_i$ to $\Phi^\star(\bx_i)$ to give output matrix~\cref{eqn:transformed-input};
    \item The in-context ridge regression module in~\cref{thm:ridge} (with inputs being $\sets{\Phi^\star(\bx_i)}$ instead of $\bx_i$) which has $\cO\paren{\kappa\log(\Bphi\Bw/\eps)}$ layers, $3$ heads, and outputs prediction $\hy_i\defeq [\tbh^x_i]_{D+1}$ where $|\hy_i - \langle \Phi^\star(\bx_i), \hbwiphilam\rangle|\le \eps$, where $\hbwiphilam$ is the~\cref{eqn:phi-ridge} predictor.
\end{itemize}
Claim~\cref{eqn:transformed-input} can be seen by concatenating the $(L+1)$-layer MLP module with the first layer in the ridge regression module (\cref{thm:ridge}), which copies the $\Phi^\star(\bx_i)$ in each $x$ token to the same location in the succeeding $y$ token.

Further, the hidden dimension requirements are $\Dhid \ge \max\sets{2D, D+d+10}$ for the first module and $\Dhid\ge 2D+10$ for the second module, which is satisfied at our precondition $\Dhid=2D+d+10$. This finishes the proof.
\qed

\section{Proofs for Section~\ref{sec:dynamical-system}}

Recall our input format~\cref{eqn:input-format-dynamical-system} for the dynamical system setting:
\begin{align*}
    \bH \defeq \begin{bmatrix}
        \bx_1 & \dots & \bx_N \\
        \bp_1 & \dots & \bp_N
    \end{bmatrix} \in \R^{\Dhid\times N},
\end{align*}
our choice of the positional encoding vectors $\bp_i=[\bzero_{\Dhid-d-4}; 1; i; i^2; i^3]$ for all $i\in[N]$, and that we understand $\bx_i\defeq \bzero$ for all $i\le 0$.

\subsection{Useful transformer constructions}

\begin{lemma}[Copying for dynamical systems]
\label{Lem:copy-dynamic}
Suppose $\Dhid \ge kd+4$. For any $k\in[N]$, there exists a $(k+1)$-head attention layer $\btheta = \sets{(\bQ_m, \bK_m, \bV_m)}_{m\in[k+1]}\subset\R^{\Dhid\times \Dhid}$ such that for every input $\bH$ of the form~\cref{eqn:input-format-dynamical-system}, we have
\begin{align}
\label{eqn:input-format-after-copy}
    \tbH = \Attn_\btheta(\bH) = \begin{bmatrix}
    \bx_{1-k+1} & \dots & \bx_{i-k+1} & \dots & \bx_{N-k+1} \\
    \vert &  & \vert &  & \vert \\
    \bx_{1} & \dots & \bx_{i} & \dots & \bx_{N} \\
    \barbp_{1} & \dots & \barbp_{i} & \dots & \barbp_{N}
\end{bmatrix}
\in \R^{\Dhid\times N},
\end{align}
where $\barbp_i$ only differs from $\bp_i$ in the dimension of the zero paddings.
In words, $\Attn_\btheta$ copies the $k-1$ previous tokens $[\bx_{i-k+1}; \dots; \bx_{i-1}]$ onto the $i$-th token.
\end{lemma}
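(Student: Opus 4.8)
The plan is to build each of the (at most) $k+1$ heads so that it implements an \emph{exact} one-hot ``lookup'' of a single source token and writes a copy of its $\bx$-part into a designated coordinate block of the output token --- a direct generalization of the single-offset copying head of~\cref{lem:copy} together with the within-token rearrangement already used in~\cref{lem:mlp}. With the positional encodings $\bp_i=[\bzero_{\Dhid-d-4};1;i;i^2;i^3]$, each input token is $\bh_i=[\bx_i;\bzero;1;i;i^2;i^3]$, so the monomials $1,i,i^2,i^3$ of the position index are linearly accessible. To each offset $j\in\{0,1,\dots,k-1\}$ I would assign one head whose query and key read off
\[
  \bQ_j\bh_i=[\,i^3;\,i^2;\,i;\,\bzero\,],\qquad
  \bK_j\bh_\ell=[\,-1;\ 2(\ell+j);\ 1-(\ell+j)^2;\,\bzero\,],
\]
so that $\langle\bQ_j\bh_i,\bK_j\bh_\ell\rangle = i\paren{1-(i-\ell-j)^2}$, via the identity $i\paren{1-(i-\ell-j)^2} = -i^3+2i^2(\ell+j)+i\paren{1-(\ell+j)^2}$. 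Since $i-\ell-j$ is an integer, this quantity equals $i$ when $\ell=i-j$ and is $\le0$ otherwise (in particular $\le0$ at $\ell=i$ whenever $j\ge1$); hence after the ReLU and the $1/i$ length-normalization of the attention layer, head $j$ places weight exactly $\indic{\ell=i-j}$ on the causally visible keys $\ell\in[1,i]$. Setting the value matrix $\bV_j$ to extract $\bx_\ell$ from $\bh_\ell$ and deposit it into the coordinate block $[(k-1-j)d+1:(k-j)d]$, head $j$ then contributes $\bx_{i-j}$ to that block when $i-j\ge1$ and contributes $\bzero$ when $i-j\le0$ --- matching the convention $\bx_{i-j}\defeq\bzero_d$ for $i-j\le0$.

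Combined with the residual connection, the $k$ heads for $j=0,\dots,k-1$ fill the output blocks $[1:d],[d+1:2d],\dots,[(k-1)d+1:kd]$ with $\bx_{i-k+1},\dots,\bx_{i-1},\bx_i$ respectively, which is exactly $\barbx_i$. The one remaining discrepancy is that the residual leaves a stray copy of $\bx_i$ in the top block $[1:d]$ (where the input stored it), whereas the target content of that block is $\bx_{i-k+1}$. I would cancel it with one more head --- the $(k+1)$-st --- that also attends only to the diagonal $\ell=i$ through the $j=0$ selection pattern and whose value matrix writes $-\bx_\ell$ into the top block, so that the net content of $[1:d]$ becomes $\bx_i-\bx_i+\bx_{i-k+1}=\bx_{i-k+1}$ after adding the offset-$(k-1)$ head's contribution. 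No value matrix touches the position block $[1;i;i^2;i^3]$ or the remaining $\Dhid-kd-4\ge0$ coordinates, and these are preserved by the residual, so the output token equals $[\bx_{i-k+1};\dots;\bx_i;\barbp_i]$ with $\barbp_i$ differing from $\bp_i$ only in the number of zero-padding entries, as claimed.

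I do not anticipate a conceptual obstacle: the construction is just the single-offset copying layer of~\cref{lem:copy} instantiated $k$ times in parallel. The care needed is (i) verifying the cubic identity and the sign of $i\paren{1-(i-\ell-j)^2}$ at the non-matching integers $\ell\le i$, so that each head is genuinely one-hot rather than a soft average over keys, and (ii) the routine boundary bookkeeping ensuring that, when $i-j\le0$, the absence of a causally visible source key produces exactly the zero padding dictated by the $\bx_j\defeq\bzero$ convention. If one prefers, the cancellation of the stray $\bx_i$ can be folded into the value matrix of the offset-$0$ head (writing $-\bx_\ell$ into the top block and $+\bx_\ell$ into the bottom block at once), so that $k$ heads already suffice; we keep the heads separate for transparency, which is why the statement allows $k+1$ heads.
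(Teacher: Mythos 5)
Your construction is essentially identical to the paper's: both build one single-offset ``lookup'' head per offset $j\in\{0,\dots,k-1\}$ using the cubic positional encodings so that $\sigma(\langle\bQ_j\bh_i,\bK_j\bh_\ell\rangle)=i\indic{\ell=i-j}$ (the direct generalization of \cref{lem:copy}), route $\bx_{i-j}$ into the corresponding $d$-block via the value matrix, and then spend one extra head to cancel the stray residual $\bx_i$ sitting in the top block. Your form of the key vector, $\bK_j\bh_\ell=[-1;2(\ell+j);1-(\ell+j)^2;\bzero]$, is the clean closed form of what the paper's construction implements, and your boundary bookkeeping for $i-j\le 0$ and the dimension count $\Dhid\ge kd+4$ both match the paper.
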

\begin{proof}
For every $k'\in[k]$, we define an attention head $(\bQ_{k'}, \bK_{k'}, \bV_{k'})\subset\R^{\Dhid\times\Dhid}$ such that for all $j\le i\in[N]$,
\begin{align*}
    & \bQ_{k'}\bh_i = [i^3; i^2; i; \bzero_{\Dhid-3}], \\
    & \bK_{k'}\bh_j = [-1; 2j+2(k'-1); -j^2+2(k'-1)j+1-(k'-1')^2; \bzero_{\Dhid-3}], \\
    & \bV_{k'}\bh_j = [\bzero_{(k-k')D}; \bx_j; \bzero].
\end{align*}
Note that
\begin{align*}
    & \quad \sigma\paren{ \<\bQ_{k'}\bh_i, \bK_{k'}\bh_j\> } = \sigma\paren{-i^3+2i^2j + 2(k'-1)i^2-ij^2+2ij(k'-1)+i-i(k'-1)^2} \\
    & = i\sigma\paren{1-(j-i+k'-1)^2} = i\indic{j=i-k'+1}.
\end{align*}
Therefore, at output token $i\in[N]$, this attention head gives
\begin{align*}
    \frac{1}{i}\sum_{j=1}^i \sigma\paren{ \<\bQ_{k'}\bh_i, \bK_{k'}\bh_j\> } \bV_{k'}\bh_j = \frac{1}{i}\cdot i \cdot \bV_{k'}\bh_{i-k'+1} = [\bzero_{(k-k')D}; \bx_{i-k'+1}; \bzero]
\end{align*}
when $i-k'+1\ge 1$, and zero otherwise. Combining all $k$ heads, and defining one more head $(\bQ_{k+1}, \bK_{k+1}, \bV_{k+1})$ to ``remove'' $\bx_i$ at its original location (similar as in the proof of~\cref{lem:mlp}), we have
\begin{align*}
    \sum_{m=1}^{k+1}\frac{1}{i}\sum_{j=1}^i \sigma\paren{ \<\bQ_{k'}\bh_i, \bK_{k'}\bh_j\> } \bV_{k'}\bh_j = \begin{bmatrix}
        \bx_{i-k+1} - \bx_i \\ \bx_{i-(k-1)+1} \\ \vert \\ \bx_i \\ \bzero
    \end{bmatrix}.
\end{align*}
By the residual structure of an attention layer, we have
\begin{align*}
    \brac{\Attn_\btheta(\bH)}_i = \begin{bmatrix}
        \bx_i \\ \bp_i
    \end{bmatrix} + \begin{bmatrix}
        \bx_{i-k+1} - \bx_i \\ \bx_{i-(k-1)+1} \\ \vert \\ \bx_i \\ \bzero
    \end{bmatrix} = 
    \begin{bmatrix}
        \bx_{i-k+1} \\ \bx_{i-(k-1)+1} \\ \vert \\ \bx_i \\ \barbp_i
    \end{bmatrix}.
\end{align*}
(The precondition $\Dhid\ge D+4$ guarantees that the $x$ entries would not interfere with the non-zero entries within $\bp_i$.) This is the desired result.
\end{proof}

\begin{lemma}[Implementing MLP representation for dynamical systems]
\label{lem:mlp-dynamical-system}
Fix any MLP representation function $\Phi^\star:\R^{kd}\to\R^D$ of the form~\cref{eqn:mlp}, suppose $\Dhid\ge 2(k+1)d+3D+2d+5$. Then there exists a module MLP-(Attention-MLP-$\dots$-Attention-MLP) with $L+1$ (Attention-MLP) blocks (i.e. transformer layers) and $5$ heads in each attention layer (this is equivalent to an $(L+2)$-layer transformer without the initial attention layer) that implements $\Phi^\star$ in the following fashion: For any input $\bH$ of form
\begin{align*}
    \bH = \begin{bmatrix}
    \barbx_1 & \dots & \barbx_N \\
    \barbp_{1} & \dots & \barbp_{N}
\end{bmatrix}
\end{align*}
where we recall $\barbx_i=[\bx_{i-k+1}; \dots; \bx_i]\in\R^{kd}$, the following holds. The first MLP layer outputs
\begin{align*}
\MLP^{(1)}(\bH) = \begin{bmatrix}
        \sigma_\rho(\bB_1^\star\barbx_1) & \dots & \sigma_\rho(\bB_1^\star\barbx_i) \\
        \bx_1 & \dots & \bx_i \\
        \barbp_1' & \dots & \barbp_i'
    \end{bmatrix}.
\end{align*}
The full transformer outputs
\begin{align}
\label{eqn:input-format-dynamical-system-after-mlp}
    \tbH = \TF_\btheta(\bH) = \begin{bmatrix}
    \Phi^\star(\barbx_1) & \Phi^\star(\barbx_2) & \dots  & \Phi^\star(\barbx_{i}) \\
    \bzero_d & \bzero_d & \dots & \bzero_d \\
    \bzero_D & \Phi^\star(\barbx_1) & \dots & \Phi^\star(\barbx_{i-1}) \\
    \bx_1 & \bx_2 & \dots & \bx_i \\
    \tbp_1 & \tbp_2 & \dots  & \tbp_i
    \end{bmatrix}.
\end{align}
where $\tbp_i,\tbp_i$ differs from $\barbp_i,\barbp_i$ only in the dimension of their zero paddings.
\end{lemma}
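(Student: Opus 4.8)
The plan is to follow the template of the proof of~\cref{lem:mlp}, adapting it to the dynamical-system token format of~\cref{eqn:input-format-dynamical-system-after-mlp} and appending one extra cross-token copying step that produces the delayed representation $\Phi^\star(\barbx_{i-1})$. Concretely, I will build the module as an initial $\MLP$ layer followed by $(L+1)$ alternating $(\Attn,\MLP)$ blocks. Each token is organized into disjoint coordinate blocks holding: (i) the current intermediate representation inside $\Phi^\star$; (ii) scratch coordinates used to realize the leaky ReLU $\sigma_\rho(t)=\sigma(t)-\rho\,\sigma(-t)$ through the ordinary ReLU; (iii) a preserved copy of $\bx_i$ (the last $d$-block of $\barbx_i$); (iv) a $D$-block reserved to receive $\Phi^\star(\barbx_{i-1})$ at the very end; and (v) the positional block $\barbp_i$ padded with extra zeros. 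The $\MLP$ components compute the $L$ layers $\sigma_\rho(\bB_\ell^\star\cdot)$ of $\Phi^\star$ one at a time; the intervening $\Attn$ components perform the within-token layout rearrangement of~\cref{lem:mlp}; and the final $\Attn$ component performs the shift-by-one copy.

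In order: \textbf{(1)} The first $\MLP$ layer implements $\barbx_i\mapsto\sigma_\rho(\bB_1^\star\barbx_i)$ exactly as in the proof of~\cref{lem:mlp}: its first weight matrix forms $\pm\bB_1^\star\barbx_i$ and, using $t=\sigma(t)-\sigma(-t)$, also $\pm\bx_i$ and $\pm\bx_{i-j}$ for $1\le j\le k-1$; ReLU is applied; and its second weight matrix assembles $\sigma_\rho(\bB_1^\star\barbx_i)$ in the front block, re-places $\bx_i$ in its reserved block, and (via the residual) subtracts off the no-longer-needed $\bx_{i-1},\dots,\bx_{i-k+1}$. This yields precisely the claimed $\MLP^{(1)}(\bH)$. \textbf{(2)} The first $L-1$ of the $(\Attn,\MLP)$ blocks advance the computation of $\Phi^\star$: in block $m$ (for $1\le m\le L-1$) the $\Attn$ layer uses only its self-loop term — the $j=i$ contribution, isolated via the positional encoding as in~\cref{lem:mlp} — to move the representation produced at step $m$ into the MLP input slot and clear the scratch, leaving $\bx_i$ and the positional entries untouched, and the following $\MLP$ layer computes $\sigma_\rho(\bB_{m+1}^\star\cdot)$ of that slot as in Step (1). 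The $\Attn$ of block $L$ performs the analogous final rearrangement so that token $i$ now carries $\Phi^\star(\barbx_i)$ in the front block, a $\bzero_d$ gap, the reserved zero $D$-block, $\bx_i$, and the positional entries; its $\MLP$ is the identity. \textbf{(3)} The $\Attn$ of block $L+1$ copies $\Phi^\star(\barbx_{i-1})$ from the front block of token $i-1$ into the reserved zero block of token $i$: choose $\bQ,\bK$ so that $\sigma(\langle\bQ\bh_i,\bK\bh_j\rangle)=i\cdot\indic{j=i-1}$ using the entries $[1;i;i^2;i^3]$ of the positional encoding — the same kernel already used in~\cref{Lem:copy-dynamic} and~\cref{lem:copy} — and let $\bV$ read the front block; for $i\ge 2$ this deposits $\Phi^\star(\barbx_{i-1})$, and for $i=1$ it deposits $\bzero_D$ since there is no token $0$, matching~\cref{eqn:input-format-dynamical-system-after-mlp}. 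Its $\MLP$ is the identity, so the total count is exactly $L+1$ $(\Attn,\MLP)$ blocks after the initial $\MLP$.

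The content that is genuinely new relative to~\cref{lem:mlp} is modest: the input lives in $\R^{kd}$ rather than $\R^d$ (which only reshapes $\bB_1^\star$), an untouched copy of $\bx_i$ must be threaded through all $L$ MLP layers, and the delayed representation $\Phi^\star(\barbx_{i-1})$ must be slotted in at the end. I therefore expect the only real obstacle to be the coordinate-layout bookkeeping: allocating mutually disjoint blocks for the running representation, the scratch used for the $\pm$ decompositions, the preserved $\bx_i$ together with the $\bzero_d$ gap demanded by~\cref{eqn:input-format-dynamical-system-after-mlp}, the reserved-for-copy $D$-block, and the $[1;i;i^2;i^3]$ positional entries, and then verifying that no $\MLP$ or $\Attn$ step ever writes outside its designated block — in particular that the final copy collides neither with the freshly computed $\Phi^\star(\barbx_i)$, nor with $\bx_i$, nor with the positional block. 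This accounting is exactly what forces the stated hidden-dimension requirement $\Dhid\ge 2(k+1)d+3D+2d+5$; every individual operation is a direct transcription of constructions already established in~\cref{lem:mlp} (single MLP-layer implementation with leaky ReLU, and the within-token self-attention rearrangement) and~\cref{Lem:copy-dynamic}/\cref{lem:copy} (the positional-encoding-based single-head copy).
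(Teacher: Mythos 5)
Your proposal mirrors the paper's proof essentially step for step: an initial MLP that, via the decomposition $t=\sigma(t)-\sigma(-t)$, simultaneously computes $\sigma_\rho(\bB_1^\star\barbx_i)$ in the front block, re-places $\bx_i$, and (through the residual) clears the remaining entries of $\barbx_i$; a repetition of the within-token MLP-and-rearrangement construction of \cref{lem:mlp} to finish the remaining $L-1$ layers of $\Phi^\star$ while threading $\bx_i$; and a final single-head shift-by-one attention (the kernel $\sigma(\<\bQ\bh_i,\bK\bh_j\>)=i\cdot\indic{j=i-1}$ from \cref{Lem:copy-dynamic}/\cref{lem:copy}) that deposits $\Phi^\star(\barbx_{i-1})$ into the reserved zero $D$-block, with the $i=1$ boundary handled automatically by the causal mask. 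The only differences are bookkeeping conveniences — you regroup layers as $(\Attn,\MLP)$ rather than $(\MLP,\Attn)$ with identity padding, and you state but do not verify the hidden-dimension arithmetic — neither of which is substantive.
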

\begin{proof}
We first construct the first MLP layer. Consider any input token $\bh_i=[\barbx_i; \bp_i]$. Define matrices $\bW_1,\bW_2\in\R^{\Dhid\times\Dhid}$ such that (below $\pm \bu\defeq [\bu; -\bu]$)
\begin{align*}
    & \bW_1\bh_i = \begin{bmatrix}
        \pm \bB_1^\star\barbx_i \\ \pm \bx_i \\ \pm \barbx_i \\ \bzero
    \end{bmatrix}, \quad 
    \sigma\paren{ \bW_1\bh_i } = \begin{bmatrix}
        \sigma(\pm \bB_1^\star\bx_i) \\ \sigma(\pm \bx_i) \\ \sigma(\pm \barbx_i) \\ \bzero
    \end{bmatrix}, \\
    & \bW_2\sigma\paren{ \bW_1\bh_i } = \begin{bmatrix}
        \sigma(\bB_1^\star\barbx_i) - \rho\sigma(-\bB_1^\star\barbx_i) \\ \bzero
    \end{bmatrix} + \begin{bmatrix}
        -\sigma(\barbx_i) + \sigma(-\barbx_i) \\ \bzero
    \end{bmatrix} + \begin{bmatrix}
        \bzero_D \\ \sigma(\bx_i) - \sigma(-\bx_i) \\ \bzero
    \end{bmatrix}.
\end{align*}
Therefore, the MLP layer $(\bW_1,\bW_2)$ outputs
\begin{align}
\label{eqn:mlp-dynamical-system-intermediate}
    \barbh_i \defeq \brac{\MLP_{\bW_1,\bW_2}(\bH)}_i = \bh_i + \bW_2\sigma(\bW_1\bh_i) = \begin{bmatrix}
        \sigma_\rho(\bB_1^\star\barbx_i) \\ \bx_i \\ \barbp_i
    \end{bmatrix}.
\end{align}
The requirement for $\Dhid$ above is $\Dhid\ge \max\sets{2D+2(k+1)d, D+d+5}$.

The rest of the proof follows by repeating the proof of~\cref{lem:mlp} (skipping the first (MLP-Attention) block), with the following modifications:
\begin{itemize}[leftmargin=1.5em]
\item Save the $\bx_i\in\barbx_i$ location within each token, and move it into the $(2D+d+1:2D+2d)$ block in the final layer (instead of moving the label $y_i$ in~\cref{lem:mlp}); this takes the same number (at most $2$) of attention heads in every layer, same as in~\cref{lem:mlp}.
\item Append one more copying layer with a single attention head (similar as the construction in~\cref{Lem:copy-dynamic}) to copy each $\Phi^\star(\barbx_i$) to the $(D+d+1:2D+d)$ block of the next token. 
\end{itemize}
The above module has structure $(L-1)\times$(MLP-Attention), followed by a single attention layer which can be rewritten as an MLP-Attention-MLP module with identity MLP layers. Altogether, the module has an MLP-$L\times $(Attention-MLP) structure. The max number of attention heads within the above module is $5$. The required hidden dimension here is $\Dhid\ge \max\sets{kd+4, 2D+d+\max\sets{D,d}+5}$, with $\Dhid\ge \max\sets{kd, 3D+2d}+5$ being a sufficient condition. 

Combining the above two parts, a sufficient condition for $\Dhid$ is $\Dhid\ge 2(k+1)d+3D+2d+5$, as assumed in the precondition. This finishes the proof.
\end{proof}

Consider the following multi-output ridge regression problem:
\begin{align}
\label{eqn:ridge-i-dynamical-system}
    \hbWilam \defeq \argmin_{\bW\in\R^{D\times d}} \frac{1}{2(i-1)}\sum_{j=1}^{i-1} \ltwo{ \bW^\top \bx_j - \by_j}^2 + \frac{\lambda}{2}\lfro{\bW}^2.
\end{align}

\begin{theorem}[In-context multi-output ridge regression with alternative input structure]
\label{thm:ridge-multi-output}
For any $\lambda\ge 0$, $\Bx,\Bw,\By>0$ with $\kappa\defeq 1+\Bx^2/\lambda$, and $\eps<\Bx\Bw/2$, let $\Dhid\ge Dd+2(D+d)+5$, then there exists an $L$-layer transformer $\TF_\btheta$ with $M=3d$ heads and hidden dimension $\Dhid$, where
\begin{align}
\textstyle
\label{eqn:ridge-capacity-bound-dynamic}
L = \cO\paren{ \kappa\log(\Bx\Bw/(\eps)) }
\end{align}
such that the following holds. On any input matrix
\begin{align*}
    \bH = \begin{bmatrix}
    \bx_1 & \bx_2 & \dots & \bx_N \\
    \bzero_d & \bzero_d & \dots & \bzero_d \\
    \bzero_D & \bx_1 & \dots & \bx_{N-1} \\
    \bzero_d & \by_1 & \dots & \by_{N-1} \\
    \bp_1 & \bp_2 & \dots & \bp_N 
    \end{bmatrix}
\end{align*}
(where $\bx_i\in\R^D$, $\by_i\in\R^d$) such that problem~\cref{eqn:ridge-i-dynamical-system} has bounded inputs and solution: for all $i\in[N]$
\begin{align}
\label{eqn:well-conditioned-multi-output}
    \ltwo{\bx_i} \le \Bx, \quad \linfs{\by_i}\le \By, \quad \ltwoinfs{\hbWilam}\le \Bw/2,
\end{align}
$\TF_\btheta$ approximately implements the ridge regression algorithm~\cref{eqn:ridge-i-dynamical-system} at every token $i\in[N]$: The prediction $\hby_i \defeq \brac{\TF_\btheta(\bH)}_{(D+1):(D+d),i}$ satisfies 
\begin{align}
    \linf{ \hby_i - (\hbWilam)^\top\bx_i } \le \eps.
\end{align}
\end{theorem}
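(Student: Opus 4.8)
The plan is to follow the two-step strategy of the proof of \cref{thm:ridge} --- run $T$ parallel in-context gradient-descent steps and then append a linear prediction layer --- with two modifications for the present setting. First, since the target $\by_i\in\R^d$ is vector-valued, I would carry a running weight \emph{matrix} $\bW_i\in\R^{D\times d}$, stored flattened in a $Dd$-dimensional block of token $i$ (the source of the $Dd$ term in the bound on $\Dhid$), and use $3d$ attention heads, three per output coordinate. Second, I would exploit the fact that --- unlike format~\cref{eqn:input-format}, where $\bx_j$ and $y_j$ live in separate tokens --- the ``alternative'' format here already bundles the $j$-th training pair $(\bx_j,\by_j)$ into the single token $\ell=j{+}1$; this actually \emph{simplifies} the gradient-step construction, because a query token $i$ can read each past pair $(\bx_j,\by_j)$ ($j\le i-1$) off a single key token $\ell=j{+}1\le i$, and the decoder mask $\MSK_{\ell i}=\indic{\ell\le i}$ already selects exactly the right key tokens (column $\ell=1$ holds the zero pair and contributes nothing). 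In particular no initial copying layer --- the analogue of \cref{lem:copy} in \cref{thm:ridge} --- is needed.

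The core building block is a multi-output analogue of \cref{prop:one-step-gd}: an attention layer with $3d$ heads that, simultaneously for all $i\in[N]$, replaces a running weight $\bW_i$ stored in token $i$ by $\bW_i-\eta_i\grad\Limlam(\bW_i)$, where $\Limlam(\bW)\defeq\frac{1}{2(i-1)}\sum_{j=1}^{i-1}\ltwo{\bW^\top\bx_j-\by_j}^2+\frac{\lambda}{2}\lfro{\bW}^2$ and $\eta_i=\frac{i-1}{i}\eta$ (the factor $\frac{i-1}{i}$ comes from the length-$i$ normalization of the ReLU attention summed over the $i-1$ active key tokens). Since $\Limlam$ splits as a sum over the $d$ columns of $\bW$, this reduces to $d$ independent copies of the scalar construction of \cref{prop:one-step-gd}, one per coordinate $c\in[d]$: heads $(c,1),(c,2)$ use the offset-cancellation identity $\sigma(z+M)-\sigma(M)=z$ (valid since $|z|=|\langle\bW_{i,:,c},\bx_j\rangle-(\by_j)_c|\le\Bw\Bx+\By$ once $M$ is a suitable multiple of $R\defeq\max\{\Bx\Bw,\By\}$) to realize the data-gradient part of the update inside the $c$-th column block of $\bW_i$'s slot, and head $(c,3)$ realizes the ridge term $-\eta_i\lambda\bW_{i,:,c}$ through a positional-encoding identity of the shape $\sigma\paren{(i-1)(1-(i-\ell)^2)}=(i-1)\indic{\ell=i}$. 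At $i=1$ no key token is active and $\lambda\bW_1=\bzero$, so $\bW_1$ stays $\bzero$, matching $\hat\bW_0^\lambda=\bzero$.

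Stacking $T$ such layers then runs, for every $i\ge 2$ and every coordinate $c$, gradient descent on the $\lambda$-strongly-convex, $(\Bx^2+\lambda)$-smooth scalar ridge objective for the $c$-th output, from initialization $\bzero$. Taking $\eta=1/\beta$ with $\beta=\Bx^2+\lambda$ gives $\eta_i\le 1/\beta$ and $\eta_i\lambda\in[\tfrac{1}{2\kappa},\tfrac1\kappa)$, so \cref{prop:strongly-convex-gd} together with the precondition $\ltwos{(\hbWilam)_{:,c}}\le\Bw/2$ yields $\ltwos{(\bW_i^T)_{:,c}-(\hbWilam)_{:,c}}\le\eps/\Bx$ after $T=\cO\paren{\kappa\log(\Bx\Bw/\eps)}$ steps, hence $\ltwoinfs{\bW_i^T-\hbWilam}\le\eps/\Bx$. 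A final linear prediction layer --- the multi-output version of \cref{lem:linear-prediction}, which needs only $2d\le 3d$ heads --- writes $\hby_i=(\bW_i^T)^\top\bx_i$ into coordinates $(D{+}1){:}(D{+}d)$ of token $i$, so that $\linfs{\hby_i-(\hbWilam)^\top\bx_i}\le\max_c\ltwos{(\bW_i^T)_{:,c}-(\hbWilam)_{:,c}}\cdot\ltwos{\bx_i}\le\eps$. Counting sizes gives $L=T+\cO(1)=\cO\paren{\kappa\log(\Bx\Bw/\eps)}$ layers, $M=3d$ heads, and a hidden dimension with room for $\bx_i$ ($D$), the prediction slot ($d$), the training pair $(\bx_{i-1},\by_{i-1})$ ($D+d$), the running $\bW_i$ ($Dd$), and positional encodings/scratch ($\le 5$), i.e. $\Dhid\ge Dd+2(D+d)+5$.

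The step I expect to be the main obstacle is the verification of the $3d$-head gradient layer: choosing positional-encoding offsets that simultaneously (i) isolate, for each query $i$, the single key token carrying each training pair, (ii) realize the \emph{signed} residual through ReLU, and (iii) normalize to the correct $i$-dependent effective learning rate for \emph{all} query tokens at once --- while preserving the invariant $\ltwos{(\bW_i^t)_{:,c}}\le\Bw$ along the iteration (which follows from $\ltwos{(\bW_i^0)_{:,c}}=0$, monotonicity of $\ltwos{(\bW_i^t)_{:,c}-(\hbWilam)_{:,c}}$ under GD with $\eta_i\le 1/\beta$, and the precondition $\ltwoinfs{\hbWilam}\le\Bw/2$), since exactly this boundedness is what licenses the offset-cancellation trick. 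The convergence bound and the prediction layer are then routine adaptations of the corresponding parts of \cref{thm:ridge}.
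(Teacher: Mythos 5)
Your proposal is correct and follows essentially the same route as the paper's proof: exploiting separability of multi-output ridge into $d$ single-output problems, omitting the copying layer since the alternative format already bundles $(\bx_{j},\by_{j})$ into token $j+1$, running $d$ parallel copies of the gradient-descent construction of \cref{prop:one-step-gd} with $3d$ heads and the running $\bW_i^t\in\R^{D\times d}$ stored in the $Dd$ zero-padding slots, and finishing with $d$ parallel prediction heads. You actually fill in more detail than the paper does (the $\eta_i=\frac{i-1}{i}\eta$ learning-rate normalization for the length-$N$ sequence, the boundedness invariant on $(\bW_i^t)_{:,c}$, and the specific ReLU offset-cancellation identity), all of which check out.
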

\begin{proof}
Observe that the multi-output ridge regression problem~\cref{eqn:ridge-i-dynamical-system} is equivalent to $d$ separable single-output ridge regression problems, one for each output dimension. Therefore, the proof follows by directly repeating the same analysis as in~\cref{thm:ridge}, with the adaptation that
\begin{itemize}[leftmargin=1.5em]
    \item Omit the copying layer since each token already admits the previous (input, label) pair;
    \item Use a $\cO(\kappa\log(\Bx\Bw/(\eps)))$-layer transformer with $3d$ heads to perform $d$ parallel ridge regression problems (each with $3$ heads), using in-context gradient descent (\cref{prop:one-step-gd}) as the internal optimization algorithm, and with slightly different input structures that can be still accommodated by using relu to implement the indicators. Further, by the precondition~\cref{eqn:well-conditioned-multi-output} and $\Dhid-2(D+d)-5\ge Dd$, we have enough empty space to store the $\bW_i^t\in\R^{D\times d}$ within the zero-paddings in $\bp_i$.
    \item Use a single-attention layer with $d$ parallel linear prediction heads (\cref{lem:linear-prediction}), one for each $j\in[d]$, to write prediction $(\hby_i)_j$ into location $(i, D+j)$ with $|(\hby_i)_j - \langle (\hbWilam)_j, \bx_i \rangle|\le \eps$. Therefore,
    \begin{align*}
        \linf{ \hby_i - (\hbWilam)^\top\bx_i } = \max_{j\in[d]} \abs{(\hby_i)_j - \<(\hbWilam)_j, \bx_i\>} \le \eps.
    \end{align*}
\end{itemize}
This finishes the proof.
\end{proof}

\subsection{Proof of Theorem~\ref{thm:dynamical-system}}
\label{app:proof-dynamical-system}

\begin{proof}[Proof of~\cref{thm:dynamical-system}]
The proof is similar as that of~\cref{thm:fixed-rep}.
The result follows directly by concatenating the following three transformer modules:

\begin{itemize}[leftmargin=1.5em]
\item The copying layer in~\cref{Lem:copy-dynamic}, which transforms the input to format~\cref{eqn:input-format-after-copy}, and thus verifies claim~\cref{eqn:copied-input-dynamical-system}.

\item The MLP representation module in~\cref{lem:mlp-dynamical-system}, which transforms~\cref{eqn:input-format-after-copy} to~\cref{eqn:input-format-dynamical-system-after-mlp}. Together with the above single attention layer, the module is now an $(L+1)$-layer transformer with $5$ heads. Claim~\cref{eqn:relued-input-dynamical-system} follows by the intermediate output~\cref{eqn:mlp-dynamical-system-intermediate} within the proof of~\cref{lem:mlp-dynamical-system}.

\item The in-context multi-output ridge regression construction in~\cref{thm:ridge-multi-output} (with inputs being $\sets{\Phi^\star(\barbx_i)}$ and labels being $\sets{\bx_{i+1}}$). This TF has $\cO\paren{\kappa\log(\Bphi\Bw/\eps)}$ layers, and $3d$ heads. It takes in input of format~\cref{eqn:input-format-dynamical-system-after-mlp}, and outputs prediction $\hby_i\defeq [\tbh_i]_{D+1:D+d}$ where $\linfs{\hby_i - 
 (\hbWiphilam)^\top\Phi^\star(\barbx_i) } \le \eps$, where $\hbWiphilam$ is the~\cref{eqn:phi-ridge-dynamical-system} predictor.
\end{itemize}
The resulting transformer has $\max\sets{3d, 5}$ heads, and the hidden dimension requirement is $$\Dhid \ge \max\sets{kd+5, 2(k+1)d+3D+2d+5, Dd+2(D+d)+5}.$$ A sufficient condition is $\Dhid=\max\sets{2(k+1),D}d+3(D+d)+5$, as assumed in the precondition. This finishes the proof.
\end{proof}

\section{Details for experiments}
\label{app:exp-details}
\textbf{Architecture and training details} We train a 12-layer decoder model in \textsc{GPT-2} family with 8 heads and hidden dimension $D_{\rm hid} = 256$, with positional encoding. We use linear read-in and read-out layer before and after the transformers respectively, both applying a same affine transform to all tokens in the sequence and are trainable. The read-in layer maps any input vector to a $D_{\rm hid}$-dimensional hidden state, and the read-out layer maps a $D_{\rm hid}$-dimensional hidden state to a 1-dimensional scalar for model \eqref{eqn:model} and to a $d$-dimensional scalar for model \eqref{eqn:dynamical-system}.

Under the in-context learning with representation setting, we first generate and fix the representation $\Phi^\star$. For a single ICL instance, We generate new coefficients $\bw$ and $N$ training examples $\set{\paren{\bx_i,y_i}}_{i\in[N]}$ and test input $\paren{\bx_{N+1},y_{N+1}}$. Before feeding into transformer, we re-format the sequence to $\bH_{\rm ICL-rep}$, as shown in equation \eqref{appeqn:exp-input-icl}. 
\begin{equation}\label{appeqn:exp-input-icl}
\bH_{\rm ICL-rep} = \left[
\bx_1, \begin{bmatrix}y_1 \\ \bzero_{d-1}\end{bmatrix}
,\ldots,
\bx_N, \begin{bmatrix}y_N \\ \bzero_{d-1}\end{bmatrix}
\right] \in \R^{d\times 2N} 
\end{equation}
We use the use the Adam optimizer with a fixed learning rate $10^{-4}$, which works well for all experiments. We train the model for $300K$ steps, where each step consists of a (fresh) minibatch with batch size $64$ for single representation experiments, except for the mixture settings in~\cref{app:mtl} where we train for $150K$ iterations, each containing $K$ batches one for each task.

Under ICL dynamic system setting, for a single ICL instance, we don't need to reformat the input sequence. We feed the original sequence $\bH_{\rm Dynamic} = [\bx_1,\ldots,\bx_N] \in \R^{d\times N}$ to transformer.

All our plots show one-standard-deviation error bars, though some of those are not too visible.

\subsection{Details for linear probing}
\label{app:probe-details}

Denote the $\ell-$th hidden state of transformers as 
\[
\bH^{(\ell)} = \begin{bmatrix}
    \bh^{x,(\ell)}_1, \bh^{y,(\ell)}_1,\ldots,
    \bh^{x,(\ell)}_N, \bh^{y,(\ell)}_{N}
\end{bmatrix}\in \R^{D_{\rm hid},2N}~~~~\text{for}~~~~\ell\in[12].
\]
Denote the probing target as $\probetarget(\set{\bx_j,y_j}_{j\in [i]}) \in \R^{d_{\rm probe}}$ for $i\in[N]$. Denote the linear probing parameter as $\bw^{x,(\ell)}$ and $\bw^{y,(\ell)}$ that belong to $\R^{D_{\rm hid} \times d_{\rm probe}}$. Denote the best linear probing model as
\begin{align*}
 \bw^{x,(\ell)}_\star &~= \argmin_{\bw^{x,(\ell)}}\E\Big[ \sum_{i=1}^N \Big\{ \paren{\bw^{x,\ell}}^\top \bh^{x,(\ell)}_i - \probetarget\Big(\set{\bx_j,y_j}_{j\in [i]}\Big) \Big\}^2 \Big]~~~~\text{and}\\
 \bw^{y,(\ell)}_\star &~= \min_{\bw^{y,(\ell)}}\E\Big[ \sum_{i=1}^N \Big\{ \paren{\bw^{y,\ell}}^\top \bh^{y,(\ell)}_i - \probetarget\Big(\set{\bx_j,y_j}_{j\in [i]}\Big) \Big\}^2 \Big].
\end{align*}
To find them, we generate $2560$ ICL input sequences with length $N$, and obtain $12$ hidden states for each input sequences. We leave $256$ sequences as test sample and use the remaining samples to estimate $\bw^{x,(\ell)}_\star$ and $\bw^{y,(\ell)}_\star$ for each $\ell$ with ordinary least squares. We use the mean squared error to measure the probe errors. In specific, define
\begin{align*}
    \text{\rm Probe Error}^{x,(\ell)}_i(g) &~= \E\Big[\Big\{ \paren{\bw^{x,\ell}_\star}^\top \bh^{x,(\ell)}_i - g\Big(\set{\bx_j,y_j}_{j\in [i]}\Big)\Big\}^2\Big]~~~~\text{with}\\
    \text{\rm Probe Error}^{x,(\ell)}(g) &~= \frac{1}{N}\sum_{i=1}^N\text{\rm Probe Error}^{x,(\ell)}_i(g),~~~\text{and}\\
    \text{\rm Probe Error}^{y,(\ell)}_i(g)&~= \E\Big[\Big\{ \paren{\bw^{x,\ell}_\star}^\top \bh^{x,(\ell)}_i - g\Big(\set{\bx_j,y_j}_{j\in [i]}\Big)\Big\}^2\Big]~~~~\text{with}\\
    \text{\rm Probe Error}^{y,(\ell)}(g) &~= \frac{1}{N}\sum_{i=1}^N\text{\rm Probe Error}^{y,(\ell)}_i(g).
\end{align*}
When $\ell = 0$, we let $\bh_i^{x,(0)} = \bh_i^{y,(0)} = \bx_i$ as a control to the probe errors in the hidden layer. We normalize each probe error with $\E[\ltwo{g(\bx,y)}^2]/d_{\rm probe}$. We use the 256 leaved-out samples to estimate these errors. We replicate the above procedure for three times and take their mean to get the final probe errors.

\subsection{Details for pasting}
\label{app:pasting-details}
From the single fixed representation settings above, we pick a trained transformer trained on the representation with $D = d = 20$ to avoid dimension mismatch between $\Phi^\star(\bx)$ and $\bx$. We choose $L=3$ and noise level $\sigma = 0.1$. 

We change the data generating procedure of $y$ from Equation \eqref{eqn:model} to
\begin{equation}\label{eqn:pasting-model}
    y_i = \<\bw,\bx_i\> + \sigma z_i,~~i\in [N],
\end{equation}
which corresponds to a linear-ICL task.
According to the results of probing Fig \ref{fig:fixed-rep-probe-z-x}, we conjecture that transformer use the first $4$ layers to recover the representation, and implement in-context learning through the $5$-th to the last layers. Therefore, we extract the $5-12$ layers as the transformer upper layers. Then paste them with three kinds of embeddings:
\begin{enumerate}[leftmargin=2em]
\item \emph{Linear} embedding $\bW\in\R^{ D_{\rm hid}\times (D+1)}$ with re-formatted input $\bH_{\rm Linear}$:
\[\bH_{\rm Linear} = \left[
\begin{bmatrix}\bx_1 \\ 0\end{bmatrix},
\begin{bmatrix}\bzero_{D} \\ y_1\end{bmatrix}
,\ldots,
\begin{bmatrix}\bx_N \\ 0\end{bmatrix},
\begin{bmatrix}\bzero_{D} \\ y_N\end{bmatrix}
\right] \in \R^{{D+1}\times 2N} 
\]
\item \emph{Linear copy} embedding $\bW\in\R^{ D_{\rm hid}\times (D+1)}$ with re-formatted input $\bH_{\rm copy}$ that copies $\bx_i$ to $y_i$ tokens in advance:
\[\bH_{\rm copy} = \left[
\begin{bmatrix}\bx_1 \\ 0\end{bmatrix},
\begin{bmatrix}\bx_1 \\ y_1\end{bmatrix}
,\ldots,
\begin{bmatrix}\bx_N \\ 0\end{bmatrix},
\begin{bmatrix}\bx_N \\ y_N\end{bmatrix}
\right] \in \R^{{D+1}\times 2N} 
\]
\item \emph{Transformer} embedding $\TF$ using the same input format $\bH_{\rm ICL-rep}$ with normal settings, as shown in \eqref{appeqn:exp-input-icl}. We extract the $4$-th layer of the \textsc{GPT-2} model, its a complete transformer block with trainable layer norm. We use a linear read-in matrix to map $\bH_{\rm ICL-rep}$ to the $D_{\rm hid}$-dimension hidden state, apply one block of transformer to it to get the \TF~embedding $\bar{\bH} = \bar{\TF}(\bH)$.
\end{enumerate}
We apply the upper layers to the three embeddings, then use the original read-out matrix to get the prediction of $\hat{y}_i$. For comparison, we also train a one-layer transformer using the input sequence $\bH_{\rm ICL-rep}$.

We use the same training objective as in \eqref{eqn:train-objective}. In the retraining process, we switch to task \eqref{eqn:pasting-model}, fix the parameters of upper layers of the transformer, and only retrain the embedding model. The training methods are exact the same with the original transformer. We also find that using a random initialized transformer block or extracting the $4$-th layer of the transformer don't make difference to the results.

\subsection{Difficulty of linear ICL with a single-layer transformer with specific input format}
\label{app:inability}

Recall the input format~\cref{eqn:input-format}:
\begin{align*}
\bH = \begin{bmatrix}
\bx_1 & \bzero & \dots & \bx_N & \bzero \\
0 & y_1 & \dots & 0 & y_{N} \\
\bp^x_1 & \bp^y_1 & \dots & \bp^x_{N} & \bp^y_{N}
\end{bmatrix}
\in \R^{\Dhid\times 2N}.
\end{align*}
Here we heuristically argue that a single attention layer alone (the only part in a single-layer transformer that handles interaction across tokens) is unlikely to achieve good linear ICL performance on input format~\cref{eqn:input-format}.

Consider a single attention head $(\bQ,\bK,\bV)$. As we wish the transformer to do ICL prediction at every token, the linear estimator $\bw_i$ used to predict $\hy_i$ is likely best stored in the $\bx_i$ token (the only token that can attend to all past data $\cD_{i-1}$ and the current input $\bx_i$). In this case, the attention layer needs to use the following (key, value) vectors to compute a good estimator $\bw_i$ from the data $\cD_{i-1}$:
\begin{align*}
    \sets{\bV\bh^x_j, \bV\bh^x_j}_{j\in[i]},~~\sets{\bV\bh^y_j, \bV\bh^y_j}_{j\in[i-1]}.
\end{align*}
However (apart from position information), $\bh^x_j$ only contains $\bx_j$, and $\bh^y_j$ only contains $y_j$. Therefore, using the normalized ReLU activation as in~\cref{app:proof-fixed-rep} \&~\ref{app:proof-dynamical-system}, it is unlikely that an attention layer can implement even simple ICL algorithms such as one step of gradient descent~\citep{von2022transformers,akyurek2022learning}:
\begin{align*}
    \bw_i = \bw_i^0 - \eta \frac{1}{i-1}\sum_{j\le i-1}\paren{\<\bw_i^0,\bx_j\> - y_j}\bx_j,
\end{align*}
which (importantly) involves term $-y_j\bx_j$ that is unlikely to be implementable by the above attention, where each attention head at each key token can observe either $\bx_j$ or $y_j$ but not both.

\section{Experiments on mixture of multiple representations}
\label{app:mtl}
We train transformers on a mixture of multiple ICL tasks, where each task admits a different representation function. This setting can be seen as a representation selection problem similar as the ``algorithm selection'' setting of~\cite{bai2023transformers}.
In specific, let $K \ge 2$ denote the number of tasks. Given $j$, let
\begin{align*}
\label{eqn:mtl-model}
y_i &~= \<\bw, \Phi_j^\star(\bx_i)\> + \sigma z_i,~~~z_i\sim \normal(0, 1),~~~i\in[N],~~~\text{where }\\
\Phi^\star_j(\bx) &~= \sigma^\star\paren{\bB_L^{\star,(j)}\sigma^{\star,(j)}\paren{\bB_{L-1}^\star\cdots\sigma^{\star,(j)}\paren{\bB_1^{\star,(j)}\bx}\cdots}},~
\bB_1^{\star,(j)}\in\R^{D\times d},~(\bB_\ell^{\star,(j)})_{\ell=2}^L\subset\R^{D\times D}.
\end{align*}
The generating distributions for $\bw$, $\set{\bx_i}_{i\in[N]}$, and $\sets{\bB_{L}^{\star,(j)}}$ are same with previous setting. We generate different $\Phi_j^\star$ for $j\in[K]$ independently. We choose $K \in \{3,6\}$, $\sigma \in \{0, 0.1,0.5\}$, $L=3$, and noise $\sigma \in\{0,0.1,0.5\}$.

At each training step, we generate $K$ independent minibatches, with the $j-$th minimatch takes the representation $\Phi^\star_j$ to generate $\set{y_i}_{i\in[N]}$. Due to multiple minibatches, we shorten the number of total training steps to $150K$. The other training details are the same with fixed single representation setting.

\begin{figure}[h]
  \centering
  \begin{minipage}{0.4\textwidth}
      \centering
      \subcaption{
      \small Risk for $K=3$}
      \label{fig:mtl-risk-K-3}
      \vspace{-.1em}
      \includegraphics[width=\linewidth]{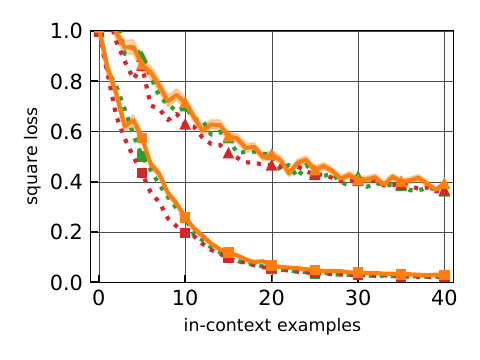}
  \end{minipage}
  \hspace{-1em}
  \begin{minipage}{0.4\textwidth}
      \centering
      \subcaption{\small Risk for $K=6$}
      \label{fig:mtl-risk-K-6}
      \vspace{-.1em}
      \includegraphics[width=\linewidth]{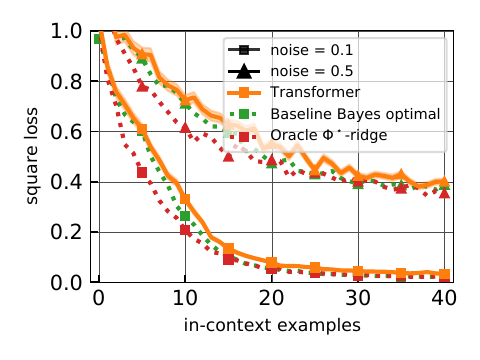}
  \end{minipage}
  \hspace{-1em}
  \vspace{-1em}
  \caption{\small ICL risks for multiple representations setting. Dotted lines plot two baseline risks. {\bf (a)} The transformer with lower risks is trained with $(K,L,D,\sigma) = (3,3,20,0.1)$. The upper one is trained with $(K,L,D,\sigma) = (3,3,20,0.5)$. {\bf (b)} The two transformers are trained with $K=6$ and same settings otherwise.}
  \label{figure:mtl}
  \vspace{-1em}
\end{figure}

\paragraph{ICL performance} 
We choose one representation $\Phi^\star_1$ from the representations that transformers are trained on. \cref{fig:mtl-risk-K-3} \& \cref{fig:mtl-risk-K-6} report the test risk. We vary $K\in\{3,6\}$ and noise level $\sigma\in\{0.1,0.5\}$. We consider two baseline models. 
\begin{enumerate}[leftmargin=2em, topsep=0pt, itemsep=0pt]
\item \emph{The Bayes optimal algorithm}: Note that the training distribution follows the Bayesian hierarchical model:
\begin{equation}\nonumber
j \sim \text{\rm Unif}([K]),~~\bx_i \sim \normal(0,\bI_d),~~\bw \sim \normal(0,\tau^2\bI_d),~~\text{and}~~
y_i~\mid~\bx_i,j,\bw \sim \normal(\<\bw,\bx_i\>,\sigma^2).
\end{equation}
This gives the Bayes optimal predictor
\begin{equation}\label{eqn:mtl-bayes-opt}
\hat{y}_i = \sum_{j=1}^K \eta_i^{(j)} \hat{y}_i^{(j)},~~~~\text{with}~~~~
(\ldots,\eta_i^{(j)},\ldots) = \textsc{Softmax}\Big\{
\Big[\ldots,\sum_{k=1}^i (y_k-\hat{y}_k^{(j)})^2/\sigma^2,\ldots\Big]\Big\}
\end{equation}
with $\hat{y}_i^{(j)}$ being ridge predictor with optimal $\lambda$ based on  $\set{\paren{\Phi^\star_j(\bx_r),y_r}}_{r \in [i-1]}$.
\item \emph{The oracle ridge algorithm:} We use the ridge predictor $\hat{y}_i^{(1)}$ based on $\set{\paren{\Phi^\star_1(\bx_r),y_r}}_{r \in [i-1]}$, which is the representation for test distribution. Note that this is an (improper) algorithm that relies on knowledge of the ground truth task.
\end{enumerate}
Comparable to those trained on single fixed representation, transformers consistently match the Bayes-optimal ridge predictor. As expected, the oracle ridge algorithm is better than transformers and the Bayes optimal algorithm and transformers. Increasing number of tasks $K$ can slightly increase this gap. Increasing the noise level has the same effect on transformers and baseline algorithms.

\begin{figure}[h]
  \centering
  \begin{minipage}{0.34\textwidth}
      \centering
      \subcaption{
      \small Probe $\Phi^\star_1(\bx_i)$}
      \label{fig:mtl-probe-z-small-noise}
      \vspace{-.1em}
      \includegraphics[width=\linewidth]{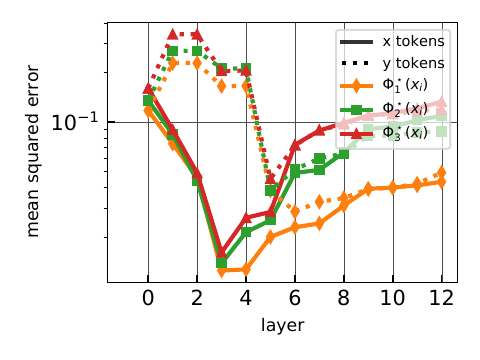}
  \end{minipage}
  \begin{minipage}{0.34\textwidth}
      \centering
      \subcaption{\small Concatenated $\hat{y}^{(j)}_i$ and $\eta_i^{(j)}$}
      \label{fig:mtl-probe-bayes-small-noise}
      \vspace{-.1em}
      \includegraphics[width=\linewidth]{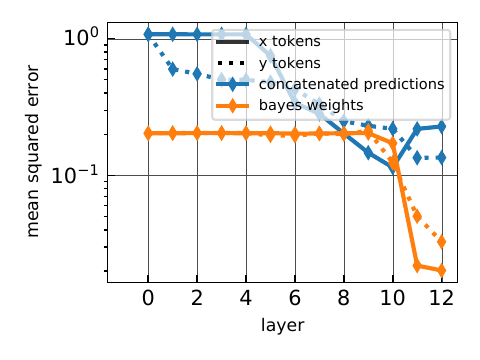}
  \end{minipage}
  \begin{minipage}{0.26\textwidth}
      \centering
      \subcaption{\small Last layer probe $\hat{y}_i^{(j)}$}
      \label{fig:mtl-probe-yhat-small-noise}
      \vspace{-.1em}
      \includegraphics[width=\linewidth]{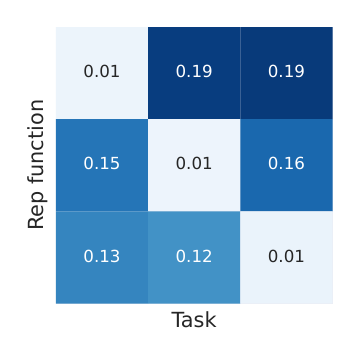}
  \end{minipage}
  \vspace{-1em}
  \caption{\small Probing errors for transformer trained with $(K,L,D,\sigma) =(3,3,20,0.1)$. Dotted lines plot probing errors on $y$ tokens. }
  \label{figure:mtl-probe-small-noise}
  \vspace{-1em}
\end{figure}

\begin{figure}[h]
  \centering
  \begin{minipage}{0.34\textwidth}
      \centering
      \subcaption{
      \small Probe $\Phi^\star_1(\bx_i)$}
      \label{fig:mtl-probe-z-big-noise}
      \vspace{-.1em}
      \includegraphics[width=\linewidth]{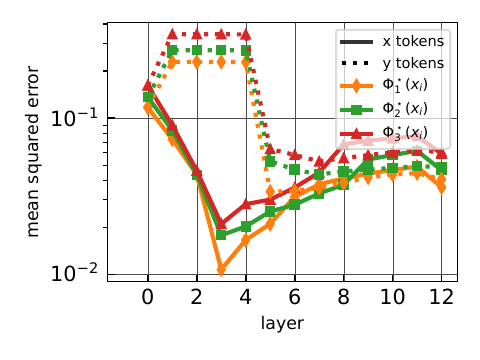}
  \end{minipage}
  \begin{minipage}{0.34\textwidth}
      \centering
      \subcaption{\small Concatenated $\hat{y}^{(j)}_i$ and $\eta_i^{(j)}$}
      \label{fig:mtl-probe-bayes-big-noise}
      \vspace{-.1em}
      \includegraphics[width=\linewidth]{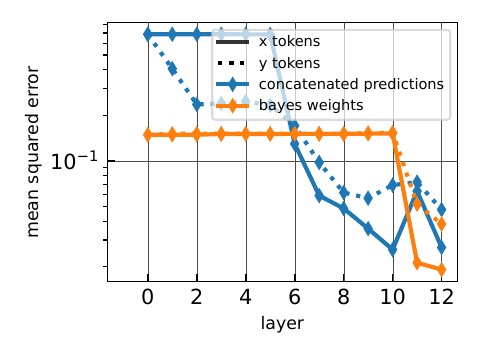}
  \end{minipage}
  \begin{minipage}{0.26\textwidth}
      \centering
      \subcaption{\small Last layer probe $\hat{y}_i^{(j)}$}
      \label{fig:mtl-probe-yhat-big-noise}
      \vspace{-.1em}
      \includegraphics[width=\linewidth]{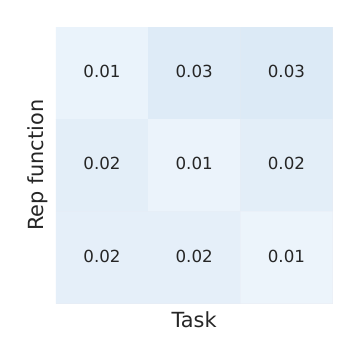}
  \end{minipage}
  \vspace{-1em}
  \caption{\small Probing errors for transformer trained with $\sigma=0.5$}
  \label{figure:mtl-probe-big-noise}
  \vspace{-1em}
\end{figure}

\paragraph{Probe setup:}
Similar to single fixed representation setting, we conduct linear probing experiments. We are wondering transformer implements the ICL-learning on representations with algorithm selections mechanism. 
We identify three sets of probing targets: $\Phi^\star(\bx_i)$, $\hat{y}^{(j)}$ and $\eta^{(j)}_i$. All of them are intermediate values to compute the Bayes optimal estimator \eqref{eqn:mtl-bayes-opt}. 
We generate different data for different probing targets:
\begin{enumerate}[leftmargin=2em, topsep=0pt, itemsep=0pt]
    \item To probe $\Phi^\star(\bx_i)$ and $\hat{y}^{(j)}$ for each $j$, we choose one representation from the representations that transformers are trained on, then train and test our linear probing model. This is consistent with the training and testing methods for probing transformers trained on a single representation.
    \item To probe choose concatenated probing targets $\bY_i = [\hat{y}_i^{(1)},\ldots,\hat{y}_i^{(K)}]$ and $\bB_i = [\eta_i^{(1)},\ldots,\eta_i^{(K)}]$, we generate $2560$ in-context sequences for each representation, and obtain $2560\times K$ samples together. We use ordinary linear square on $2560\times K - 256$ samples to get the linear probing models. Then test them on the remaining $256$ samples to get the probing errors. We also repeat this process for three times and take means to get the final probing errors.
\end{enumerate}

\paragraph{Probe representations:}
Take the transformer trained on $K=3$ mixture representations with noise level $\sigma\in\{0.1,0.5\}$. \cref{figure:mtl-probe-small-noise} show the probing errors for $\sigma=0.1$:
\cref{fig:mtl-probe-z-small-noise} reports the errors of probing $\Phi^\star_j$ $j\in[3]$, with probing models trained on task $\Phi^\star_1$. Echoing the results for transformers trained on single representation, the probing errors for each representations decrease through lower layers and increase through upper layers on $\bx$ tokens. The probing errors on $y$ tokens drop after $\bx$ tokens, which suggests a copy mechanism.
Surprisingly, on $x$-tokens, the probing errors for all representations attain their minimum at the $3$-th layer, with transformers trained on single representation achieving their minimum on $4$-th layer (compare with \cref{fig:fixed-rep-probe-z-x}).

More importantly, for both $\bx$ and $y$ tokens, the probing errors for each representation are similar through lower layers, but the probing errors for the true representation $\Phi^\star_1$ become the lowest through the upper layers. The gap between the probing errors increases. At the last layer, the probing error for the other representations go up to match the initial input. 

\paragraph{Probe intermediate values for computing Bayes optimal predictor:}
\cref{fig:mtl-probe-bayes-small-noise} shows the probing errors for concatenated ridge predictors $\hat{y}_i^{(j)}$ and Bayes weights $\eta_i^{(j)}$, i.e., $\bY_i$ and $\bB_i$. The probing errors for $\bY_i$ start dropping at the $4-$th layer, which suggest that transformer are implementing ICL using each representations. Probing errors for $\bB_i$ have a sudden drop at the $10-$th layer. \cref{fig:mtl-probe-yhat-small-noise} shows the probing errors for probing $\hat{y}_i^{(j)}$. At $(j,k)$-th cell, we show the probing error of $\hat{y}_i^{(j)}$ with probing models trained on $\Phi^\star_k$ at the $\bx$ tokens of the last layer. The diagonal elements dominant. The results combined together suggest the possibility that transformer compute in-context learning with three representations and implement algorithm selections at the $10-$th layer to drop some predictions.

In comparison, \cref{figure:mtl-probe-big-noise} shows results of probing the same targets for transformer under $\sigma=0.5$. \cref{fig:mtl-probe-z-big-noise} differs with \cref{fig:mtl-probe-bayes-small-noise} at upper layers, where probing errors for different representations don't have significant gaps. \cref{fig:mtl-probe-bayes-big-noise} is close to \cref{fig:mtl-probe-bayes-small-noise}, also suggesting the algorithm selection mechanism. \cref{fig:mtl-probe-yhat-big-noise} shows that the last layer encodes the information of all ridge predictors $\set{\hat y_i^{(j)}}$, which is drastically different from the results in \cref{fig:mtl-probe-yhat-small-noise}.

\paragraph{Conjecture on two different algorithm selection mechanisms:}
Based on the empirical findings, we conjecture two possible mechanisms of algorithm selection in transformer: (1) For small noise level data, transformers implement ``concurrent-ICL algorithm selection'', which means they concurrently implement ICL with algorithm selection, then stop implementing the full ICL procedure for algorithms that not are not likely to have good performance. (2) For large noise level data, transformers ``post-ICL algorithm selection'', which means they first implement ICL using each algorithm, then select and output the best one. However, we need further experimental and theoretical to inspect this conjecture.

\section{Ablations}
\label{app:ablations}

\subsection{Supervised learning with representation}

\paragraph{Probing results along training trajectory}
\cref{fig:fix-train-2000}, \cref{fig:fix-train-5000}, and \cref{fig:fix-train-10000} show the probing error for $\Phi^\star(\bx_i)$ at $\bx$ and $y$ tokens and $\hat{y}^{\Phi^\star \text{\rm ridge}}$ at $\bx$ tokens. As expected, all probe errors reduce through training steps, showing that the progress of learning $\Phi^\star$ is consistent with the progress of the training loss. At the 2000 training steps, transformer cannot recover the representation. At the 5000 training steps, the transformer starts memorizing the representation, starting showing differences between lower and upper layers. From 5000 training steps to 10000, the trend of probe errors varying with layers remains the same.

\begin{figure}[ht]
  \centering
  \begin{minipage}{0.33\textwidth}
      \centering
      \subcaption{
      \small 2000 training steps}
      \label{fig:fix-train-2000}
      \vspace{-.1em}
      \includegraphics[width=\linewidth]{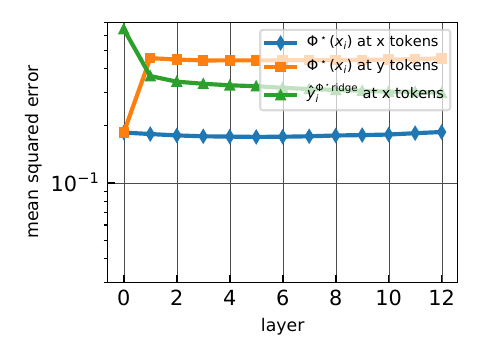}
  \end{minipage}
  \hspace{-1em}
  \begin{minipage}{0.33\textwidth}
      \centering
      \subcaption{
      \small 5000 training steps}
      \label{fig:fix-train-5000}
      \vspace{-.1em}
      \includegraphics[width=\linewidth]{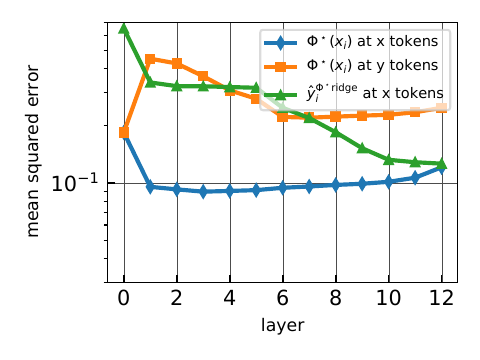}
  \end{minipage}
  \hspace{-1em}
  \begin{minipage}{0.33\textwidth}
      \centering
      \subcaption{
      \small 10000 training steps}
      \label{fig:fix-train-10000}
      \vspace{-.1em}
      \includegraphics[width=\linewidth]{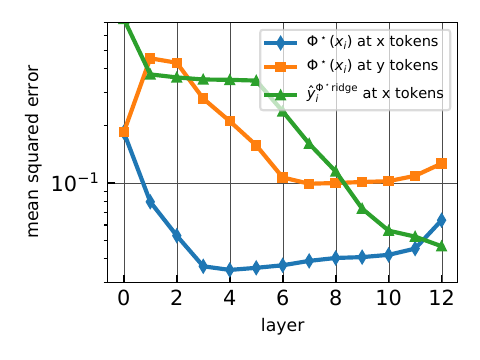}
  \end{minipage}
  \vspace{-1em}
  \caption{\small Probing errors for transformer trained after 2000, 5000, and 10000 steps. All three plots are for the training run on $(L,D,\sigma) = (2,10,0.1)$.}
  \label{figure:train}
  \vspace{-1em}
\end{figure}

\paragraph{Additional results for probing and pasting}
\cref{fig:per-token} plots the same probing errors as in~\cref{fig:fixed-rep-probe-z-x} with $(L,D,\sigma)=(3,20,0.1)$ (the green line there), except that we separate the errors of the first 4 tokens with the rest (token 5-41), but the probing training remains the same (pooled across all tokens). We observe that lower layers compute the representation in pretty much the same ways, though later layers forget the representations more for the beginning tokens (1-4) than the rest tokens.

\cref{fig:paste-ablations} plots the same pasting experiment as in~\cref{fig:paste-risk}, except that for noise level $\sigma=0.5$ as opposed to $\sigma=0.1$ therein. The message is mostly the same as in~\cref{fig:paste-risk}.

\begin{figure}[ht]
  \centering
  \begin{minipage}{0.5\textwidth}
    \centering
    \subcaption{\small Probe errors per token}
    \label{fig:per-token}
    \includegraphics[width=\linewidth]{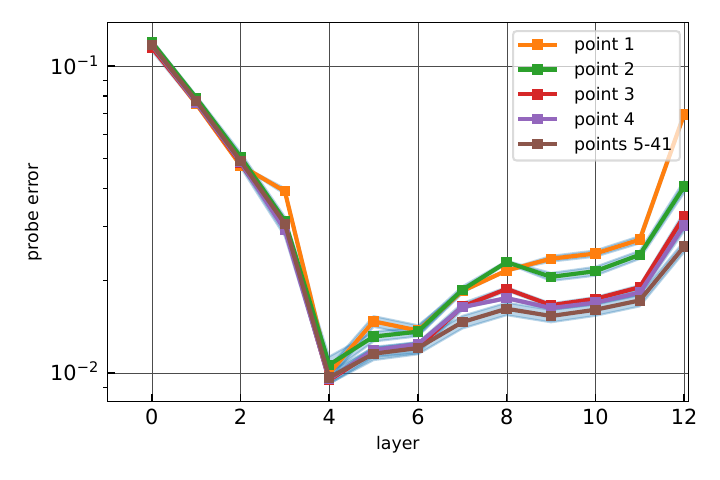}
  \end{minipage}
  \begin{minipage}{0.45\textwidth}
    \centering
    \subcaption{\small Pasting experiment}
    \label{fig:paste-ablations}
    \includegraphics[width=\linewidth]{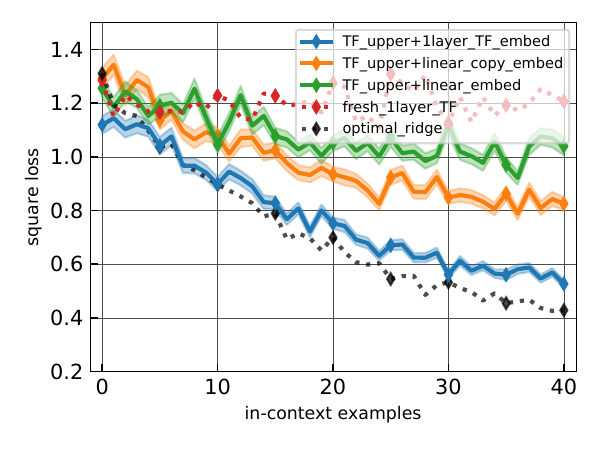}
  \end{minipage}
  \vspace{-1em}
  \caption{\small {\bf (a)} Probing errors of $\Phi^\star(\bx_i)$ in $\bx_i$ tokens evaluated per-token. {\bf (b)} Pasting results for the upper module of a trained transformer in setting $(L, D, \sigma)=(3, 20, 0.5)$.
  }
  \label{figure:combined-ablations}
  \vspace{-1em}
\end{figure}

\subsection{Dynamical systems}
\label{app:ablations-dynamical-system}

\paragraph{Risk} \cref{figure:dynamics-risk} gives ablation studies for the ICL risk in the dynamical systems setting in~\cref{sec:dynamical-system}. In all settings, the trained transformer achieves nearly Bayes-optimal risk. Note that the noise appears to have a larger effect than the hidden dimension, or the number of input tokens.

\begin{figure}[ht]
  \centering
  \begin{minipage}{0.34\textwidth}
      \centering
      \subcaption{\small Varying noise level}
      \label{fig:dynamics-risk-noise}
      \vspace{-.2em}
      \includegraphics[width=\linewidth]{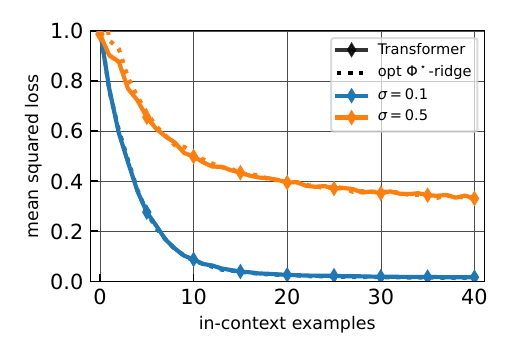}
  \end{minipage}
  \hspace{-1em}
  \begin{minipage}{0.32\textwidth}
      \centering
      \subcaption{\small Varying rep hidden dim.}
      \label{fig:dynamics-risk-D}
      \vspace{-.2em}
      \includegraphics[width=\linewidth]{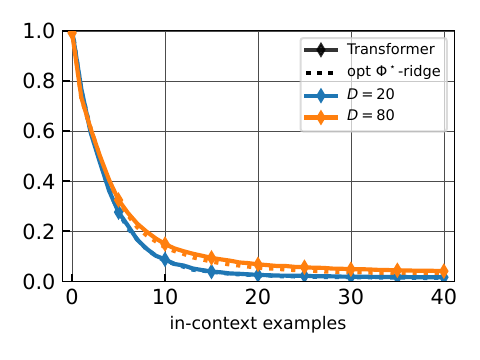}
  \end{minipage}
  \hspace{-1em}
  \begin{minipage}{0.32\textwidth}
      \centering
      \subcaption{\small Varying number of input tokens}
      \label{fig:dynamics-risk-k}
      \vspace{-.2em}
      \includegraphics[width=\linewidth]{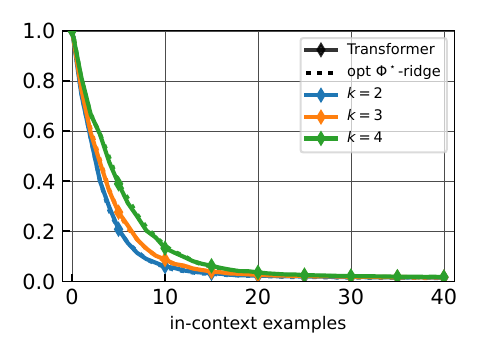}
  \end{minipage}
  \vspace{-1em}
  \caption{\small Ablation studies for the risk for Risk for fixed rep setting. Each plot modifies a single problem parameter from the base setting $(k,L,D,\sigma)=(3,2,20,0.1)$.
  }
  \label{figure:dynamics-risk}
  \vspace{-1em}
\end{figure}

\paragraph{Probing} \cref{fig:dynamics-copy1-ablations} \&~\ref{fig:dynamics-copy2-ablations} gives ablation studies for the probing errors in the dynamical systems setting in~\cref{sec:dynamical-system}, with $D=20$ instead of $D=80$ as in~\cref{fig:dynamics-copy1} \&~\ref{fig:dynamics-copy2}. The message is largely similar except that in~\cref{fig:dynamics-copy1-ablations}, all past inputs and intermediate steps in $\Phi^\star(\barbx_i)$ are simultaneously best implemented after layer 4.

\begin{figure}[ht]
  \centering
  \begin{minipage}{0.45\textwidth}
      \centering
      \subcaption{\small Probe past inputs at $\bx_i$ tokens}
      \label{fig:dynamics-copy1-ablations}
      \vspace{-.1em}
      \includegraphics[width=\linewidth]{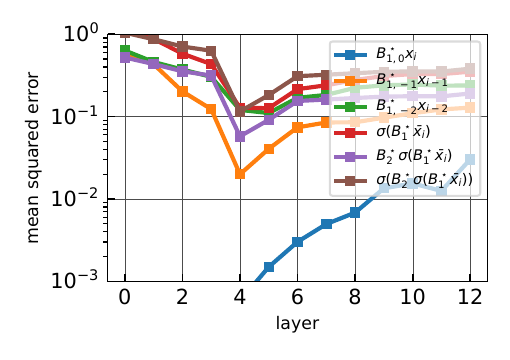}
  \end{minipage}
  \begin{minipage}{0.45\textwidth}
      \centering
      \subcaption{\small Probe $\Phi^\star(\bx_{i-j})$ at $\bx_i$ tokens}
      \label{fig:dynamics-copy2-ablations}
      \vspace{.4em}
      \includegraphics[width=\linewidth]{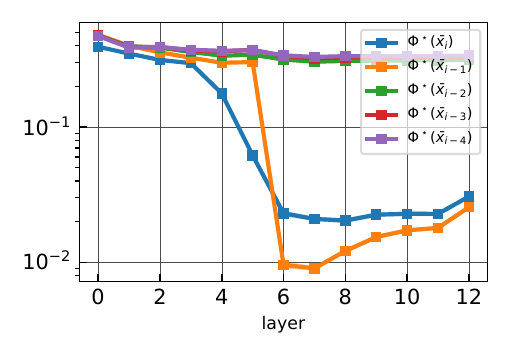}
  \end{minipage}
  \vspace{-1em}
  \caption{\small Ablation study for the probing errors in the dynamics setting. Here $(k, L, D, \sigma)=(3, 2, 20, 0.5)$, different from~\cref{figure:dynamics} where $D=80$.
  }
  \label{figure:dynamics-ablations}
  \vspace{-1em}
\end{figure}

\end{document}